\documentclass{article}

\usepackage[main, final]{neurips_2026}

\usepackage[utf8]{inputenc} 
\usepackage[T1]{fontenc}    
\usepackage{hyperref}       
\usepackage{url}            
\usepackage{booktabs}       
\usepackage{amsfonts}       
\usepackage{nicefrac}       
\usepackage{microtype}      
\usepackage{xcolor}         
\usepackage{amsmath}
\usepackage{amsthm}
\usepackage{graphicx}
\usepackage{mathtools}
\usepackage{subcaption}
\usepackage{algorithm}
\usepackage{algorithmic}
\usepackage{booktabs}
\usepackage{multirow}
\usepackage{makecell}
\usepackage{array}
\usepackage[table]{xcolor}
\usepackage{wrapfig}

\newcommand{\best}[1]{\textbf{#1}}

\newcommand{\ms}[2]{#1 {\scriptsize(#2)}}

\newcommand{\redcell}{\rowcolor{purple!10}}
\newcommand{\yellowcell}{\rowcolor{yellow!15}}
\newcommand{\greencell}{\rowcolor{green!20!teal!15}}
\newcommand{\bluecell}{\rowcolor{blue!15!cyan!8}}

\newtheorem{theorem}{Theorem}[section]
\newtheorem{lemma}[theorem]{Lemma}
\newtheorem{assumption}{Assumption}
\newtheorem{proposition}{Proposition}

\title{OTROPE: Optimal Transport-based Robust Off-policy Evaluation for Large Language Models}

\author{
  \begin{tabular}{c@{\hspace{1.2cm}}c@{\hspace{1.2cm}}c}
    Liner Xiang &
    Wenbo Zhang &
    Hengrui Cai\thanks{Corresponding author.} \\
    \texttt{linerx1@uci.edu} &
    \texttt{wenbz13@uci.edu} &
    \texttt{hengrc1@uci.edu} \\
    \vspace{-0.25cm}\\
    \multicolumn{3}{c}{\normalfont
      Department of Statistics, University of California, Irvine}
  \end{tabular}
}

\begin{document}

\maketitle

\begin{abstract}
Reliable evaluation of large language models (LLMs) is essential for their development and deployment, yet is often costly, risky, and difficult to perform safely online. We study \textit{off-policy evaluation} for LLMs, where limited human-labeled data from a behavior model are used to evaluate a newer target LLM. This setting is challenging because labels are scarce, behavior--target distribution shift is common, and response likelihoods are often unavailable for black-box LLMs. We propose the \textbf{O}ptimal \textbf{T}ransport-based \textbf{R}obust \textbf{O}ff-\textbf{P}olicy \textbf{E}valuation (\textbf{OTROPE}), a likelihood-free evaluation that performs \textit{distributional correction} in a \textit{semantic space} via optimal transport to align labeled behavior-policy samples with unlabeled target-policy samples. OTROPE combines corrected human-labeled residuals with proxy predictors, yielding a \textit{doubly robust}-style evaluation without behavior-policy modeling or density-ratio estimation. We theoretically characterize why baseline evaluators fail under LLM distribution shift, and establish consistency and convergence rates for OTROPE when either the reweighted behavior distribution or the proxy predictor converges. Experiments on synthetic and real LLM evaluation tasks show that OTROPE consistently outperforms baselines while enabling ensembles of weaker LLM evaluators to approach and sometimes surpass stronger evaluators. Code is available at \url{https://github.com/LinerXiang/OTROPE}.
\end{abstract}

\section{Introduction}

As large language models (LLMs) continue to advance rapidly, reliable evaluation of their capabilities and behaviors has become increasingly important for their development and deployment. In many practical settings, evaluating an LLM can be viewed as evaluating the policy induced by the language model, which maps user prompts and contexts to generated responses~\citep{bhargava2024off,huang2025pluralistic}. Although online evaluation provides direct evidence of model performance, it is often costly, risky, and difficult to perform safely. Offline evaluation using logged data is thus essential for efficiently and safely assessing the performance of LLMs. This naturally leads to \textit{off-policy evaluation} (OPE)~\citep{sutton1998reinforcement}, a central problem in reinforcement learning and decision-making that estimates the performance of a target policy from data collected under a different behavior policy. For LLMs, OPE provides a principled way to evaluate language models using historical prompts, responses, and feedback, thereby avoiding the cost and risk of direct online experimentation.

Offline evaluation of LLM policies faces three critical challenges. 
\textbf{First, ground-truth labels are often limited.} Evaluating LLMs requires reliable human feedback, but human evaluation is expensive and time-consuming, yielding only limited labeled data. As a scalable alternative, LLMs are increasingly used as proxy evaluators~\citep{zheng2023judging,yang2023gpteval,fu2024gptscore}; yet such proxies are not always trustworthy~\citep{chen2024humans,ye2024justice,schroeder2024can}, making calibration or correction with \textit{limited human labels} necessary. 
\textbf{Second, behavior and target policies often differ.} As LLMs evolve rapidly, human annotations typically collected for earlier models may lag behind the newer models being evaluated, creating behavior–target \textit{distribution shift} (see Figure~\ref{fig:toy_dist}). 
\textbf{Third, such shift is difficult to quantify in LLM settings.}
Response likelihoods are often unavailable as users typically interact with LLMs via APIs or black-box systems. Estimating behavior policies from additional data can introduce policy misspecification and substantial computational cost. Even when such policies are estimated, sequence-level likelihoods for long LLM outputs can become vanishingly small (see Figure~\ref{fig:toy_ratio}).

\begin{figure}[t]
\centering
\begin{subfigure}[t]{0.3\linewidth}
\centering
\includegraphics[width=\linewidth]{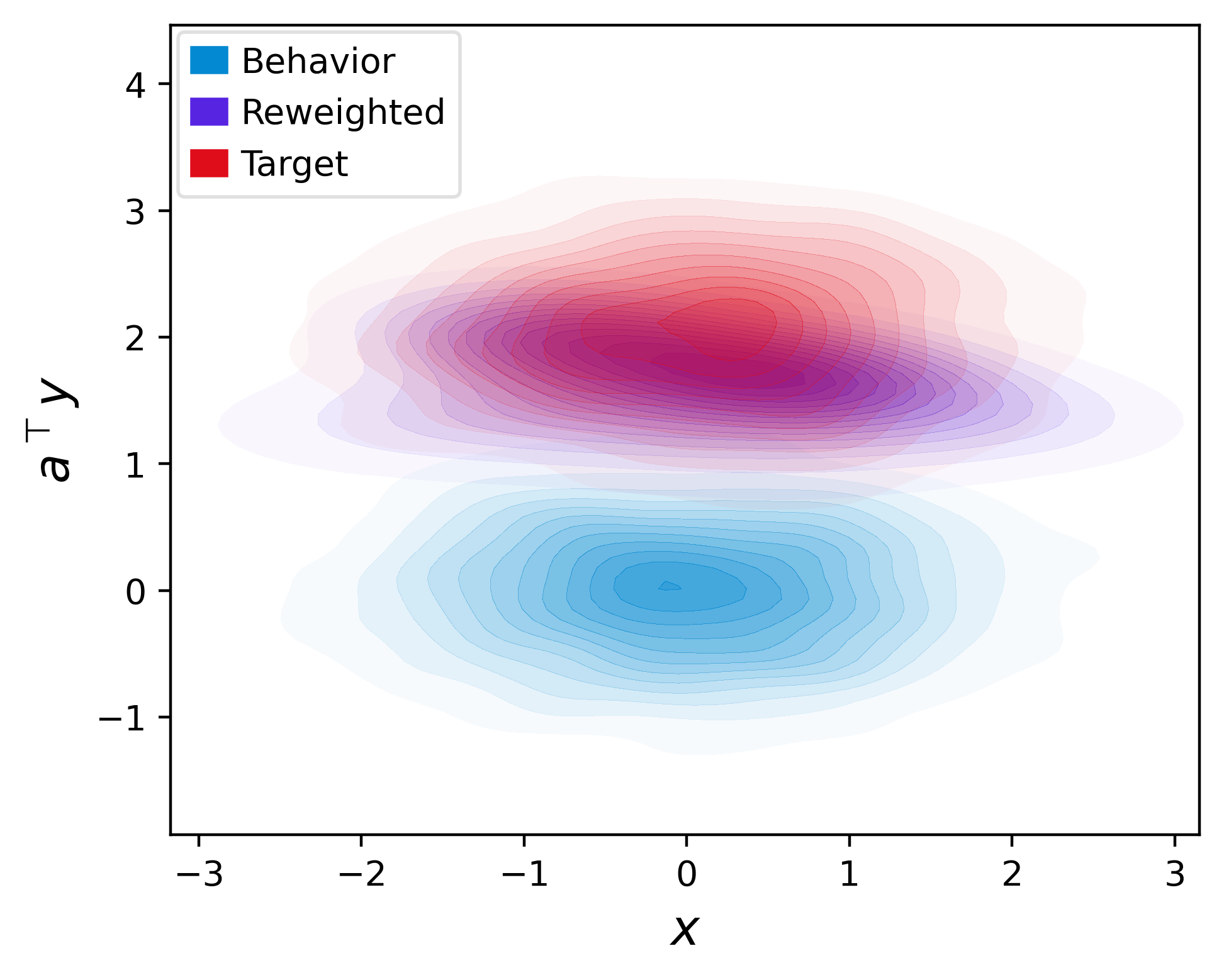}
\caption{}
\label{fig:toy_dist}
\end{subfigure}~
\begin{subfigure}[t]{0.3\linewidth}
\centering
\includegraphics[width=\linewidth]{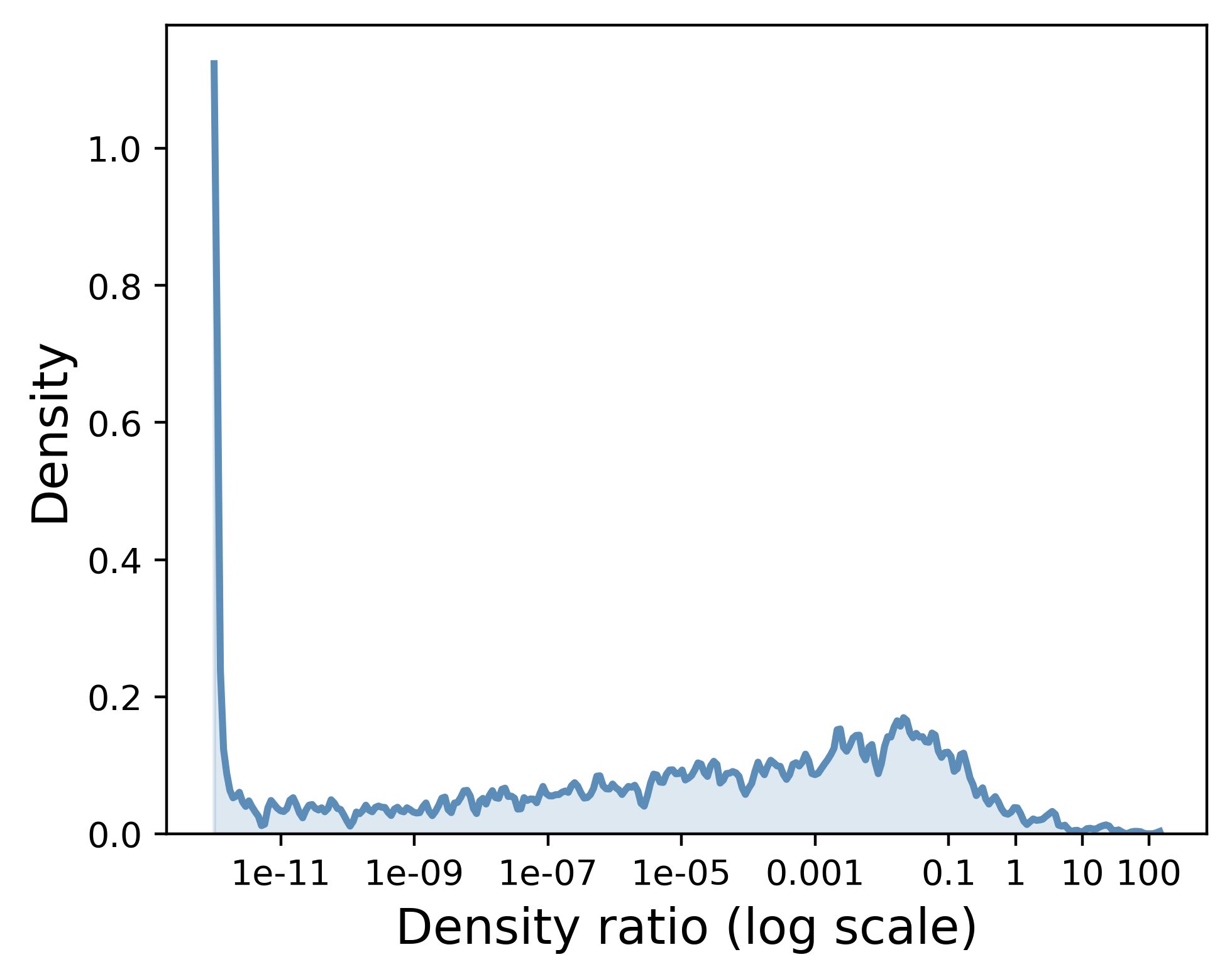}
\caption{}
\label{fig:toy_ratio}
\end{subfigure}~
\begin{subfigure}[t]{0.3\linewidth}
\centering
\includegraphics[width=\linewidth]{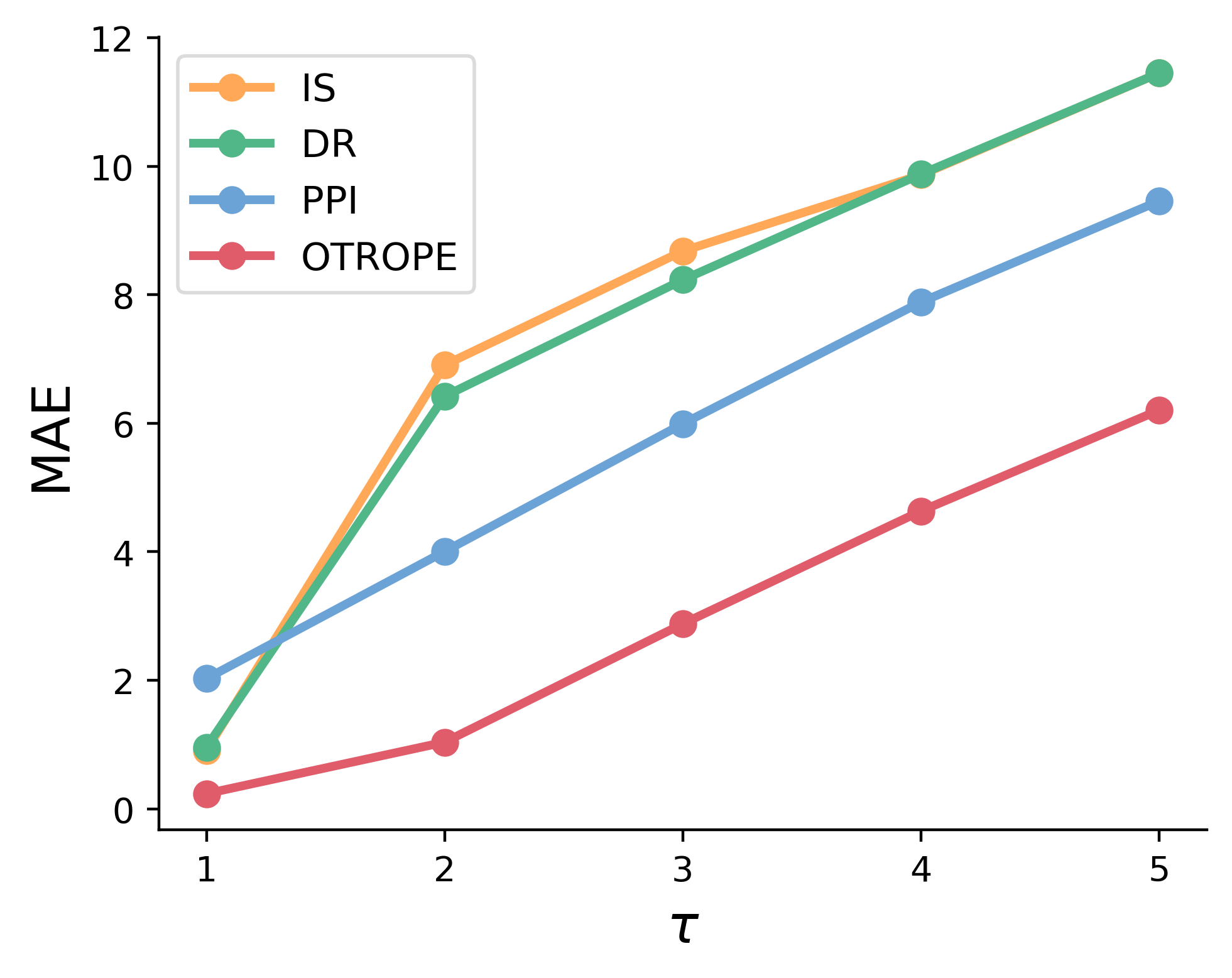}
\caption{}
\label{fig:toy_mae}
\end{subfigure}
\vspace{-0.15cm}
\caption{
(a) Density plots for behavior–target distribution shift, where the reweighted distribution by OTROPE closely matches the target one.
(b) Density curve of behavior–target density ratios on labeled samples, showing a distribution concentrated near zero.
(c) Mean absolute error (MAE) of different estimators with distributional shift level $\tau$. OTROPE consistently achieves the lowest MAE. }
\label{fig:toy}
\vspace{-0.25cm}
\end{figure}

Existing works address only parts of these challenges. Prediction-powered inference (PPI)~\citep{angelopoulos2023prediction,angelopoulos2023ppi++} provides a semi-supervised framework that augments limited ground-truth labels with proxy predictions for unlabeled data. Yet, PPI-type methods typically assume labeled and unlabeled responses follow the same distribution and thus do not handle \textit{distribution shift}. 
Another line of work, importance-weighting-based methods, including importance sampling (IS)~\citep{precup2000eligibility,liu2018breaking} and doubly robust (DR) estimators~\citep{dudik2011doubly,jiang2016doubly,thomas2016data}, can in principle correct for such shift using likelihood or importance ratios. In LLM settings, however, long responses make sequence-level ratios difficult to estimate and highly unstable. Under weak behavior--target overlap and limited samples, the correction terms can degenerate, leading to \textit{substantial bias and high variance} (see Figure~\ref{fig:toy_mae}). 

To address these challenges simultaneously, we propose the \textbf{O}ptimal \textbf{T}ransport-based \textbf{R}obust \textbf{O}ff-\textbf{P}olicy \textbf{E}valuation (\textbf{OTROPE}), a likelihood-free framework for robust off-policy evaluation of LLMs under behavior--target distribution shift. The key insight is to perform distributional correction in a \emph{semantic space} induced by embedding functions, rather than in the original token-level sequence space. In this space, the behavior and target distributions can exhibit substantially greater overlap due to semantic similarity, even when their overlap over response sequences is weak. We then leverage this structure by using optimal transport (OT)~\citep{villani2009optimal} to align labeled behavior-policy samples with unlabeled target-policy samples. The resulting OT weights reweight labeled residuals according to their semantic relevance to the target, and are combined with proxy predictors to produce a robust semi-supervised evaluator for LLMs. 

Our \textbf{contributions} are four-fold: 

$\bullet$ Conceptually, we formulate model-shifted evaluation on the performance of LLM of interest as a semi-supervised OPE problem that jointly addresses limited human annotation and behavior--target distribution shift in the complex LLM settings.\\
$\bullet$ Methodologically, we propose OTROPE that corrects behavior--target distribution shift in a semantic embedding space using OT-based reweighting. Unlike likelihood-ratio-based OPE methods, OTROPE does not require token-level likelihoods, behavior-policy modeling, or sequence-level density-ratio estimation, while enjoys a \textit{doubly robust}-style correction for proxy-based evaluation.\\
$\bullet$ Theoretically, we derive the bias of the PPI estimator under distribution shift, and show the finite-sample degeneracy, bias, and variance inflation of the DR estimator in OPE for LLMs. 
We then establish the \textit{consistency} and \textit{convergence rate} of OTROPE when either the reweighted behavior distribution or the proxy predictor converges. \\
$\bullet$ Empirically, we evaluate OTROPE on synthetic data and real LLM evaluation tasks. OTROPE consistently outperforms baseline methods. In particular, it enables \textit{ensembles of weaker LLM evaluators} to approach and sometimes \textit{outperform stronger LLM judges}, demonstrating the practical value of optimal transport-based correction for scalable LLM evaluation.

\textbf{Related Work.}
We briefly discuss recent work on OPE for LLMs and highlight how our setting differs and our contributions. \citet{bhargava2024off} formulated OPE from logged human feedback, with an emphasis on evaluating preference rankings rather than the quality of response-level generation. More recent studies \citep{wu2024ocean,huang2025pluralistic,xu2026doubly} use OPE-inspired estimators mainly for constructing training objectives for alignment rather than for model evaluation. In contrast, we study \textit{response-level LLM evaluation under model shift with high-dimensional text responses.} 
 Inspired by OT~\citep{villani2009optimal,courty2016optimal,redko2017theoretical,reygner2022reweighting}, we construct a reweighting scheme 
 by minimizing the Wasserstein distance between the behavior distribution and the target distribution. This allows us to correct human-labeled residuals, yielding \textit{a stable correction without requiring explicit density or likelihood ratios}. Unlike prior uses of OT for distributionally robust optimization over uncertainty sets~\citep{shen2023wasserstein} or preference alignment in LLMs~\citep{melnyk2024distributional}, our method is the first work on likelihood-free OPE using OT in LLM settings. 
 We provide a more comprehensive review on OPE, PPI, and OT in Appendix~\ref{app:related_work}.
 
\section{Preliminaries}
\label{sec:pre}
We first formulate the OPE problem in the context of evaluating LLMs, and then review related baselines, including the PPI and DR estimators, discussing their limitations in LLM settings.

\textbf{Problem Setup.} We denote the prompt space and the response space by $\mathcal{X}$ and $\mathcal{Y}$, respectively, and define the joint space as $\mathcal{Z} = \mathcal{X} \times \mathcal{Y}$. Suppose there exists a golden rater (e.g., a human annotator) that provides ground-truth evaluations for prompt--response pairs $z = (x, y) \in \mathcal{Z}$. Formally, let $g^\ast: \mathcal{Z} \to \mathcal{G}$ denote the ground-truth evaluation function, where $\mathcal{G} \subset \mathbb{R}$ is the evaluation space. 

We consider a labeled dataset $\mathcal{D}_1 = \{(x_i, y_i, g_i^\ast)\}_{i=1}^n$ of size $n$, where $x_i \sim \Phi$ denotes a prompt drawn from the distribution $\Phi$, $y_i \sim \pi_0(\cdot \vert x_i)$ is a response generated by a \textit{behavior policy}, and $g_i^\ast = g^\ast(x_i, y_i)$ represents the corresponding ground-truth evaluation. Let $P$ denote the joint distribution of $z = (x,y)$ induced by $x \sim \Phi$ and $y \sim \pi_0(\cdot \vert x)$.
Due to the high cost and time requirements of obtaining reliable human annotations, the labeled sample size $n$ is typically \textit{limited}. In contrast, we have access to a larger unlabeled dataset $\mathcal{D}_2 = \{(\widetilde{x}_j, \widetilde{y}_j)\}_{j=1}^{N}$ of size $N$, where $\widetilde{x}_j \sim \Phi$ and $\widetilde{y}_j \sim \pi(\cdot \vert \widetilde{x}_j)$, with $\pi$ denoting the \textit{target language model or policy}. Let $Q$ denote the corresponding joint distribution of $\widetilde{z} = (\widetilde{x},\widetilde{y})$. Ground-truth evaluations are \textit{unavailable} for samples in $\mathcal{D}_2$.
Throughout this work, we consider autoregressive language models, in which the likelihood of a response is factorized over token-level probabilities. 
Since $\mathcal{D}_2$ lacks ground-truth human evaluations, we adopt the \emph{LLM-as-a-judge} paradigm to construct \textit{proxy evaluators} that evaluate each prompt--response pair in $\mathcal{D}_2$, denoted as $\widehat{g}(\widetilde{z}_j)$ (with more details provided in Section~\ref{subsec:our}). 
We further define the residual function $r(\widetilde{z}) = g^\ast(\widetilde{z}) - \widehat{g}(\widetilde{z})$.
Our estimand is \textit{the target policy value}:
\begin{equation}
    \label{eq:value_def}
    V^\ast(\pi) = \mathbb{E}_{\widetilde{z} \sim Q} \, [g^\ast(\widetilde{z})] = \mathbb{E}_{\widetilde{x} \sim \Phi} \, \mathbb{E}_{\widetilde{y} \sim \pi(\cdot \vert \widetilde{x})} \, [g^\ast(\widetilde{x},\widetilde{y})].
\end{equation}
Our goal is to estimate $V^\ast(\pi)$ based on the datasets $\mathcal{D}_1$ and $\mathcal{D}_2$, together with the predictor $\widehat{g}$.

\textbf{Prediction-Powered Inference (PPI) Estimator.} After obtaining the proxy evaluations, a natural approach is to estimate \(V^\ast(\pi)\) directly using these predictions. This leads to the direct method (DM)~\citep{li2010contextual}, defined as $\widehat{V}_{\mathrm{DM}}(\pi)
    = 1/N\sum_{j=1}^{N} \widehat{g}(\widetilde{z}_j)$.
However, such proxy evaluations may be unreliable. 
To mitigate prediction error, PPI~\citep{angelopoulos2023prediction} combines a small amount of labeled data with a larger set of unlabeled data with arbitrary proxy predictors. Specifically, we consider the following PPI-based estimator for the policy value:
\begin{equation}
    \label{eq:ppi_value}
    \widehat{V}_{\mathrm{PPI}}(\pi)
    = \frac{1}{N}\sum_{j=1}^{N} \widehat{g}(\widetilde{z}_j)
    + \frac{1}{n} \sum_{i=1}^n 
    \left( g^\ast(z_i) - \widehat{g}(z_i) \right),
\end{equation}
where $z_i = (x_i, y_i) \in \mathcal{D}_1$ and $\widetilde{z}_j = (\widetilde{x}_j, \widetilde{y}_j) \in \mathcal{D}_2$. The second term serves as a correction term that accounts for the prediction error of $\widehat{g}$ using labeled data.
Nevertheless, PPI assumes that \(\mathcal{D}_1\) and \(\mathcal{D}_2\) are drawn from the same distribution. Under \textit{distribution shift}, i.e., $\pi_0 \neq \pi$, $\widehat{V}_{\mathrm{PPI}}(\pi)$ is unbiased only if the prediction error is invariant across distributions; otherwise, it becomes biased and unreliable.

\textbf{Doubly Robust (DR) Estimator.} To address distribution shift in OPE, the DR method~\citep{jiang2016doubly} is widely adopted that combines outcome modeling with behavior–target density ratio modeling \citep{precup2000eligibility} (i.e., $\pi /\pi_0$).  Following the PPI framework, a DR-based value estimator is
\begin{equation}
    \label{eq:dr_value}
    \widehat{V}_{\text{DR}}(\pi)=\frac{1}{N}\sum_{j=1}^{N}  
   \widehat{g}(\widetilde{x}_j, \widetilde{y}_j) + \frac{1}{n} \sum_{i=1}^n 
   \frac{\pi(y_i \vert x_i)}{\pi_0(y_i \vert x_i)}
   \left( g^\ast(x_i, y_i) - \widehat{g}(x_i, y_i) \right),
\end{equation}
where the inner part is the IS estimator $\widehat{V}_{\text{IS}}(\pi)= (1/n) \sum_{i=1}^n 
\{ {\pi(y_i \vert x_i)}/{\pi_0(y_i \vert x_i)} \} g^\ast(x_i, y_i).$ 
In the context of LLMs, 
sequence probabilities are products of token-level probabilities, so small differences can accumulate into weak overlap between behavior and target policies, leading to degenerate density ratios and unreliable importance weighting. Moreover, in black-box API settings, these likelihoods are often unavailable, further complicating policy evaluation.

\section{Optimal Transport-based Robust Off-Policy Evaluation}
In this section, we first use a toy example to illustrate the limitations of the PPI and DR methods in Section~\ref{subsec:toy}, where PPI can be biased under distribution shift, whereas DR can degenerate under weak overlap in high-dimensional LLM sequence spaces. This motivates our robust likelihood-free approach to response-level shift-aware residual correction, as presented in Section~\ref{subsec:our}.

\subsection{Toy Example}
\label{subsec:toy}

We generate the prompt covariate $x$ from a truncated normal distribution. The behavior and target distributions, $P$ and $Q$, are also parameterized as truncated normal distributions, with parameters given as functions of $x$. We utilize parameter $\tau \ge 0$ to control the strength of the distributional shift between $P$ and $Q$. We use a misspecified predictor $\widehat{g}$, which induces heterogeneous residuals under $P$ and $Q$. We set $n = 1000$ and $N = 2000$, and repeat the experiment over $200$ independent replications. The detailed setup is provided in Section~\ref{subsec:simu}. As shown in Figure~\ref{fig:toy_dist}, the overlap between the behavior and target distributions becomes very weak when the shift parameter $\tau=2$. As a result, the density ratio is close to zero with high probability (Figure~\ref{fig:toy_ratio}), making the DR correction term negligible under finite sample sizes. Figure~\ref{fig:toy_mae} reports the mean absolute error (MAE) of different estimators. Density ratio-based estimators, including IS and DR estimators, perform poorly, highlighting the ineffectiveness of importance weighting in this weak-overlap regime. Moreover, residual heterogeneity across $P$ and $Q$ leads to poor performance of PPI-based methods. Detailed results of MAE under varying distribution shift are reported in Appendix~\ref{app:toy}.

Motivated by these observations, we propose a new weighting scheme that combines the residual-correction structure of PPI and DR with distributional reweighting in the embedding space. 
Figure~\ref{fig:toy_dist} shows that our proposed method OTROPE reweights the behavior distribution to closely match the target distribution; moreover, Figure~\ref{fig:toy_mae} demonstrates that OTROPE consistently achieves the lowest MAE as the distribution shift increases with $\tau$, highlighting its robustness to shift.

\subsection{Proposed Method: OTROPE}
\label{subsec:our}

To address the negligible correction induced by near-zero density ratios, we consider an alternative approach based on OT to reweight the labeled residuals. Intuitively, OT seeks the most cost-efficient way to transform one probability measure into another. To define the transport cost between two prompt-response pairs, we shift from exact token-level distribution to a \emph{semantic geometric distribution} framework by considering a semantic embedding function $e:\mathcal{Z} \rightarrow \mathbb{R}^d$, and denote its image by $e(\mathcal Z)=\{e(z):z\in\mathcal Z\}\subseteq \mathbb R^d$. We denote by \(e_\# P\) and \(e_\# Q\) the pushforward measures of \(P\) and \(Q\) under \(e\), respectively. Let $c(u,v)=\|u-v\|$, be the Euclidean cost on \(\mathbb R^d\). The OT cost between the pushforward measures \(e_\# P\) and \(e_\# Q\) is defined as
\begin{equation*}
    \mathcal{T}_{c}(e_\# P,e_\# Q) \coloneq \inf_{\gamma \in \Gamma(e_\# P,e_\# Q)} \int_{e(\mathcal Z) \times e(\mathcal Z)} c(u,v) \, \mathrm{d}\gamma(u,v),
\end{equation*}
where $\Gamma(e_\# P,e_\# Q)$ denotes the set of all couplings between $e_\# P$ and $e_\# Q$, i.e., the set of probability measures on $e(\mathcal Z) \times e(\mathcal Z)$ with marginals $e_\# P$ and $e_\# Q$.
The corresponding Wasserstein distance~\citep{vaserstein1969markov}, also known as the Earth Mover's Distance~\citep{rubner2000earth}, is defined as $W(\mu, \nu) \coloneq \mathcal{T}_{c}(\mu,\nu)$.
The Wasserstein distance admits a natural interpretation as the minimal transportation cost required to move probability mass from $e_\# P$ to $e_\# Q$ under the cost function $c$. In LLM evaluation settings, the pushforward measures \(e_\# P\) and \(e_\# Q\) are unknown and must be approximated from samples. Given empirical samples in $\mathcal{D}_1$ and $\mathcal{D}_2$, we define
$e_\# P_n=1/n\sum_{i=1}^n \delta_{e(z_i)}$, and $e_\# Q_N=1/N\sum_{j=1}^N \delta_{e(\widetilde z_j)}$, where \(\delta_{e(z)}\) denotes the Dirac measure at \(e(z)\).
Instead of relying on importance weight in the response space to address distribution shift, we propose to reweight the labeled samples in $\mathcal{D}_1$ within a semantic space such that the resulting weighted distribution aligns with the empirical distribution of the target samples in $\mathcal{D}_2$ under the Wasserstein distance. The resulting weighted residuals over $e_\# P_n$ can then serve as an approximation to the mean residual under $e_\# Q_N$. 
We formulate the weight estimation problem as an optimal transport minimization problem over reweighted empirical measures. Let $\Delta_n = \left\{ \boldsymbol{w} \in \mathbb{R}^n \;\middle|\; \sum_{i=1}^n w_i = 1,\; w_i \ge 0 \right\}$ denote the probability simplex. 
We aim to determine optimal weights $\boldsymbol{w}^\ast = (w_1^\ast, \dots, w_n^\ast)$ such that the reweighted labeled distribution closely matches the unlabeled distribution in terms of the Wasserstein distance. Specifically, we solve
\begin{equation}
\label{eq:OT_weights}
    \boldsymbol{w}^\ast \in \underset{\boldsymbol{w} \in \Delta_n}{\arg\min} \;
    W \left( \sum_{i=1}^n w_i \, \delta_{e(z_i)}, \; \frac{1}{N} \sum_{j=1}^{N} \delta_{e(\widetilde{z}_j)} \right)=\underset{\boldsymbol{w} \in \Delta_n}{\arg\min} \;\mathcal{T}_{c}(e_\#P_{n,\boldsymbol{w}},e_\#Q_N),
\end{equation}
where $e_\#P_{n,\boldsymbol w} \coloneq \sum_{i=1}^n w_i \delta_{e(z_i)}$ is the reweighted labeled distribution.
This formulation seeks weights that minimize the transportation cost required to align the labeled behavior distribution with the target distribution under the embedding-induced geometry. In the empirical setting, this problem reduces to a finite-dimensional optimal transport problem over samples, given by
\begin{equation}
\label{equ:solve_weights}
\min_{\boldsymbol{w} \in \Delta_n,\;\Gamma \in \mathbb{R}_+^{n\times N}}
\sum_{i=1}^n \sum_{j=1}^N 
\Gamma_{ij}\, c\big(e(z_i), e(\widetilde z_j)\big),
\quad
\text{s.t. }
\sum_{j=1}^N \Gamma_{ij}=w_i,\;
\sum_{i=1}^n \Gamma_{ij}=\frac{1}{N},
\end{equation}
where $\Gamma \in \mathbb{R}_+^{n \times N}$ denotes the transport plan and $\Gamma_{ij}$ represents the mass transported from $z_i$ to $\widetilde z_j$. This is a linear program over $(\boldsymbol w, \Gamma)$, and can be efficiently solved using standard OT solvers or via entropic regularization~\citep{cuturi2013sinkhorn}.
Replacing density ratios in Eq.~\eqref{eq:dr_value} with these geometry-aware weights, we propose the \textbf{O}ptimal \textbf{T}ransport-based \textbf{R}obust \textbf{O}ff-\textbf{P}olicy \textbf{E}valuation (\textbf{OTROPE}):
\begin{equation}
\label{eq:ot_value}
   \widehat{V}_{\mathrm{OTROPE}}(\pi) = \frac{1}{N}\sum_{j=1}^{N} \widehat{g}(\widetilde{z}_j)
   + \sum_{i=1}^n w_i^\ast \left( g^\ast(z_i) - \widehat{g}(z_i) \right).
\end{equation}
Intuitively, as characterized by Eq.~\eqref{eq:OT_weights}, a labeled sample \(z_i\) receives a larger weight \(w_i^\ast\) when it is closer in the semantic embedding space to regions where the target policy \(\pi\) places higher probability mass. As a result, its residual \(g^\ast(z_i)-\widehat g(z_i)\) contributes more strongly to correcting the proxy prediction under the target distribution. 
We summarize OTROPE in Algorithm~\ref{alg:ot} and detail OT weights training in Appendix~\ref{app:ot_optimize}. In practice, we can employ $M$ small LLMs as evaluators, denoted by $\{g_i\}_{i=1}^M$,  and aggregate their evaluations for each pair in $\mathcal{D}_2$ to construct $\widehat{g}$. 
We consider two aggregation strategies. 
The first is \textit{mixture-of-experts} (MoE), which forms a weighted combination of judgments (detailed in Appendix~\ref{sec:moe}).
The second is \textit{majority voting} for pair preference, which uses the majority label among evaluators as the proxy prediction. We show in Section~\ref{sec:exper} that using OTROPE, \textit{ensembles of weaker LLMs} can approach and sometimes \textit{outperform stronger LLM judges}.

\begin{algorithm}[t]
\caption{OTROPE: Optimal Transport-based Robust Off-Policy Evaluation}
\label{alg:ot}
\begin{algorithmic}[1]
\STATE \textbf{Input:} labeled dataset 
\(\mathcal{D}_1=\{(z_i,g_i^\ast)\}_{i=1}^n\) from the unknown behavior policy $\pi_0$, unlabeled dataset 
\(\mathcal{D}_2=\{\widetilde z_j\}_{j=1}^N\) from the target policy $\pi$, embedding map \(e(\cdot)\), transport cost \(c(\cdot,\cdot)\), and a predictor \(\widehat g\), which is either pre-specified or trained on an \(\mathcal{D}_1\).
\STATE Obtain predictions \(\widehat g(z_i)\) for \(z_i\in\mathcal{D}_1\) and \(\widehat g(\widetilde z_j)\) for \(\widetilde z_j\in\mathcal{D}_2\).
\STATE Construct embeddings for both samples: $\{e(z_i)\}_{i=1}^n$ and $\{e(\widetilde z_j)\}_{j=1}^N$.
\STATE Form the pairwise cost $c(e(z_i),e(\widetilde z_j))$, for $i=1,\ldots,n$, $j=1,\ldots,N$.
\STATE Optimize the OT weighting problem in~\eqref{equ:solve_weights} to obtain the weights $\boldsymbol{w}^\ast$.
\STATE Compute the \(\widehat{V}_{\mathrm{OTROPE}}(\pi)\) by~\eqref{eq:ot_value}.
\STATE \textbf{return} \(\widehat{V}_{\mathrm{OTROPE}}(\pi)\).
\end{algorithmic}
\end{algorithm}

\section{Theoretical Results}
\label{sec:theory}
In this section, we first theoretically analyze the limitations of PPI and DR estimators, and then establish the consistency and convergence rate of the proposed OTROPE. 

\subsection{Limitations of PPI and DR Estimators}
The following proposition shows that the PPI estimator in Eq.~\eqref{eq:ppi_value} is generally biased when there exists discrepancy between the behavior distribution \(P\)  and the target distribution \(Q\).

\begin{proposition}[Bias of the PPI Estimator under Distribution Shift] 
\label{prop:ppi_bias}
If $\mathbb{E}_{z\sim P}[r(z)] \neq \mathbb{E}_{z\sim Q}[r(z)]$, then
\[
\left| \mathbb{E} \big[\widehat{V}_{\mathrm{PPI}}(\pi)\big] - V^\ast(\pi) \right| =
\left| \mathbb{E}_{z\sim P} [r(z)] - \mathbb{E}_{z\sim Q} [r(z)] \right| > 0.
\]
\end{proposition}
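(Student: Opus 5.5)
The argument is a direct expectation computation using linearity. I will treat the proxy predictor $\widehat g$ as fixed, i.e., independent of $\mathcal{D}_1$ and $\mathcal{D}_2$ (pre-specified or fit on separate data), and assume $g^\ast$ and $\widehat g$ are integrable under both $P$ and $Q$. The $z_i$ are draws from $P$ and the $\widetilde z_j$ are draws from $Q$.

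First, take expectations of the two terms of Eq.~\eqref{eq:ppi_value} separately. Since each $\widetilde z_j\sim Q$, the first term has mean $\mathbb{E}_{z\sim Q}[\widehat g(z)]$. Since each $z_i\sim P$ and $g^\ast(z_i)-\widehat g(z_i)=r(z_i)$, the second term has mean $\mathbb{E}_{z\sim P}[r(z)]$. Identical distribution within each sample is all that is needed; independence is not required.

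Next, rewrite the estimand using the definition of the residual. Because $g^\ast=\widehat g+r$ pointwise, Eq.~\eqref{eq:value_def} gives
\[
V^\ast(\pi)=\mathbb{E}_{z\sim Q}[\widehat g(z)]+\mathbb{E}_{z\sim Q}[r(z)].
\]
Subtracting this from the mean of $\widehat V_{\mathrm{PPI}}(\pi)$, the $\mathbb{E}_{Q}[\widehat g]$ terms cancel, and the bias equals $\mathbb{E}_{P}[r]-\mathbb{E}_{Q}[r]$. Taking absolute values gives the stated identity. The hypothesis $\mathbb{E}_P[r]\neq\mathbb{E}_Q[r]$ then makes the bias strictly positive.

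There is no real obstacle here. The only delicate point is the independence of $\widehat g$ from the evaluation samples. If $\widehat g$ were trained on $\mathcal{D}_1$, the in-sample residual mean would not equal $\mathbb{E}_P[r]$, so I would state the fixed-predictor (or sample-splitting) assumption explicitly when invoking linearity.
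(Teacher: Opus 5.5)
Your proposal is correct and follows the same route as the paper: linearity gives $\mathbb{E}[\widehat V_{\mathrm{PPI}}(\pi)]=\mathbb{E}_{z\sim Q}[\widehat g(z)]+\mathbb{E}_{z\sim P}[r(z)]$, and you then subtract $V^\ast(\pi)=\mathbb{E}_{z\sim Q}[\widehat g(z)]+\mathbb{E}_{z\sim Q}[r(z)]$. Your explicit requirement that $\widehat g$ be fixed, or independent of $\mathcal{D}_1$ and $\mathcal{D}_2$, is a worthwhile clarification that the paper leaves implicit, since its MoE predictor is in fact trained on $\mathcal{D}_1$.
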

This follows immediately by taking expectation: $\mathbb{E} \big[\widehat{V}_{\mathrm{PPI}}(\pi)\big] =
\mathbb{E}_{z\sim Q} [\widehat{g}(z)] + \mathbb{E}_{z\sim P} \big[g^\ast(z) - \widehat{g}(z)\big]
= \mathbb{E}_{z\sim Q} [\widehat{g}(z)] + \mathbb{E}_{z\sim P} \big[r(z)\big]$.
This result reflects an intrinsic limitation due to the discrepancy between $P$ and $Q$.
We next examine the limitations of density-ratio based DR estimator in Eq.~\eqref{eq:dr_value}.

\begin{proposition}[Finite-sample degeneracy, bias, and variance inflation of DR]
\label{prop:dr_bad}
Denote the density ratio by $\rho(z_i)={\pi(y_i \vert x_i)}/{\pi_0(y_i \vert x_i)}$. Suppose \(0<c_r\le r(z)\le C_r\) for all \(z\sim P\).
If there exist sequences \(a_n\in(0,1)\), \(b_n>0\), and \(d_n\in(0,1)\), such that $\mathbb P\left(1/n\sum_{i=1}^n \rho(z_i)\le a_n\right)\ge 1-d_n$,
and $\mathbb P(|\widehat V_{\rm DM}(\pi)-V^\ast(\pi)|\ge b_n)\ge 1-d_n$,
then, 
\(
\left| \widehat V_{\rm DR}(\pi)-V^\ast(\pi) \right| \ge b_n-C_r a_n,
\) with probability at least \(1-2d_n\). 
If additionally $|\operatorname{Cov}(\widehat V_{\rm DM},R_n)|
\le \kappa \operatorname{Var}(R_n)$ with $\kappa<1/2$, then
\[
\operatorname{Var}\{\widehat V_{\rm DR}(\pi)\} \ge (1-2\kappa)
\left[c_r^2\frac{(1-a_n)^2}{d_n} - C_r^2 \right].
\]
\end{proposition}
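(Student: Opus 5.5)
We write $R_n = \frac{1}{n}\sum_{i=1}^n \rho(z_i) r(z_i)$ for the DR correction term, so that $\widehat V_{\rm DR}(\pi) = \widehat V_{\rm DM}(\pi) + R_n$. The plan is to control the two terms separately: first a high-probability bound, then a variance bound.

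\textbf{High-probability bound.} The residual bound $0<c_r\le r\le C_r$ together with $\rho\ge 0$ gives the sandwich
\[
c_r\,\frac1n\sum_i\rho(z_i)\;\le\; R_n\;\le\; C_r\,\frac1n\sum_i\rho(z_i).
\]
Let $A$ be the event $\{\frac1n\sum_i\rho(z_i)\le a_n\}$ and $B$ the event $\{|\widehat V_{\rm DM}-V^\ast|\ge b_n\}$. A union bound gives $\mathbb P(A\cap B)\ge 1-2d_n$. On $A\cap B$ we have $0\le R_n\le C_r a_n$. The reverse triangle inequality then gives
\[
|\widehat V_{\rm DR}-V^\ast| \;\ge\; |\widehat V_{\rm DM}-V^\ast|-|R_n| \;\ge\; b_n-C_ra_n .
\]
This step is routine.

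\textbf{Variance bound, reduction.} Expanding the variance and using the covariance hypothesis,
\[
\operatorname{Var}(\widehat V_{\rm DR}) = \operatorname{Var}(\widehat V_{\rm DM})+\operatorname{Var}(R_n)+2\operatorname{Cov}(\widehat V_{\rm DM},R_n) \;\ge\; (1-2\kappa)\operatorname{Var}(R_n),
\]
where we drop the nonnegative term $\operatorname{Var}(\widehat V_{\rm DM})$. It therefore suffices to show
\[
\operatorname{Var}(R_n)\ge c_r^2(1-a_n)^2/d_n - C_r^2 .
\]
The key observation is that $\mathbb E[\rho(z)]=1$ under $P$, assuming absolute continuity. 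Hence $\bar\rho_n := \frac1n\sum_i\rho(z_i)$ has mean $1$ but is at most $a_n<1$ with probability at least $1-d_n$. So the mass carrying the expectation lives on a small event of probability at most $d_n$, and this forces a large second moment. Concretely, on $A^c$,
\[
\mathbb E[\bar\rho_n\mathbf 1_{A^c}] = 1-\mathbb E[\bar\rho_n\mathbf 1_A]\ge 1-a_n .
\]
By Cauchy--Schwarz,
\[
\mathbb E[\bar\rho_n^2]\;\ge\;\frac{\mathbb E[\bar\rho_n\mathbf 1_{A^c}]^2}{\mathbb P(A^c)}\;\ge\;\frac{(1-a_n)^2}{d_n}.
\]
Using $R_n\ge c_r\bar\rho_n\ge0$, this gives
\[
\mathbb E[R_n^2]\ge c_r^2\,\frac{(1-a_n)^2}{d_n}.
\]

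\textbf{Main obstacle: subtracting $(\mathbb E R_n)^2$.} We need $(\mathbb E R_n)^2\le C_r^2$. This follows from $\mathbb E R_n\le C_r\,\mathbb E[\bar\rho_n]=C_r$. Here the paper's intended convention must be checked: we need $\mathbb P(A^c)\le d_n$, which the hypothesis supplies. If instead $\mathbb P(A^c)$ is only bounded above by $d_n$ through the hypothesis, the Cauchy--Schwarz step uses $1/\mathbb P(A^c)\ge 1/d_n$, which is fine. The other point requiring care is the case $\mathbb P(A^c)=0$. There the hypotheses are inconsistent with $\mathbb E\bar\rho_n=1$ when $a_n<1$, so the bound holds vacuously. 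Combining everything gives
\[
\operatorname{Var}(\widehat V_{\rm DR})\ge(1-2\kappa)\bigl[c_r^2(1-a_n)^2/d_n-C_r^2\bigr].
\]
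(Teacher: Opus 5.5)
Your proposal is correct and follows essentially the same route as the paper's proof. For the bias you use the sandwich $c_r\bar\rho_n\le R_n\le C_r\bar\rho_n$ with a union bound and the reverse triangle inequality, and for the variance you combine $\mathbb E[\bar\rho_n\mathbf 1_{A^c}]\ge 1-a_n$ and Cauchy--Schwarz with $(\mathbb E R_n)^2\le C_r^2$ and the covariance condition. Your remark that $\mathbb E[\bar\rho_n]=1$ requires $\pi(\cdot\mid x)\ll\pi_0(\cdot\mid x)$ makes explicit a hypothesis the paper uses silently; it is a useful clarification, not a deviation.
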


Proposition~\ref{prop:dr_bad} formalizes the finite-sample failure of DR under weak overlap. When empirical density ratios are close to zero with high probability, the residual correction becomes negligible, leading to biased estimates. This degeneracy implies occasional large weights, resulting in high variance, which is frequently encountered in LLMs applications. Proofs are provided in Appendix~\ref{app:proof_prop_bad}.

\subsection{Consistency and Convergence Rate of OTROPE}
In contrast, we show that the proposed OTROPE estimator consistently recovers the true value $V^\ast(\pi)$ under suitable conditions. 
We begin by introducing Assumption~\ref{ass:bound_emb}.

\begin{assumption}[Overlap]
\label{ass:bound_emb}
$\operatorname{supp}(e_\#Q)\subseteq \operatorname{supp}(e_\#P)$. $\operatorname{supp}(e_\#P)$ and $\operatorname{supp}(e_\#Q)$ are compact.
\end{assumption}
Assumption~\ref{ass:bound_emb} is standard in the OT literature~\citep{gangbo1996geometry,fournier2015rate,manole2024sharp}. In our setting, it can be interpreted as requiring overlap after dimensionality reduction via the embedding map, rather than in the original high-dimensional sequence space as in DR methods. 
This assumption is reasonable in LLM settings, where both $P$ and $Q$ arise from policies defined over the same prompt distribution and share a common generation space. 
Moreover, embedding representations of LLM-generated responses tend to concentrate within a bounded region of $\mathbb{R}^d$, further supporting the validity of this assumption.

\begin{lemma}
\label{lemma}
Define
\(
\mathcal{M}_n \coloneq
\left\{
\sum_{i=1}^n w_i \delta_{e(z_i)} : w_i \ge 0,\ \sum_{i=1}^n w_i = 1
\right\}.
\) Suppose Assumption~\ref{ass:bound_emb} holds. 
Then, as $n \to \infty$,
\[
\inf_{\mu \in \mathcal{M}_n} W(\mu, e_\#Q) \xrightarrow{\text{a.s.}} 0.
\]
\end{lemma}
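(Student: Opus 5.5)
The idea is to build, for each $n$, an explicit element of $\mathcal{M}_n$ that is close to $e_\#Q$: push the mass of $e_\#Q$ onto the nearest labeled embedding. Write $u_i = e(z_i)$, with the $u_i$ i.i.d.\ from $e_\#P$, and let $K=\operatorname{supp}(e_\#Q)$. Define a measurable nearest-neighbour map $T_n(v)\in\arg\min_{i\le n}\|v-u_i\|$, breaking ties by the smallest index. Then set $\mu_n := (T_n)_\# e_\#Q$, that is, $w_i = e_\#Q(\{v: T_n(v)=u_i\})$. This $\mu_n$ lies in $\mathcal{M}_n$, and $(T_n,\mathrm{id})$ induces a coupling between $\mu_n$ and $e_\#Q$. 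Hence
\[
\inf_{\mu\in\mathcal{M}_n} W(\mu,e_\#Q)\le W(\mu_n,e_\#Q)\le \int_K \min_{i\le n}\|v-u_i\|\,\mathrm{d}e_\#Q(v)\le \sup_{v\in K} D_n(v),
\]
where $D_n(v)=\min_{i\le n}\|v-u_i\|$. The problem therefore reduces to showing that the sample $\{u_i\}$ becomes dense in $K$ almost surely.

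The key step is $\sup_{v\in K}D_n(v)\to 0$ a.s. Fix $\varepsilon>0$. Since $K$ is compact by Assumption~\ref{ass:bound_emb}, it can be covered by finitely many balls $B(v_k,\varepsilon/2)$, $k=1,\dots,m$, with centres $v_k\in K$. Because $K\subseteq\operatorname{supp}(e_\#P)$, each centre lies in the support of $e_\#P$, so $p_k:=e_\#P(B(v_k,\varepsilon/2))>0$ by the definition of support. For each $k$, the probability that no $u_i$ with $i\le n$ falls in $B(v_k,\varepsilon/2)$ is $(1-p_k)^n$, which is summable. By Borel--Cantelli, almost surely every ball eventually contains a sample point. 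Since $D_n(v)$ is nonincreasing in $n$, from that time on $\sup_{v\in K} D_n(v)\le\varepsilon$. Intersecting these events over $\varepsilon=1/m$, $m\in\mathbb{N}$, yields the almost-sure convergence.

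Combining the two steps gives $\inf_{\mu\in\mathcal{M}_n}W(\mu,e_\#Q)\to0$ a.s. This uses the embedded samples being i.i.d.\ from $e_\#P$, which holds because the $z_i$ are i.i.d.\ from $P$.

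I expect the main obstacle to be the measure-theoretic detail. First, $T_n$ must be measurable; the fixed tie-breaking rule turns it into a finite partition into Borel cells, so this should be fine. Second, one must note that compactness of $\operatorname{supp}(e_\#P)$ is not actually needed, only compactness of $K$ together with the support inclusion. If one prefers to avoid the coupling construction, an alternative is Kantorovich--Rubinstein duality with the Euclidean cost. For 1-Lipschitz $f$,
\[
\Bigl|\int f\,\mathrm{d}\mu_n-\int f\,\mathrm{d}e_\#Q\Bigr|\le\int|f(T_n v)-f(v)|\,\mathrm{d}e_\#Q(v)\le\sup_{v\in K}D_n(v),
\]
which gives the same bound.
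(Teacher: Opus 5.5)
Your proof is correct and follows essentially the same route as the paper: a finite $\varepsilon$-cover of the compact set $\operatorname{supp}(e_\#Q)$, positive $e_\#P$-mass of each ball from the support inclusion, Borel--Cantelli for each ball, and an explicit coupling that moves mass onto nearby sample points. The only differences are minor. You use nearest-neighbour (Voronoi) cells, the same projection the paper uses later for Theorem~\ref{the:ot_rate}, in place of the paper's partition subordinate to the cover. You also spell out the countable intersection over $\varepsilon=1/m$ needed for almost-sure convergence, which the paper leaves implicit.
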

Lemma~\ref{lemma} shows that, as the sample size of $\mathcal{D}_1$ increases, reweighted distributions can approximate the target distribution $Q$ well in the Wasserstein distance. The proof is provided in Appendix~\ref{app:proof_lemma}.

\begin{assumption}[Lipschitz Continuity]
\label{ass:lr}
There exists a function $\widetilde r:\mathbb R^d\to \mathbb R$ such that $r(z)=\widetilde r(e(z))$ for all $z\in\mathcal Z$. Moreover, there exists a constant $L_r>0$ such that $\widetilde r$ is $L_r$-Lipschitz, i.e., $|\widetilde r(u)-\widetilde r(v)| \le L_r\|u-v\|$ for all $u,v\in e(\mathcal Z)$.
\end{assumption}

Assumption~\ref{ass:lr} is a standard regularity condition~\citep{manole2024sharp,hundrieser2024empirical,melnyk2024distributional}, requiring the residual function to vary smoothly in the semantic space. The Lipschitz continuity ensures that samples that are close in the embedding space, and hence semantically similar, exhibit similar prediction errors, thereby enabling effective bias correction. 

\begin{theorem}[Consistency of OTROPE]
\label{the:unbias}
Suppose Assumptions~\ref{ass:bound_emb} and~\ref{ass:lr} hold. Then, as $n \to \infty$ and $N \to \infty$,
\[
\widehat{V}_{\mathrm{OTROPE}}(\pi) \xrightarrow{p}  V^\ast(\pi).
\]
\end{theorem}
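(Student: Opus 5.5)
The plan is to decompose the error into a proxy-average term and a residual-correction term, and then control the residual term through Kantorovich--Rubinstein duality. Since $V^\ast(\pi)=\mathbb{E}_{Q}[\widehat g]+\mathbb{E}_{Q}[r]$, we can write
\[
\widehat V_{\mathrm{OTROPE}}(\pi)-V^\ast(\pi)=\Big(\tfrac1N\textstyle\sum_{j}\widehat g(\widetilde z_j)-\mathbb{E}_Q[\widehat g]\Big)+\Big(\textstyle\sum_i w_i^\ast r(z_i)-\mathbb{E}_Q[r]\Big).
\]
Here $\widehat g$ is treated as fixed, either pre-specified or independent of $\mathcal D_2$. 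The first term tends to zero in probability by the weak law of large numbers, provided $\widehat g$ is integrable under $Q$. Boundedness of $\mathcal G$, or continuity of $\widehat g$ on a compact embedded support, suffices for this.

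For the second term, Assumption~\ref{ass:lr} gives $r=\widetilde r\circ e$, so
\[
\sum_i w_i^\ast r(z_i)=\int \widetilde r\,d(e_\#P_{n,\boldsymbol w^\ast})\quad\text{and}\quad \mathbb{E}_Q[r]=\int\widetilde r\,d(e_\#Q).
\]
Because $\widetilde r$ is $L_r$-Lipschitz on $e(\mathcal Z)$, Kantorovich--Rubinstein duality yields
\[
\Big|\sum_i w_i^\ast r(z_i)-\mathbb{E}_Q[r]\Big|\le L_r\,W(e_\#P_{n,\boldsymbol w^\ast},e_\#Q).
\]
All measures are supported in the compact set $K=\operatorname{supp}(e_\#P)$, so $\widetilde r$ can be extended by McShane's construction if needed and the duality applies. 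The triangle inequality then gives
\[
W(e_\#P_{n,\boldsymbol w^\ast},e_\#Q)\le W(e_\#P_{n,\boldsymbol w^\ast},e_\#Q_N)+W(e_\#Q_N,e_\#Q).
\]

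Now use the optimality of $\boldsymbol w^\ast$. For any $\mu\in\mathcal M_n$,
\[
W(e_\#P_{n,\boldsymbol w^\ast},e_\#Q_N)\le W(\mu,e_\#Q_N)\le W(\mu,e_\#Q)+W(e_\#Q,e_\#Q_N).
\]
Taking the infimum over $\mu$ gives
\[
W(e_\#P_{n,\boldsymbol w^\ast},e_\#Q)\le \inf_{\mu\in\mathcal M_n}W(\mu,e_\#Q)+2W(e_\#Q_N,e_\#Q).
\]
The first piece tends to zero almost surely as $n\to\infty$ by Lemma~\ref{lemma}. For the second, the empirical measure $e_\#Q_N$ converges weakly to $e_\#Q$ almost surely by Varadarajan's theorem. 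On the compact support from Assumption~\ref{ass:bound_emb}, weak convergence is equivalent to convergence in $W_1$, so $W(e_\#Q_N,e_\#Q)\to0$ almost surely as $N\to\infty$.

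Combining the pieces, the residual term tends to zero almost surely and the proxy term tends to zero in probability, which gives $\widehat V_{\mathrm{OTROPE}}(\pi)\xrightarrow{p}V^\ast(\pi)$ as $n,N\to\infty$ jointly.

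The main obstacle is the step that passes from the empirical-target objective actually optimized in Eq.~\eqref{eq:OT_weights} back to the population target $e_\#Q$. The optimality comparison above, which relies only on the fact that $\boldsymbol w^\ast$ is a minimizer, handles this without any uniform-in-$\boldsymbol w$ argument. A secondary care point is that $\boldsymbol w^\ast$ depends on both samples, so the plan deliberately avoids any independence or unbiasedness argument for $\sum_i w_i^\ast r(z_i)$ and uses only the deterministic duality bound.
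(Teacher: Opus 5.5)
Your proposal is correct and follows the paper's proof essentially step for step. It uses the same decomposition into a proxy-average term and a weighted-residual term, and the same triangle-inequality-plus-optimality bound $W(e_\#P_{n,\boldsymbol w^\ast},e_\#Q)\le\inf_{\mu\in\mathcal M_n}W(\mu,e_\#Q)+2W(e_\#Q_N,e_\#Q)$. It then invokes Lemma~\ref{lemma} for the first piece, Kantorovich--Rubinstein duality for the residual, and the weak law for the proxy term. The only difference is how you show $W(e_\#Q_N,e_\#Q)\to0$: you get it almost surely from Varadarajan's theorem plus compactness, whereas the paper uses the Fournier--Guillin bound $\mathbb E[W(e_\#Q_N,e_\#Q)]\le CN^{-1/d}$ (stated there for $d\ge3$) with Markov's inequality. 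Your route needs no dimension-dependent rate but yields none, while the paper reuses its $O_p(N^{-1/d})$ rate in Theorem~\ref{the:ot_rate}.
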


Since the residual function $\widetilde{r}(e(z))$ is $L_r$-Lipschitz continuous, Lemma~\ref{lemma} implies that expectations of $\widetilde{r}$ under $e_\#P_{n,\boldsymbol w^\ast}$ and $e_\#Q$ become arbitrarily close. Consequently, the bias introduced by evaluating the correction term under $P_{n,\boldsymbol w^\ast}$ vanishes asymptotically. Detailed proofs are in Appendix~\ref{app:proof_unbias}. 
Unlike the PPI estimator in Eq.~\eqref{eq:ppi_value}, which uses the average residual under $P$ and remains biased even under Assumptions~\ref{ass:bound_emb} and~\ref{ass:lr}, OTROPE in Eq.~\eqref{eq:ot_value} reweights $e_\# P_n$ to better align with $e_\# Q_N$. 
Under a lower mass condition, we further establish the convergence rate of OTROPE.

\begin{assumption}[Lower Mass Condition]
\label{ass:ball}
There exist constants $c>0$, $0 < d_0 \le d$, and $\eta_0>0$ such that for all
$u\in \operatorname{supp}(e_\#Q)$ and all $0<\eta\le \eta_0$, $e_\#P(B(u,\eta)) \ge c \eta^{d_0}$,
where $e_\#P(B(u,\eta))$ denotes the probability mass assigned by the pushforward distribution $e_\#P$ to the Euclidean ball centered at $u$ with radius $\eta$ in the embedding space.
\end{assumption}

Assumption~\ref{ass:ball} is a standard lower mass condition that guarantees sufficient local mass around the support of the target distribution, and is commonly used to derive convergence rates in OT analyses~\citep{weed2019sharp,reygner2022reweighting,hundrieser2024empirical}. 

\begin{theorem}[Convergence Rate of OTROPE]
\label{the:ot_rate}
Suppose Assumptions~\ref{ass:bound_emb},~\ref{ass:lr} and~\ref{ass:ball} hold. If $\text{Var}_{\widetilde{z}\sim Q}(\widehat{g}(\widetilde{z}))$ is bounded, then
\[
\widehat{V}_{\mathrm{OTROPE}}(\pi)-V^\ast(\pi)
= O_p\!\left( N^{-1/2} + n^{-1/d_0} + N^{-1/d} \right).
\]
\end{theorem}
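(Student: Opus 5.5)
We split the error into three terms and bound each by one of the rates. Write $\mathbb{E}_Q[\cdot]$ for expectation under $\widetilde z\sim Q$. Since $V^\ast(\pi)=\mathbb{E}_Q[\widehat g(\widetilde z)]+\mathbb{E}_Q[r(\widetilde z)]$, we have
\begin{align*}
\widehat V_{\mathrm{OTROPE}}(\pi)-V^\ast(\pi)
&=\Big(\tfrac1N\textstyle\sum_{j}\widehat g(\widetilde z_j)-\mathbb{E}_Q[\widehat g]\Big)
+\Big(\textstyle\sum_i w_i^\ast \widetilde r(e(z_i))-\tfrac1N\sum_j \widetilde r(e(\widetilde z_j))\Big)\\
&\quad+\Big(\tfrac1N\textstyle\sum_j \widetilde r(e(\widetilde z_j))-\mathbb{E}_{e_\#Q}[\widetilde r]\Big).
\end{align*}
Here we treat $\widehat g$ as fixed, i.e., pre-specified or obtained by sample splitting.

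\emph{First term.} The variance of $\widehat g$ under $Q$ is assumed bounded, so Chebyshev's inequality gives $O_p(N^{-1/2})$.

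\emph{Third term.} By Assumptions~\ref{ass:bound_emb} and~\ref{ass:lr}, $\widetilde r$ is Lipschitz on a compact set and hence bounded. Chebyshev's inequality therefore again gives $O_p(N^{-1/2})$.

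\emph{Middle term.} By Kantorovich--Rubinstein duality and Assumption~\ref{ass:lr}, it is at most
\[
L_r\, W\big(e_\#P_{n,\boldsymbol w^\ast},\, e_\#Q_N\big).
\]
By optimality of $\boldsymbol w^\ast$ in \eqref{eq:OT_weights} and the triangle inequality, for any $\boldsymbol w\in\Delta_n$,
\[
W\big(e_\#P_{n,\boldsymbol w^\ast}, e_\#Q_N\big)\le W\big(e_\#P_{n,\boldsymbol w}, e_\#Q_N\big)\le \inf_{\mu\in\mathcal M_n}W(\mu,e_\#Q)+W(e_\#Q,e_\#Q_N),
\]
taking the infimum over $\boldsymbol w$ in the last step. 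For the second piece, the support of $e_\#Q$ is compact and lies in $\mathbb R^d$, so the standard empirical Wasserstein rate of \citet{fournier2015rate} gives $\mathbb{E}\,W(e_\#Q,e_\#Q_N)\lesssim N^{-1/d}$ (up to a log factor when $d=2$, and $N^{-1/2}$ when $d=1$). Markov's inequality then yields $O_p(N^{-1/d}+N^{-1/2})$.

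\emph{Main obstacle: the quantitative version of Lemma~\ref{lemma}.} We need $\inf_{\mu\in\mathcal M_n}W(\mu,e_\#Q)=O_p(n^{-1/d_0})$. The plan is to bound this infimum by the cost of the map $T$ that sends each $u\in\operatorname{supp}(e_\#Q)$ to its nearest labeled embedding $e(z_i)$, and to take $\mu=T_\#(e_\#Q)\in\mathcal M_n$. Then
\[
W(\mu,e_\#Q)\le \int D_n(u)\,\mathrm d e_\#Q(u),\qquad D_n(u)=\min_i\|u-e(z_i)\|.
\]
For $\eta\le\eta_0$, Assumption~\ref{ass:ball} gives
\[
\mathbb P\big(D_n(u)>\eta\big)=\big(1-e_\#P(B(u,\eta))\big)^n\le \exp(-c\,n\,\eta^{d_0}).
\]
For $\eta>\eta_0$ we use the fact that $D_n$ is bounded by the diameter of the compact supports, together with the exponentially small probability at $\eta_0$. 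Integrating over $\eta$ gives $\mathbb{E}[D_n(u)]\lesssim n^{-1/d_0}$ uniformly in $u$. By Fubini, $\mathbb{E}\inf_{\mu\in\mathcal M_n}W(\mu,e_\#Q)\lesssim n^{-1/d_0}$, and Markov's inequality gives the $O_p$ statement.

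The delicate points are, first, the measurability and uniformity of $D_n$ over the support, which the nearest-neighbor construction handles because $D_n$ is continuous in $u$. Second, we must use $e_\#Q$ itself rather than $e_\#Q_N$, so that the labeled and unlabeled samples enter independently. Combining the three bounds yields $O_p(N^{-1/2}+n^{-1/d_0}+N^{-1/d})$.
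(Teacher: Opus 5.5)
\textbf{Verdict.} Your proof is correct. It uses the same ingredients as the paper's proof: optimality of $\boldsymbol w^\ast$ plus the triangle inequality, a nearest-neighbour projection controlled by Assumption~\ref{ass:ball}, Fournier--Guillin for the empirical target, Chebyshev, and Kantorovich--Rubinstein. What differs is how you organize the error.

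\textbf{How the two decompositions differ.} The paper keeps two terms and bounds the weighted residual directly against the population mean:
\[
\Bigl|\sum_{i=1}^n w_i^\ast r(z_i)-\mathbb{E}_{z\sim Q}[r(z)]\Bigr|\le L_r\Bigl\{\inf_{\mu\in\mathcal M_n}W(\mu,e_\#Q)+2\,W(e_\#Q_N,e_\#Q)\Bigr\}.
\]
This reuses the inequality from the consistency proof and gives a statement about the weights alone, namely that the reweighted labeled measure converges to $e_\#Q$. The price is that it must pay for $W(e_\#Q_N,e_\#Q)$, because $\boldsymbol w^\ast$ is fitted to $e_\#Q_N$. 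You compare with the empirical target instead and pay only one extra Chebyshev term.

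\textbf{Where you are more careful, and one slip.} You split the tail integral at $\eta_0$. The paper integrates $\exp(-cn\eta^{d_0})$ over all of $[0,D]$, although Assumption~\ref{ass:ball} only provides that bound for $\eta\le\eta_0$. You also track the cases $d\le 2$. However, for $d=2$ the $\log N$ factor you mention is not absorbed by $N^{-1/2}+N^{-1/d}$. So your sentence ``Markov's inequality then yields $O_p(N^{-1/d}+N^{-1/2})$'' is false there. The paper's argument has the same gap, since it invokes Fournier--Guillin only for $d\ge 3$.

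\textbf{Your decomposition buys more than you use.} Your closing remark that you must project $e_\#Q$ rather than $e_\#Q_N$ ``so that the labeled and unlabeled samples enter independently'' is mistaken. The datasets $\mathcal D_1$ and $\mathcal D_2$ are already independent, so you can project $e_\#Q_N$ directly. Sending each $e(\widetilde z_j)$ to its nearest labeled embedding is feasible, and in fact optimal, in the linear program \eqref{equ:solve_weights}. Hence
\[
W\bigl(e_\#P_{n,\boldsymbol w^\ast},e_\#Q_N\bigr)=\frac1N\sum_{j=1}^N D_n\bigl(e(\widetilde z_j)\bigr).
\]
Each $\widetilde z_j$ has $e(\widetilde z_j)\in\operatorname{supp}(e_\#Q)$ almost surely and is independent of $z_1,\dots,z_n$. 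Conditioning on $\widetilde z_j$ and applying your uniform bound gives an expectation of at most $C n^{-1/d_0}$. The middle term is then $O_p(n^{-1/d_0})$, and no comparison between $e_\#Q_N$ and $e_\#Q$ is needed.

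\textbf{Consequence.} The total error is $O_p(N^{-1/2}+n^{-1/d_0})$ for every $d$, including $d=2$. This implies the stated rate. It also shows that the $N^{-1/d}$ term is an artifact of bounding against the population target rather than the empirical one.
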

$d_0$ can be interpreted as the intrinsic dimension~\citep{weed2019sharp} of the support of the distribution in the embedding space, reflecting how the probability mass of $e_\# P$ concentrates locally. Moreover, the convergence rate we establish is at most $O_p(N^{-1/d})$, implying that embeddings on lower-dimensional manifolds yield faster convergence. This highlights the importance of selecting embedding representations that capture essential semantic structure while remaining low-dimensional. 

More importantly, OTROPE enjoys a form of \textit{double robustness}, as shown in the following theorem.

\begin{theorem}[Double Robustness of OTROPE]
\label{the:double}
Suppose $\|\widehat g_n - g^\ast\|_\infty = \sup_{z \in \mathcal Z}|\widehat g_n(z)-g^\ast(z)|= O_p(\alpha_n)$ with $\alpha_n \to 0$, and $\widehat g_n$ has bounded conditional variance under the target distribution. Then, for any possibly data-dependent nonnegative weights $\widehat w_i$ satisfying $\widehat w_i \ge 0$ and $\sum_{i=1}^n \widehat w_i = 1$, we have
\[
\widehat V_{\mathrm{OTROPE}}(\pi) - V^\ast(\pi)
=
O_p\left(N^{-1/2} + \alpha_n\right).
\]
\end{theorem}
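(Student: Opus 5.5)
The plan is to decompose the estimation error into a pure sampling term on the unlabeled target data and a residual-correction term on the labeled data. Write $\widehat g=\widehat g_n$ and $r_n=g^\ast-\widehat g_n$. Since $V^\ast(\pi)=\mathbb E_{\widetilde z\sim Q}[g^\ast(\widetilde z)]=\mathbb E_Q[\widehat g_n]+\mathbb E_Q[r_n]$, with expectations taken conditional on $\widehat g_n$, we have
\begin{equation*}
\widehat V_{\mathrm{OTROPE}}(\pi)-V^\ast(\pi)
=\underbrace{\frac1N\sum_{j=1}^N\widehat g_n(\widetilde z_j)-\mathbb E_Q[\widehat g_n]}_{(\mathrm I)}
+\underbrace{\sum_{i=1}^n\widehat w_i\,r_n(z_i)}_{(\mathrm{II})}
-\underbrace{\mathbb E_Q[r_n]}_{(\mathrm{III})}.
\end{equation*}
The key structural point is that (II) is a convex combination of residuals. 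It can therefore be controlled without any property of the weights, which is why the conclusion holds for arbitrary, even data-dependent, $\widehat w\in\Delta_n$. The weights need not come from \eqref{equ:solve_weights}.

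For (I), I will condition on $\widehat g_n$. This is legitimate because $\widehat g_n$ is either pre-specified or trained on $\mathcal D_1$, so it is independent of $\mathcal D_2$. Given $\widehat g_n$, the $\widetilde z_j$ are i.i.d.\ from $Q$, so Chebyshev's inequality yields
\[
\mathbb P\big(|(\mathrm I)|>t\mid \widehat g_n\big)\le \operatorname{Var}_Q(\widehat g_n)/(Nt^2).
\]
The assumption that $\widehat g_n$ has conditional variance under $Q$ bounded by some $\sigma^2$ then gives $(\mathrm I)=O_p(N^{-1/2})$ unconditionally, by taking $t=M N^{-1/2}$ and integrating over $\widehat g_n$.

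For (II) and (III), the bounds are immediate:
\[
|(\mathrm{II})|\le\sum_i\widehat w_i|r_n(z_i)|\le\|\widehat g_n-g^\ast\|_\infty,
\qquad
|(\mathrm{III})|\le\|\widehat g_n-g^\ast\|_\infty .
\]
Both are therefore $O_p(\alpha_n)$. Summing the three terms gives $O_p(N^{-1/2}+\alpha_n)$. No OT assumption is needed, and this is exactly the double-robustness message: the proxy being consistent suffices, and Theorem~\ref{the:ot_rate} covers the complementary case.

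I expect the main obstacle to be making step (I) rigorous when $\widehat g_n$ is data-dependent. One needs independence of $\widehat g_n$ from $\mathcal D_2$ and a uniform bound on the conditional variance. I would state explicitly that $\widehat g_n$ is fitted on data independent of $\mathcal D_2$, for instance on $\mathcal D_1$ or by sample splitting, and interpret ``bounded conditional variance'' as $\operatorname{Var}_Q(\widehat g_n\mid\widehat g_n)\le\sigma^2$ with probability tending to one. With that in place, the conditional Chebyshev argument transfers directly to an unconditional $O_p$ statement.
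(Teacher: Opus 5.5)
Your proposal is correct and follows the paper's proof. Both use the same decomposition, apply Chebyshev to the target-sample average of $\widehat g_n$, and use the simplex constraint on the weights to bound both $\sum_i \widehat w_i r(z_i)$ and $\mathbb{E}_Q[r]$ by $\|r\|_\infty = O_p(\alpha_n)$. Your explicit conditioning on $\widehat g_n$, which relies on its independence from $\mathcal D_2$, is a rigor detail the paper leaves implicit rather than a different route.
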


The double robustness follows by combining Theorem~\ref{the:ot_rate} with Theorem~\ref{the:double}. As discussed earlier, when the optimal transport weights $\boldsymbol{w}^\ast$ solve Eq.~\eqref{eq:OT_weights} and the residual induced by $\widehat g$ satisfies Assumption~\ref{ass:lr}, OTROPE is consistent and achieves the convergence rate stated in Theorem~\ref{the:ot_rate}. Conversely, Theorem~\ref{the:double} shows that if the proxy predictor $\widehat g_n$ is sufficiently accurate, then OTROPE remains consistent even when the transport weights are not accurate, as long as they are nonnegative and normalized. 
This is because the weighted correction term is bounded by $O_p(\alpha_n)$ under the normalization of the weights. In particular, when $\alpha_n = O(n^{-1/d_0})$, OTROPE achieves the rate $O_p\left(N^{-1/2}+n^{-1/d_0}\right)$, which avoids the additional $N^{-1/d}$ term appearing in Theorem~\ref{the:ot_rate}. Proofs of Theorems~\ref{the:ot_rate} and~\ref{the:double} are provided in Appendix~\ref{app:proof_rate}.

\section{Experiments}
\label{sec:exper}
We evaluate OTROPE on both controlled experiments with various distribution shifts and real application on LLM evaluations, with comparison to PPI-based and density-ratio-based baselines. Beyond PPI, IS, and DR introduced in Section~\ref{sec:pre}, we further include  PPI++~\citep{angelopoulos2023ppi++}. Baselines are detailed in Appendix~\ref{app:baseline}, and hyperparameter choices are detailed in Appendix~\ref{asec:hyper}.

\subsection{Controlled Experiments with Various Distribution Shifts}
\label{subsec:simu}

\begin{figure}[!t]
    \centering
    \includegraphics[width=0.9\linewidth]{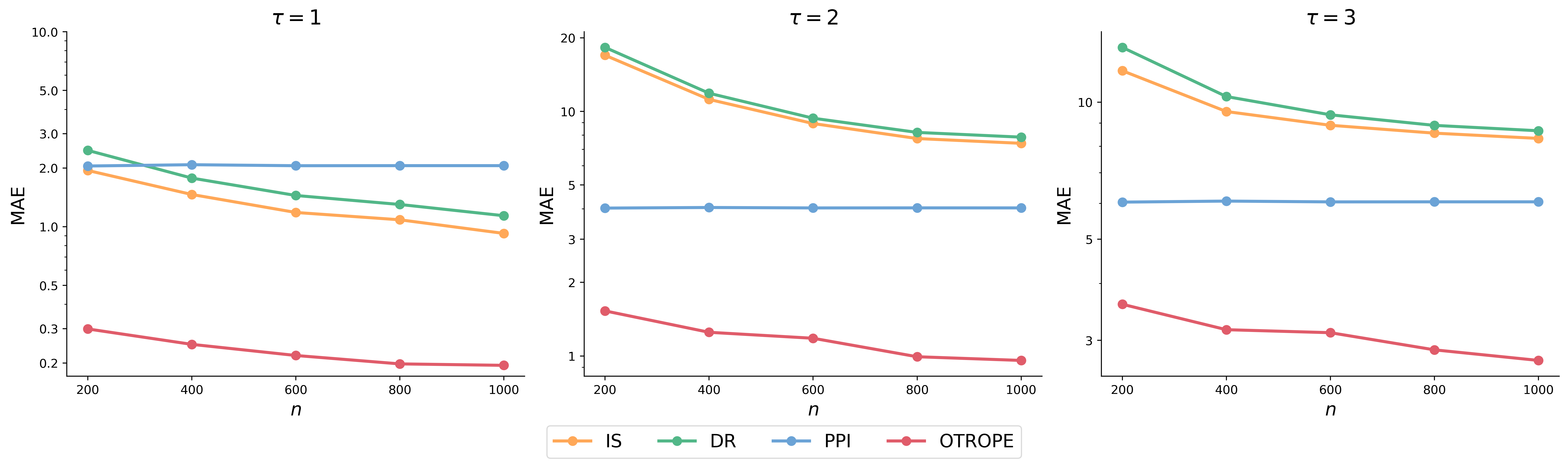}
    \caption{Log-scale mean absolute error under a fixed predictor $\widehat g$ for distribution shift levels $\tau \in \{1,2,3\}$. PPI++ is omitted since it nearly overlaps with PPI.}
    \label{fig:simu}
\end{figure}

\textbf{Settings.} 
We first generate a covariate \(x \sim \mathrm{TruncNormal}(0,1^2;[-3,3])\).
Given \(x\), we generate a \(d\)-dimensional response \(y \in \mathbb{R}^d\), where we set \(d=5\).
Under the behavior distribution \(P\), given \(x_i\), the coordinates of \(y_i\) are conditionally independent and satisfy
\(
y_{ik} \vert x_i \sim \mathrm{TruncNormal}(0.5\, x_i,\,0.5^2;[-4,4]), k=1,\ldots,d.
\)
Under the target distribution \(Q\), we consider a distributional shift along a outcome-relevant direction \(a \in \mathbb{R}^d\), where \(\|a\| = 1\).
Specifically, given \(\widetilde{x}_j\),
\(
\widetilde{y}_{jk} \vert \widetilde{x}_j \sim \mathrm{TruncNormal}(0.5\, \widetilde{x}_j + \tau \times a_k,\,0.5^2;[-4,4]), k=1,\ldots,d.
\)
The parameter \(\tau \ge 0\) controls the strength of the distributional shift between \(P\) and \(Q\).
The ground-truth outcome is given by
\(
g^\ast(x,y) = 2 + 3x + 2 a^\top y + \varepsilon, \varepsilon \sim N(0,1^2).
\)
We consider a misspecified fixed predictor $\widehat{g}(x,y) = -x$ used in Section~\ref{subsec:toy}, which cannot capture the distributional shift induced by $y$, leading to biased estimates under the target distribution. 
Following \citet{angelopoulos2023ppi++}, we fix $N=2000$, vary the labeled sample size $n$, and repeat each experiment over $100$ replications. We focus on distribution shift levels $\tau \in \{1,2,3\}$. We also consider a predictor $\widehat g(x,y)$ trained on $\mathcal D_1$, with details provided in Appendix~\ref{app:simu}. For OTROPE, we use the embedding $e(x,y)=(x,a^\top y)\in\mathbb R^2$.

\textbf{Results.} 
Figure~\ref{fig:simu} reports the MAE for $\tau \in \{1,2,3\}$. In our experiments, PPI++ performs almost identically to PPI and is therefore omitted from the plots for clarity. Since the prediction model $\widehat g$ is fixed, the PPI-based methods exhibit little improvement as $n$ increases. In contrast, OTROPE consistently outperforms all baselines across all settings. Under large distribution shifts, i.e., $\tau=3$, estimation becomes more challenging, and more labeled samples are required for stable convergence. Nevertheless, OTROPE still achieves clear improvements over the baselines, demonstrating that the OT-based correction remains effective even in this difficult regime. Additional empirical checks are provided in the appendix: Appendix~\ref{app:prop_verification} verifies Proposition~\ref{prop:dr_bad}, and Appendix~\ref{app:variance_shift} reports an additional simulation under variance shift rather than mean shift.

\subsection{Real Data Applications on LLM Evaluations}
\label{subsec:real}

\textbf{Datasets and models.} 
\textsc{RewardBench~2}~\citep{malik2026rewardbench} is a multi-skill reward modeling benchmark containing model-generated responses and human preference annotations. \textsc{Chatbot Arena}~\citep{zheng2023judging} provides large-scale human preference data from pairwise comparisons of LLM responses. We adapt both datasets to our evaluation setting as follows. We select two LLMs, or two pools of LLMs, to serve as the behavior and target policies, respectively, and construct an additional pool of LLMs as opponents. For each candidate response generated by either the behavior or target policy, we form a pairwise comparison against a response generated by an opponent model. Human preferences define the binary label $g^\ast$, where $g^\ast=1$ indicates that the candidate response is preferred over the opponent response and $g^\ast=0$ otherwise. The policy value is then defined as the expected preference win rate against the opponent pool.
For each benchmark, we consider two target-policy settings. In \textsc{RewardBench~2}, we consider a pooled target policy consisting of Claude 3.5 Sonnet and human responses, and a single-model target policy given by Claude 3.5 Sonnet. In \textsc{Chatbot Arena}, we use GPT-4 and GPT-3.5-Turbo as the two target policies. The behavior-policy configurations and more data construction details are provided in Appendix~\ref{asec:data_cons}. 
All experiments are conducted in a black-box model setting, where token-level probabilities are unavailable. Empirical support for Assumptions~\ref{ass:bound_emb} and~\ref{ass:ball} over \textsc{RewardBench~2} and \textsc{Chatbot Arena} is provided in Appendices~\ref{app:support} and~\ref{app:ball}, respectively.

\textbf{Implementation.} 
We encode each prompt--response pair using bge-m3 embeddings~\citep{chen2024bge}, and for pairwise comparisons, concatenate the embeddings of the candidate and opponent responses. 
Theorem~\ref{the:ot_rate} shows that the convergence rate depends on both the embedding dimension $d$ and the intrinsic dimension $d_0$. This motivates the use of principal component analysis (PCA)~\citep{hotelling1933analysis,abdi2010principal} before solving the OT problem: reducing the embedding dimension can improve convergence and computational efficiency, while the low-intrinsic-dimensional structure suggests that the dominant components retain most of the relevant information.
Since token probability of responses are not unavailable in the black-box setting, we use a classifier-based (CLS) importance weight estimator~\citep{sugiyama2012density} to approximate the density ratios in IS and DR (also see Appendix~\ref{app:baseline}). Each setting is repeated for $10$ times.

\begin{table*}[!t]
\caption{
Mean absolute error of the estimated target policy value on
\textsc{RewardBench~2} and \textsc{Chatbot Arena}.
Standard deviations are reported in parentheses.
The best result in each column is highlighted in \textbf{bold}.
The Mixed policy is the mixture of Claude 3.5 Sonnet and human responses.
}
\label{tab:ot-chat}

\centering
\scriptsize
\setlength{\tabcolsep}{5pt}

\begin{tabular}{ll|cc|cc}
\toprule

\multirow{2}{*}{\textbf{Method}}
& \textbf{Benchmark}
& \multicolumn{2}{c|}{\textsc{RewardBench~2}}
& \multicolumn{2}{c}{\textsc{Chatbot Arena}} \\

\cmidrule{3-4}
\cmidrule{5-6}

& \textbf{Target Policy ($\pi$)}
& Mixed
& Claude 3.5 Sonnet
& GPT-4
& GPT-3.5-Turbo \\

\midrule


\greencell
\cellcolor{white}
& IS
& 0.263 {\scriptsize(0.006)}
& 0.273 {\scriptsize(0.008)}
& 0.414 {\scriptsize(0.002)}
& 0.272 {\scriptsize(0.001)} \\

\greencell
\cellcolor{white}
& DR
& 0.129 {\scriptsize(0.008)}
& 0.125 {\scriptsize(0.003)}
& 0.088 {\scriptsize(0.007)}
& 0.065 {\scriptsize(0.004)} \\

\yellowcell
\cellcolor{white}
& PPI
& 0.105 {\scriptsize(0.006)}
& 0.139 {\scriptsize(0.001)}
& 0.086 {\scriptsize(0.007)}
& 0.075 {\scriptsize(0.005)} \\

\yellowcell
\cellcolor{white}
& PPI++
& 0.116 {\scriptsize(0.003)}
& 0.138 {\scriptsize(0.001)}
& 0.301 {\scriptsize(0.004)}
& 0.196 {\scriptsize(0.003)} \\

\redcell
\cellcolor{white}
\multirow{-5}{*}{$\widehat{g}=$ \textit{Mixture-of-Experts}}
& \textbf{OTROPE}
& \best{0.071} {\scriptsize(0.046)}
& \best{0.103} {\scriptsize(0.077)}
& \best{0.049} {\scriptsize(0.018)}
& \best{0.010} {\scriptsize(0.007)} \\

\midrule


\greencell
\cellcolor{white}
& IS
& 0.263 {\scriptsize(0.006)}
& 0.273 {\scriptsize(0.008)}
& 0.414 {\scriptsize(0.002)}
& 0.272 {\scriptsize(0.001)} \\

\greencell
\cellcolor{white}
& DR
& 0.123 {\scriptsize(0.006)}
& 0.125 {\scriptsize(0.007)}
& 0.130 {\scriptsize(0.001)}
& 0.093 {\scriptsize(0.002)} \\

\yellowcell
\cellcolor{white}
& PPI
& 0.097 {\scriptsize(0.006)}
& 0.122 {\scriptsize(0.007)}
& 0.124 {\scriptsize(0.002)}
& 0.108 {\scriptsize(0.002)} \\

\yellowcell
\cellcolor{white}
& PPI++
& 0.118 {\scriptsize(0.003)}
& 0.135 {\scriptsize(0.002)}
& 0.365 {\scriptsize(0.001)}
& 0.234 {\scriptsize(0.001)} \\

\redcell
\cellcolor{white}
\multirow{-5}{*}{$\widehat{g}=$ \textit{Majority Voting}}
& \textbf{OTROPE}
& \best{0.066} {\scriptsize(0.059)}
& \best{0.088} {\scriptsize(0.083)}
& \best{0.029} {\scriptsize(0.014)}
& \best{0.013} {\scriptsize(0.012)} \\

\midrule


\bluecell
\cellcolor{white}
\textit{Stronger LLM-as-a-Judge}
& DeepSeek-V3.1
& 0.164
& 0.159
& 0.153
& 0.228 \\

\bottomrule
\end{tabular}
\end{table*}

\textbf{Results.}
As shown in Table~\ref{tab:ot-chat}, OTROPE consistently achieves the lowest MAE across all datasets under different target policies  and proxy evaluators. 
The larger standard deviations observed on \textsc{RewardBench~2} arise because the behavior and target policies are constructed as pools of LLMs rather than from a single LLM, resulting in hybrid behavior and target distributions. 
Baseline estimators may achieve lower variance but suffer from persistent large bias, whereas our method reduces this bias and yields more accurate estimates. The variance of OTROPE is also affected by the regularization term $\lambda$ introduced later in Section~\ref{subsec:ablation}, which controls a trade-off between matching stability and accuracy.
Furthermore, OTROPE with small LLMs (all with <15B parameters, detailed in Appendix~\ref{asec:llms}) consistently outperforms all baselines and even surpasses the stronger standalone judge, DeepSeek-V3.1~\citep{liu2024deepseek} (with 671B parameters). This shows that \textit{OT-based correction} can enable \textit{ensembles of weaker evaluators} to match or \textit{exceed much stronger models}.

\begin{table*}[!t]
\caption{Ablation study under the Mixed policy setting on \textsc{RewardBench~2}, showing the effects of the correction term and proxy predictors across different labeled sample sizes $n$.
DM with majority voting is invariant to $n$ as as it does not use labeled samples; OT is identical across aggregation strategies as it does not use proxy predictions.} 
\label{tab:ablation}
\centering
\scriptsize
\setlength{\tabcolsep}{5pt}
\begin{tabular}{l |ccc |ccc}
\toprule
\multirow{2}{*}[-0.3ex]{\textbf{Estimator}} & 
\multicolumn{3}{c|}{Mixture-of-Experts} & \multicolumn{3}{c}{Majority Voting} \\
\cmidrule{2-4} \cmidrule{5-7} 
& $n=800$ & $n=1200$ & $n=1600$ & $n=800$ & $n=1200$ & $n=1600$  \\
\midrule

DM
& 0.252 {\scriptsize(0.009)}
& 0.252 {\scriptsize(0.007)}
& 0.256 {\scriptsize(0.006)}
& 0.296 {\scriptsize(0.000)} 
& 0.296 {\scriptsize(0.000)} 
& 0.296 {\scriptsize(0.000)} \\

OT
& 0.272 {\scriptsize(0.073)}
& 0.236 {\scriptsize(0.085)}
& 0.258 {\scriptsize(0.050)}
& 0.272 {\scriptsize(0.073)} 
& 0.236 {\scriptsize(0.085)} 
& 0.258 {\scriptsize(0.050)} \\

\redcell \textbf{OTROPE} 
& \best{0.144} {\scriptsize(0.048)}
& \best{0.114} {\scriptsize(0.036)}
& \best{0.071} {\scriptsize(0.046)}
& \best{0.099} {\scriptsize(0.083)}
& \best{0.117} {\scriptsize(0.067)}
& \best{0.066} {\scriptsize(0.059)}\\

\bottomrule
\end{tabular}
\end{table*}

\subsection{Additional Results}
\label{subsec:ablation}
\textbf{Ablation study. }We consider two ablations. First, we remove the correction term and use the DM estimator based only on proxy evaluations. Second, we remove the proxy evaluations and apply OT-based reweighting directly to the ground-truth evaluations, i.e.,
$\widehat{V}_{\mathrm{OT}}(\pi)= (1/n) \sum_{i=1}^n w_i^\ast \, g^\ast(x_i, y_i)$.
We conduct the ablation study on the Mixed policy in \textsc{RewardBench~2} and summarize the results in Table~\ref{tab:ablation}. 
The relatively large errors of the DM estimator indicate that proxy evaluations alone can be unreliable. 
The superior performance of OTROPE highlights the importance of the correction term, which mitigates proxy misspecification and improves robustness.
The OT estimator also performs worse than OTROPE because, when the proxy evaluator captures the dominant variation in the ground-truth evaluation, the residual $g^\ast-\widehat g$ is expected to vary more smoothly than $g^\ast$ itself.

\begin{wraptable}{r}{0.48\textwidth}
\caption{Stable performance with different $\lambda$ and sample size $n$.}
\label{tab:ot_lambda_sensitivity}
\centering
\scriptsize
\setlength{\tabcolsep}{5pt}
\begin{tabular}{l ccc }
\toprule
\multirow{2}{*}[-0.7ex]{$\lambda$} & 
\multicolumn{3}{c}{$n$} \\
\cmidrule{2-4}
& 800 & 1200 & 1600 \\
\midrule

0.05
& 0.080 {\scriptsize(0.025)}
& 0.058 {\scriptsize(0.028)}
& 0.054 {\scriptsize(0.015)}
\\

0.08
& 0.116 {\scriptsize(0.073)}
& 0.053 {\scriptsize(0.061)}
& 0.042 {\scriptsize(0.024)}
\\

0.10
& 0.086 {\scriptsize(0.060)}
& 0.062 {\scriptsize(0.025)}
& 0.049 {\scriptsize(0.018)}
\\

\bottomrule
\end{tabular}
\vspace{-1em}
\end{wraptable}

\textbf{Sensitivity analysis of $\lambda$. }In our method, $\lambda$ is the entropic regularization parameter in the Sinkhorn iterations. 
It controls the trade-off between matching accuracy and computational stability; see Appendix~\ref{app:ot_optimize} for optimization details. 
Smaller $\lambda$ gives more discriminative OT matching but is more computationally expensive, while larger $\lambda$ improves efficiency but may reduce accuracy by smoothing pairwise cost differences. 
We conduct a sensitivity analysis on \textsc{Chatbot Arena} using GPT-4 as the target policy with MoE aggregation. The results in Table~\ref{tab:ot_lambda_sensitivity} show that stable performance across different sample sizes for $\lambda \in [0.05,0.10]$.
Smaller values of $\lambda$ are preferred for more heterogeneous distributions, as they preserve finer pairwise cost structure and yield more accurate matching. 
For larger sample sizes, we increase $\lambda$ within the empirically stable range to improve computational efficiency.

\textbf{Sensitivity to the embedding representation.}
To assess whether OTROPE depends on a specific high-quality embedding model, we replace bge-m3 with a simple TF--IDF~\citep{salton1988term} bag-of-words vectorizer using 1024 features and up to 2-grams, while keeping the rest of the pipeline unchanged. As shown in Table~\ref{tab:tfidf_ablation}, OTROPE remains the best-performing estimator in three out of the four aggregation--target-policy combinations. These results suggest that OTROPE is reasonably robust to the choice of representation and remains effective even with a much simpler lexical embedding. Moreover, its performance is generally comparable to that obtained with bge-m3, as shown in Table~\ref{tab:ot-chat}.

\begin{table}[h]
\centering
\begin{minipage}[t]{0.56\textwidth}
\centering
\caption{Mean absolute error of the estimated target policy value on \textsc{Chatbot Arena} using TF--IDF embeddings.}
\label{tab:tfidf_ablation}
\scriptsize
\setlength{\tabcolsep}{3pt}
\begin{tabular}{lcc|cc}
\toprule
& \multicolumn{2}{c|}{Mixture-of-Experts} 
& \multicolumn{2}{c}{Majority Voting} \\
\cmidrule(lr){2-3} \cmidrule(lr){4-5}
Estimator & GPT-4 & GPT-3.5 Turbo & GPT-4 & GPT-3.5 Turbo \\
\midrule
IS     
& 0.438 (0.004) 
& 0.322 (0.003) 
& 0.438 (0.003) 
& 0.322 (0.003) \\

DR     
& 0.023 (0.011) 
& \textbf{0.006 (0.005)} 
& \textbf{0.105 (0.004)} 
& 0.036 (0.003) \\

PPI    
& 0.028 (0.011) 
& 0.023 (0.007) 
& 0.124 (0.002) 
& 0.108 (0.002) \\

PPI++  
& 0.276 (0.004) 
& 0.174 (0.003) 
& 0.365 (0.001) 
& 0.234 (0.001) \\

\redcell
OTROPE 
& \textbf{0.013 (0.009)} 
& \textbf{0.006 (0.004)} 
& 0.114 (0.009) 
& \textbf{0.007 (0.004)} \\
\bottomrule
\end{tabular}
\end{minipage}
\hfill
\begin{minipage}[t]{0.40\textwidth}
\centering
\caption{Mean absolute error on \textsc{Chatbot Arena} under GPT-3.5-Turbo, with $n=800$ labeled and $N=1080$ unlabeled samples.}
\label{tab:real_data_label_scarce}
\scriptsize
\setlength{\tabcolsep}{4pt}
\begin{tabular}{lcc}
\toprule
Estimator & Mixture-of-Experts & Majority Voting \\
\midrule
IS     
& 0.287 (0.011) 
& 0.287 (0.011) \\

DR     
& 0.077 (0.015) 
& 0.091 (0.016) \\

PPI    
& 0.092 (0.017) 
& 0.113 (0.016) \\

PPI++  
& 0.187 (0.016) 
& 0.234 (0.012) \\

\redcell
OTROPE 
& \textbf{0.045 (0.037)} 
& \textbf{0.083 (0.042)} \\
\bottomrule
\end{tabular}
\end{minipage}
\end{table}

\textbf{Evaluation in the label-scarce regime. } To evaluate OTROPE in a real-data setting closer to the label-scarce regime considered in our simulations, we conduct the \textsc{Chatbot Arena} experiment with GPT-3.5-Turbo as the target policy, using $n=800$ labeled behavior-policy samples and $N=1080$ unlabeled target-policy samples. All other experimental configurations and hyperparameters follow those described above. The results are reported in Table~\ref{tab:real_data_label_scarce}.
OTROPE achieves the lowest MAE under both aggregation schemes when $n<N$, indicating that the proposed OT-based correction remains effective in this label-scarce real-data setting.

\textbf{Sensitivity to PCA dimension and transport cost.}
We further examine the robustness of OTROPE to two implementation choices: the PCA truncation dimension and the transport cost function. The corresponding sensitivity analyses are provided in Appendices~\ref{app:pca_sensitivity} and~\ref{app:cost_sensitivity}, respectively.

\section{Conclusion and Limitation}
In this paper, we study LLM evaluation from an OPE perspective. We propose OTROPE, which uses optimal transport to reweight labeled residuals in a semantic embedding space. We prove consistency and convergence rates, and show that OTROPE consistently outperforms baselines on synthetic and real LLM evaluation tasks. Despite these advantages, OTROPE has several limitations. First, OT-based weighting can be computationally expensive; we use PCA for efficiency, but it may lose fine-grained information in dense embedding spaces. Second, our current formulation focuses on static input--output evaluation. Extending it to sequential evaluation settings, such as multi-turn interactions that can be modeled as bandits or Markov decision processes, is left for future work.

\begin{ack}
This work was supported in part by the National Science Foundation under Grant No. DMS-2401271 and by Thinking Machine Lab through the Tinker Research Grant.
We thank the area chair and anonymous reviewers for their helpful comments.
\end{ack}

\bibliography{citation}

@article{angelopoulos2023prediction,
  title={Prediction-powered inference},
  author={Angelopoulos, Anastasios N and Bates, Stephen and Fannjiang, Clara and Jordan, Michael I and Zrnic, Tijana},
  journal={Science},
  volume={382},
  number={6671},
  pages={669--674},
  year={2023},
  publisher={American Association for the Advancement of Science}
}

@article{angelopoulos2023ppi++,
  title={Ppi++: Efficient prediction-powered inference},
  author={Angelopoulos, Anastasios N and Duchi, John C and Zrnic, Tijana},
  journal={arXiv preprint arXiv:2311.01453},
  year={2023}
}

@inproceedings{dudik2011doubly,
  title={Doubly robust policy evaluation and learning},
  author={Dud{\'\i}k, Miroslav and Langford, John and Li, Lihong},
  booktitle={Proceedings of the 28th International Conference on International Conference on Machine Learning},
  pages={1097--1104},
  year={2011}
}

@inproceedings{cuturi2013sinkhorn,
 author = {Cuturi, Marco},
 booktitle = {Advances in Neural Information Processing Systems},
 editor = {C.J. Burges and L. Bottou and M. Welling and Z. Ghahramani and K.Q. Weinberger},
 pages = {},
 publisher = {Curran Associates, Inc.},
 title = {Sinkhorn Distances: Lightspeed Computation of Optimal Transport},
 url = {https://proceedings.neurips.cc/paper_files/paper/2013/file/af21d0c97db2e27e13572cbf59eb343d-Paper.pdf},
 volume = {26},
 year = {2013}
}

@article{fournier2015rate,
  title={On the rate of convergence in Wasserstein distance of the empirical measure},
  author={Fournier, Nicolas and Guillin, Arnaud},
  journal={Probability theory and related fields},
  volume={162},
  number={3},
  pages={707--738},
  year={2015},
  publisher={Springer}
}

@book{sarndal2003model,
  title={Model assisted survey sampling},
  author={S{\"a}rndal, Carl-Erik and Swensson, Bengt and Wretman, Jan},
  year={2003},
  publisher={Springer Science \& Business Media}
}

@article{cassel1976some,
  title={Some results on generalized difference estimation and generalized regression estimation for finite populations},
  author={Cassel, Claes M and S{\"a}rndal, Carl E and Wretman, Jan H},
  journal={Biometrika},
  volume={63},
  number={3},
  pages={615--620},
  year={1976},
  publisher={Oxford University Press}
}

@article{fisch2024stratified,
  title={Stratified prediction-powered inference for effective hybrid evaluation of language models},
  author={Fisch, Adam and Maynez, Joshua and Hofer, R and Dhingra, Bhuwan and Globerson, Amir and Cohen, William W},
  journal={Advances in Neural Information Processing Systems},
  volume={37},
  pages={111489--111514},
  year={2024}
}

@article{boyeau2024autoeval,
  title={Autoeval done right: Using synthetic data for model evaluation},
  author={Boyeau, Pierre and Angelopoulos, Anastasios N and Yosef, Nir and Malik, Jitendra and Jordan, Michael I},
  journal={arXiv preprint arXiv:2403.07008},
  year={2024}
}

@inproceedings{saad2024ares,
  title={Ares: An automated evaluation framework for retrieval-augmented generation systems},
  author={Saad-Falcon, Jon and Khattab, Omar and Potts, Christopher and Zaharia, Matei},
  booktitle={Proceedings of the 2024 Conference of the North American Chapter of the Association for Computational Linguistics: Human Language Technologies (Volume 1: Long Papers)},
  pages={338--354},
  year={2024}
}

@article{cowen2026multiple,
  title={Multiple-Prediction-Powered Inference},
  author={Cowen-Breen, Charlie and Agarwal, Alekh and Bates, Stephen and Cohen, William W and Eisenstein, Jacob and Globerson, Amir and Fisch, Adam},
  journal={arXiv preprint arXiv:2603.27414},
  year={2026}
}

@article{datta2025prediction,
  title={Prediction-powered inference with inverse probability weighting},
  author={Datta, Jyotishka and Polson, Nicholas G},
  journal={arXiv preprint arXiv:2508.10149},
  year={2025}
}

@inproceedings{xu2026doubly,
title={Doubly Robust Alignment for Large Language Models},
author={Erhan Xu and Kai Ye and Hongyi Zhou and Luhan Zhu and Francesco Quinzan and Chengchun Shi},
booktitle={The Thirty-ninth Annual Conference on Neural Information Processing Systems},
year={2026},
url={https://openreview.net/forum?id=HvklLrtyxK}
}

@article{precup2000eligibility,
  title={Eligibility traces for off-policy policy evaluation},
  author={Precup, Doina and Sutton, Richard S and Singh, Satinder},
  year={2000}
}

@book{sutton1998reinforcement,
  title={Reinforcement learning: An introduction},
  author={Sutton, Richard S and Barto, Andrew G and others},
  volume={1},
  number={1},
  year={1998},
  publisher={MIT press Cambridge}
}

@inproceedings{li2010contextual,
  title={A contextual-bandit approach to personalized news article recommendation},
  author={Li, Lihong and Chu, Wei and Langford, John and Schapire, Robert E},
  booktitle={Proceedings of the 19th international conference on World wide web},
  pages={661--670},
  year={2010}
}

@inproceedings{jiang2016doubly,
  title={Doubly robust off-policy value evaluation for reinforcement learning},
  author={Jiang, Nan and Li, Lihong},
  booktitle={International conference on machine learning},
  pages={652--661},
  year={2016},
  organization={PMLR}
}

@article{liu2018breaking,
  title={Breaking the curse of horizon: Infinite-horizon off-policy estimation},
  author={Liu, Qiang and Li, Lihong and Tang, Ziyang and Zhou, Dengyong},
  journal={Advances in neural information processing systems},
  volume={31},
  year={2018}
}

@inproceedings{thomas2016data,
  title={Data-efficient off-policy policy evaluation for reinforcement learning},
  author={Thomas, Philip and Brunskill, Emma},
  booktitle={International conference on machine learning},
  pages={2139--2148},
  year={2016},
  organization={PMLR}
}

@book{villani2009optimal,
  title={Optimal transport: old and new},
  author={Villani, C{\'e}dric and others},
  volume={338},
  year={2009},
  publisher={Springer}
}

@article{vaserstein1969markov,
  title={Markov processes over denumerable products of spaces, describing large systems of automata},
  author={Vaserstein, Leonid Nisonovich},
  journal={Problemy Peredachi Informatsii},
  volume={5},
  number={3},
  pages={64--72},
  year={1969},
  publisher={Russian Academy of Sciences, Branch of Informatics, Computer Equipment and~…}
}

@article{rubner2000earth,
  title={The earth mover's distance as a metric for image retrieval},
  author={Rubner, Yossi and Tomasi, Carlo and Guibas, Leonidas J},
  journal={International journal of computer vision},
  volume={40},
  number={2},
  pages={99--121},
  year={2000},
  publisher={Springer}
}

@article{melnyk2024distributional,
  title={Distributional preference alignment of llms via optimal transport},
  author={Melnyk, Igor and Mroueh, Youssef and Belgodere, Brian and Rigotti, Mattia and Nitsure, Apoorva and Yurochkin, Mikhail and Greenewald, Kristjan and Navratil, Jiri and Ross, Jarret},
  journal={Advances in Neural Information Processing Systems},
  volume={37},
  pages={104412--104442},
  year={2024}
}

@article{manole2024sharp,
  title={Sharp convergence rates for empirical optimal transport with smooth costs},
  author={Manole, Tudor and Niles-Weed, Jonathan},
  journal={The Annals of Applied Probability},
  volume={34},
  number={1B},
  pages={1108--1135},
  year={2024},
  publisher={Institute of Mathematical Statistics}
}

@article{gangbo1996geometry,
  title={The geometry of optimal transportation},
  author={Gangbo, Wilfrid and McCann, Robert J},
  year={1996}
}

@inproceedings{hundrieser2024empirical,
  title={Empirical optimal transport between different measures adapts to lower complexity},
  author={Hundrieser, Shayan and Staudt, Thomas and Munk, Axel},
  booktitle={Annales de l'Institut Henri Poincare (B) Probabilites et statistiques},
  volume={60},
  number={2},
  pages={824--846},
  year={2024},
  organization={Institut Henri Poincar{\'e}}
}

@article{weed2019sharp,
  title={Sharp asymptotic and finite-sample rates of convergence of empirical measures in Wasserstein distance},
  author={Weed, Jonathan and Bach, Francis},
  journal={Bernoulli},
  volume={25},
  number={4A},
  pages={2620--2648},
  year={2019},
  publisher={JSTOR}
}

@inproceedings{malik2026rewardbench,
title={RewardBench 2: Advancing Reward Model Evaluation},
author={Saumya Malik and Valentina Pyatkin and Sander Land and Jacob Morrison and Noah A. Smith and Hannaneh Hajishirzi and Nathan Lambert},
booktitle={The Fourteenth International Conference on Learning Representations},
year={2026},
url={https://openreview.net/forum?id=fb0G86Dewb}
}

@book{sugiyama2012density,
  title={Density ratio estimation in machine learning},
  author={Sugiyama, Masashi and Suzuki, Taiji and Kanamori, Takafumi},
  year={2012},
  publisher={Cambridge University Press}
}

@article{chen2024bge,
  title={Bge m3-embedding: Multi-lingual, multi-functionality, multi-granularity text embeddings through self-knowledge distillation},
  author={Chen, Jianlv and Xiao, Shitao and Zhang, Peitian and Luo, Kun and Lian, Defu and Liu, Zheng},
  journal={arXiv preprint arXiv:2402.03216},
  volume={4},
  number={5},
  year={2024}
}

@article{zheng2023judging,
  title={Judging llm-as-a-judge with mt-bench and chatbot arena},
  author={Zheng, Lianmin and Chiang, Wei-Lin and Sheng, Ying and Zhuang, Siyuan and Wu, Zhanghao and Zhuang, Yonghao and Lin, Zi and Li, Zhuohan and Li, Dacheng and Xing, Eric and others},
  journal={Advances in neural information processing systems},
  volume={36},
  pages={46595--46623},
  year={2023}
}

@inproceedings{chen2024humans,
  title={Humans or LLMs as the judge? a study on judgement bias},
  author={Chen, Guiming Hardy and Chen, Shunian and Liu, Ziche and Jiang, Feng and Wang, Benyou},
  booktitle={Proceedings of the 2024 Conference on Empirical Methods in Natural Language Processing},
  pages={8301--8327},
  year={2024}
}

@article{ye2024justice,
  title={Justice or prejudice? quantifying biases in llm-as-a-judge},
  author={Ye, Jiayi and Wang, Yanbo and Huang, Yue and Chen, Dongping and Zhang, Qihui and Moniz, Nuno and Gao, Tian and Geyer, Werner and Huang, Chao and Chen, Pin-Yu and others},
  journal={arXiv preprint arXiv:2410.02736},
  year={2024}
}

@article{yang2023gpteval,
  title={Gpteval: Nlg evaluation using gpt-4 with better human alignment},
  author={Yang, Liu and Dan, Iter and Xu, Yichong and Shuohang, Wang and Xu, Ruochen and Chenguang, Zhu},
  journal={arXiv preprint arXiv: 2303.16634},
  year={2023}
}

@inproceedings{fu2024gptscore,
  title={Gptscore: Evaluate as you desire},
  author={Fu, Jinlan and Ng, See Kiong and Jiang, Zhengbao and Liu, Pengfei},
  booktitle={Proceedings of the 2024 Conference of the North American Chapter of the Association for Computational Linguistics: Human Language Technologies (Volume 1: Long Papers)},
  pages={6556--6576},
  year={2024}
}

@article{courty2016optimal,
  title={Optimal transport for domain adaptation},
  author={Courty, Nicolas and Flamary, R{\'e}mi and Tuia, Devis and Rakotomamonjy, Alain},
  journal={IEEE transactions on pattern analysis and machine intelligence},
  volume={39},
  number={9},
  pages={1853--1865},
  year={2016},
  publisher={IEEE}
}

@inproceedings{redko2017theoretical,
  title={Theoretical analysis of domain adaptation with optimal transport},
  author={Redko, Ievgen and Habrard, Amaury and Sebban, Marc},
  booktitle={Joint European Conference on Machine Learning and Knowledge Discovery in Databases},
  pages={737--753},
  year={2017},
  organization={Springer}
}

@article{schroeder2024can,
  title={Can you trust llm judgments? reliability of llm-as-a-judge},
  author={Schroeder, Kayla and Wood-Doughty, Zach},
  journal={arXiv preprint arXiv:2412.12509},
  year={2024}
}

@article{reygner2022reweighting,
  title={Reweighting samples under covariate shift using a Wasserstein distance criterion},
  author={Reygner, Julien and Touboul, Adrien},
  journal={Electronic Journal of Statistics},
  volume={16},
  number={1},
  pages={3278--3314},
  year={2022}
}

@article{shen2023wasserstein,
  title={Wasserstein distributionally robust policy evaluation and learning for contextual bandits},
  author={Shen, Yi and Xu, Pan and Zavlanos, Michael M},
  journal={arXiv preprint arXiv:2309.08748},
  year={2023}
}

@article{liu2024deepseek,
  title={Deepseek-v3 technical report},
  author={Liu, Aixin and Feng, Bei and Xue, Bing and Wang, Bingxuan and Wu, Bochao and Lu, Chengda and Zhao, Chenggang and Deng, Chengqi and Zhang, Chenyu and Ruan, Chong and others},
  journal={arXiv preprint arXiv:2412.19437},
  year={2024}
}

@article{huang2025pluralistic,
  title={Pluralistic Off-policy Evaluation and Alignment},
  author={Huang, Chengkai and Wu, Junda and Xie, Zhouhang and Xia, Yu and Wang, Rui and Yu, Tong and Mitra, Subrata and McAuley, Julian and Yao, Lina},
  journal={arXiv preprint arXiv:2509.19333},
  year={2025}
}

@article{bhargava2024off,
  title={Off-policy evaluation from logged human feedback},
  author={Bhargava, Aniruddha and Jain, Lalit and Kveton, Branislav and Liu, Ge and Mukherjee, Subhojyoti},
  journal={arXiv preprint arXiv:2406.10030},
  year={2024}
}

@article{wu2024ocean,
  title={Ocean: Offline chain-of-thought evaluation and alignment in large language models},
  author={Wu, Junda and Li, Xintong and Wang, Ruoyu and Xia, Yu and Xiong, Yuxin and Wang, Jianing and Yu, Tong and Chen, Xiang and Kveton, Branislav and Yao, Lina and others},
  journal={arXiv preprint arXiv:2410.23703},
  year={2024}
}

@book{rachev1998mass,
  title={Mass Transportation Problems: Volume I: Theory},
  author={Rachev, Svetlozar T and R{\"u}schendorf, Ludger},
  year={1998},
  publisher={Springer}
}

@article{kantorovich1958space,
  author = {Kantorovich, Leonid and  Rubinstein, Gennady S.},
  title = {On a space of totally additive functions},
  journal = {Vestnik Leningrad. Univ},
  pages = {52--59},
  volume = {13},
  year = {1958}
}

@article{rakotomamonjy2022optimal,
  title={Optimal transport for conditional domain matching and label shift},
  author={Rakotomamonjy, Alain and Flamary, R{\'e}mi and Gasso, Gilles and Alaya, M El and Berar, Maxime and Courty, Nicolas},
  journal={Machine Learning},
  volume={111},
  number={5},
  pages={1651--1670},
  year={2022},
  publisher={Springer}
}

@book{peyre2019computational,
  title={Computational optimal transport: With applications to data science},
  author={Peyr{\'e}, Gabriel and Cuturi, Marco},
  year={2019},
  publisher={Now Foundations and Trends}
}

@inproceedings{turrisi2022multi,
  title={Multi-source domain adaptation via weighted joint distributions optimal transport},
  author={Turrisi, Rosanna and Flamary, R{\'e}mi and Rakotomamonjy, Alain and Pontil, Massimiliano},
  booktitle={Uncertainty in artificial intelligence},
  pages={1970--1980},
  year={2022},
  organization={PMLR}
}

@article{abdi2010principal,
  title={Principal component analysis},
  author={Abdi, Herv{\'e} and Williams, Lynne J},
  journal={Wiley interdisciplinary reviews: computational statistics},
  volume={2},
  number={4},
  pages={433--459},
  year={2010},
  publisher={Wiley Online Library}
}

@article{hotelling1933analysis,
  title={Analysis of a complex of statistical variables into principal components.},
  author={Hotelling, Harold},
  journal={Journal of educational psychology},
  volume={24},
  number={6},
  pages={417},
  year={1933},
  publisher={Warwick \& York}
}

@article{salton1988term,
  title={Term-weighting approaches in automatic text retrieval},
  author={Salton, Gerard and Buckley, Christopher},
  journal={Information processing \& management},
  volume={24},
  number={5},
  pages={513--523},
  year={1988},
  publisher={Elsevier}
}

@article{cai2021deep,
  title={Deep jump learning for off-policy evaluation in continuous treatment settings},
  author={Cai, Hengrui and Shi, Chengchun and Song, Rui and Lu, Wenbin},
  journal={Advances in Neural Information Processing Systems},
  volume={34},
  pages={15285--15300},
  year={2021}
}

@inproceedings{cai2020validation,
  title={On validation and planning of an optimal decision rule with application in healthcare studies},
  author={Cai, Hengrui and Lu, Wenbin and Song, Rui},
  booktitle={International Conference on Machine Learning},
  pages={1262--1270},
  year={2020},
  organization={PMLR}
}

@article{shen2024doubly,
  title={Doubly robust interval estimation for optimal policy evaluation in online learning},
  author={Shen, Ye and Cai, Hengrui and Song, Rui},
  journal={Journal of the American Statistical Association},
  volume={119},
  number={548},
  pages={2811--2821},
  year={2024},
  publisher={Taylor \& Francis}
}
\bibliographystyle{apalike}

\newpage
\appendix

\counterwithin{equation}{section}
\counterwithin{table}{section}
\counterwithin{figure}{section}

\section*{Appendix}
The appendix contains additional related work, theoretical analysis, experimental details, and supplementary results. Appendix~\ref{app:related_work} provides a more comprehensive review of OPE, PPI, and OT. Appendix~\ref{app:theoretical_analysis} presents theoretical results on the failure modes of PPI and DR, as well as the consistency, convergence rate, and double robustness of the proposed OTROPE estimator. Appendix~\ref{app:imple} provides more details on the training procedure and implementation. 
Appendix~\ref{app:exp} describes the data generation process, baseline estimators, LLMs used for evaluation, and hyperparameter choices. Finally, Appendix~\ref{app:more_results} reports additional experimental results.

\section{Related Work}
\label{app:related_work}

\paragraph{Off-policy Evaluation.}
Off-policy evaluation (OPE)~\citep{sutton1998reinforcement} aims to estimate the value of a target policy prior to deployment, using historical data collected under a different behavior policy. This strategy is particularly important in domains where online evaluation is expensive, risky, or potentially harmful~\citep{shen2024doubly}. Standard approaches to OPE include importance sampling~\citep{precup2000eligibility,liu2018breaking}, the direct method (DM)~\citep{li2010contextual}, and doubly robust (DR) estimators~\citep{dudik2011doubly,jiang2016doubly,thomas2016data,cai2020validation,cai2021deep}. Several recent works have adapted OPE to LLMs, but they address problems different from ours. \citet{bhargava2024off} study OPE from logged human feedback, with a focus on evaluating preference-ranking quality rather than raw generated responses. \citet{wu2024ocean,huang2025pluralistic,xu2026doubly} use OPE-inspired estimators primarily for alignment training, where the goal is to construct optimization objectives rather than to estimate the evaluation score of a target model. In contrast, we focus on response-level LLM evaluation under model shift, where the evaluation objects are high-dimensional text responses whose distributions shift across models.

\paragraph{Prediction-Powered Inference.} Prediction-Powered Inference (PPI)~\citep{angelopoulos2023prediction} is a class of semi-supervised methods that uses black-box model predictions as proxy variables to improve the efficiency and validity of classical statistical inference, closely related to difference estimators~\citep{cassel1976some,sarndal2003model}. Several extensions have been proposed within this framework. PPI++~\citep{angelopoulos2023ppi++}, for example, introduces a weighting parameter on autorater predictions to improve efficiency, while StratPPI~\citep{fisch2024stratified} considers settings where autorater performance varies across subpopulations or conditional distributions of the target data. PPI-type methods have also been applied to LLM evaluation~\citep{boyeau2024autoeval,saad2024ares,fisch2024stratified,cowen2026multiple}. However, these approaches typically assume that labeled and unlabeled responses are generated by the same model, and thus do not address settings where new models are evaluated using labeled historical responses from earlier models. Some prior works, including the original PPI literature, has considered distribution shift~\citep{angelopoulos2023prediction,datta2025prediction}, but only utilize raw importance weight.

\paragraph{Optimal Transport.} 
Optimal Transport (OT)~\citep{rachev1998mass,villani2009optimal} provides a mathematical framework for comparing two probability distributions by seeking the most cost-efficient way to transport mass from one distribution to another, with transportation costs specified through an underlying ground metric. The resulting Wasserstein distance~\citep{vaserstein1969markov}, also referred to as the Earth Mover's Distance~\citep{rubner2000earth}, or the Kantorovich--Rubinstein distance~\citep{kantorovich1958space}, has played a central role in probability and statistics, and has increasingly become a powerful tool in modern machine learning and data science~\citep{peyre2019computational}. Recently, OT has been increasingly used to address distributional discrepancies in machine learning, including covariate shift~\citep{reygner2022reweighting} and domain adaptation~\citep{courty2016optimal,redko2017theoretical,rakotomamonjy2022optimal,turrisi2022multi}. Although these works also leverage OT-based discrepancies or reweighting strategies, their objectives differ substantially from ours: they primarily focus on distribution alignment for supervised prediction, whereas our goal is OPE under distribution shift for LLMs.

\section{Theoretical Analysis}
\label{app:theoretical_analysis}

\subsection{Proof of Proposition~\ref{prop:dr_bad}}
\label{app:proof_prop_bad}
\begin{proof}
Let $T_n=\frac1n\sum_{i=1}^n \rho(z_i)$ and $R_n=\frac1n\sum_{i=1}^n \rho(z_i)r(z_i)$. 
Then $\widehat V_{\rm DR}(\pi)=\widehat V_{\rm DM}(\pi)+R_n$.
\paragraph{Bias lower bound.}
Consider the event $\mathcal E_1=\{T_n\le a_n\}$.
By assumption, \(\mathbb P(\mathcal E_1)\ge 1-d_n\). On this event $\mathcal E_1$,
using \(\rho(z_i)\ge 0\) and \(|r(z_i)|\le C_r\), we have
\[
|R_n| = \left|\frac1n\sum_{i=1}^n \rho(z_i)r(z_i) \right| \le
\frac1n\sum_{i=1}^n \left|\rho(z_i)r(z_i)\right|
\le \frac1n\sum_{i=1}^n \rho(z_i) C_r = C_r T_n \le C_r a_n .
\]
Therefore, with probability at least \(1-d_n\), $|R_n|\le C_r a_n$.

Denote the event $\mathcal E_2 = \left\{ \left|\widehat V_{\rm DM}(\pi) -V^\ast(\pi)\right|\ge b_n \right\}$.
By assumption, \(\mathbb P(\mathcal E_2)\ge 1-d_n\). 
There is
\begin{equation*}
\left| \widehat V_{\rm DR}(\pi)-V^\ast(\pi) \right| =
\left| \widehat V_{\rm DM}(\pi)+R_n-V^\ast(\pi) \right| \ge 
\left| \widehat V_{\rm DM}(\pi)-V^\ast(\pi) \right| - |R_n|,    
\end{equation*}
where the last inequality follows from the reverse triangle inequality.

On the event \(\mathcal E_1\cap \mathcal E_2\), 
\[
\left| \widehat V_{\rm DR}(\pi)-V^\ast(\pi) \right| \ge b_n-C_r a_n.
\]

Finally, by the union bound,
\[
\mathbb P(\mathcal E_1\cap \mathcal E_2) \ge
1-\mathbb P(\mathcal E_1^c)-\mathbb P(\mathcal E_2^c) \ge
1-d_n-d_n = 1-2d_n.
\]
Hence, with probability at least \(1-2d_n\),
\[
\left|
\widehat V_{\rm DR}(\pi)-V^\ast(\pi)
\right|
\ge
b_n-C_r a_n.
\]

\paragraph{Variance lower bound.}
We first lower bound the variance of \(R_n\). Since \(C_r > r(z)\ge c_r>0\),
\[
R_n = \frac1n\sum_{i=1}^n \rho(z_i)r(z_i) \ge c_r T_n,
\qquad
R_n \le C_r T_n.
\]
Therefore,
\[
\mathbb E[R_n^2] \ge c_r^2 \mathbb E[T_n^2],
\qquad
(\mathbb E[R_n])^2 \le C_r^2 (\mathbb E[T_n])^2.
\]
Since \(\mathbb E[T_n]=1\),
\[
\operatorname{Var}(R_n) = \mathbb E[R_n^2]-(\mathbb E[R_n])^2
\ge c_r^2 \mathbb E[T_n^2]-C_r^2.
\]
It remains to lower bound \(\mathbb E[T_n^2]\).
By assumption, \(\mathbb P(\mathcal{E}_1^c)\le d_n\).
We have
\[
1 = \mathbb E[T_n] = 
\mathbb E[T_n \mathbf{1}_{\mathcal{E}_1}] + \mathbb E[T_n \mathbf{1}_{\mathcal{E}_1^c}] \le
a_n + \mathbb E[T_n \mathbf{1}_{\mathcal{E}_1^c}],
\]
which implies
\[
\mathbb E[T_n \mathbf{1}_{\mathcal{E}_1^c}] \ge 1-a_n.
\]
By Cauchy--Schwarz inequality,
\[
\mathbb E[T_n \mathbf{1}_{\mathcal{E}_1^c}] \le
\{\mathbb E[T_n^2]\}^{1/2} \{\mathbb P(\mathcal{E}_1^c)\}^{1/2} \le
\{\mathbb E[T_n^2]\}^{1/2} d_n^{1/2}.
\]
Therefore,
\[
\mathbb E[T_n^2] \ge \frac{(1-a_n)^2}{d_n}.
\]
Combining the above bounds gives
\[
\operatorname{Var}(R_n) \ge c_r^2\frac{(1-a_n)^2}{d_n} - C_r^2.
\]
Finally,
\[
\operatorname{Var}\{\widehat V_{\rm DR}(\pi)\} =
\operatorname{Var}(\widehat V_{\rm DM}(\pi)) + \operatorname{Var}(R_n) + 2\operatorname{Cov}(\widehat V_{\rm DM}(\pi),R_n).
\]
Using
\[
|\operatorname{Cov}(\widehat V_{\rm DM},R_n)| \le \kappa \operatorname{Var}(R_n),
\]
we obtain
\[
\operatorname{Var}\{\widehat V_{\rm DR}(\pi)\} \ge (1-2\kappa)\operatorname{Var}(R_n).
\]
Substituting the lower bound for \(\operatorname{Var}(R_n)\) completes the proof.
\end{proof}

\subsection{Proof of Lemma~\ref{lemma}}
\label{app:proof_lemma}

The support of $e_{\#Q}$ is defined as: $e(z) \in \operatorname{supp}(e_{\#Q})$ if and only if $e_\#Q(B(e(z),\varepsilon)) > 0$ for all $\varepsilon > 0$, where $B(e(z),\varepsilon)$ denotes the open ball centered at $e(z)$ under the embedding space and where $e_\#Q(\cdot)$ denotes the probability mass in distribution $e_\#Q$.

\begin{proof}
Let $S\coloneq\operatorname{supp}(e_\#Q)$, which is compact by Assumption~\ref{ass:bound_emb}. For any $\varepsilon>0$.

Since $S$ is compact, there exist finitely many points $u_1,\dots,u_m\in S$ such that
\[
S\subseteq \bigcup_{k=1}^m B(u_k,\varepsilon),
\]
where $B(u,\varepsilon)$ denotes the ball centered at $u$ with radius $\varepsilon$ under the embedding-induced distance, i.e.,
\[
B(u,\varepsilon)
=
\{ u' : \|u^\prime - u\| < \varepsilon \}.
\]
Because $u_k\in \operatorname{supp}(e_\#Q)\subseteq \operatorname{supp}(e_\#P)$, each ball $B(u_k,\varepsilon)$ has strictly positive probability $e_\#P(B(u_k,\varepsilon))>0$, $k=1,\dots,m$. Define the event
\[
A_{n,k} \coloneq \bigcup_{i=1}^n \{ e(z_i) \in B(u_k,\varepsilon) \}.
\]
Then 
\[
\mathbb P(A_{n,k}^c)
=
\bigl(1-{P}(B(u_k,\varepsilon))\bigr)^n.
\]
Since $\mathbb P(B(u_k,\varepsilon))>0$, we have
\[
\sum_{n=1}^\infty \mathbb P(A_{n,k}^c)<\infty.
\]
for each $k$. Hence, by the Borel--Cantelli lemma,
\[
\mathbb P(\limsup_{n\to\infty} A_{n,k}^c\ )=0,
\]
This implies that $A_{n,k}^c$ occurs only finitely often almost surely, i.e.,
there exists a finite $N_k$ such that for all $n \ge N_k$, $A_{n,k}$ happens.
Let
\[
N_\varepsilon\coloneq\max_{1\le k\le m} N_k.
\]
Then for every $n\ge N_\varepsilon$, each ball $B(u_k,\varepsilon)$ contains at least one sample point among $e(z_1),\dots,e(z_n)$. For each $k=1,\dots,m$, choose one index $i_k \in\{1,\dots,n\}$ such that
\[
e(z_{i_k})\in B(u_k,\varepsilon).
\]
Next, construct a measurable partition $C_1,\dots,C_m$ of $S$ such that
\[
C_k\subseteq B(u_k,\varepsilon),\qquad \bigcup_{k=1}^m C_k=S.
\]
Define
\[
\mu_{n,\varepsilon}
\coloneq
\sum_{k=1}^m e_\#Q(C_k)\,\delta_{e(z_{i_k})},
\]
where $e_\#Q(C_k)$ denotes the probability mass of the set $C_k$ under the distribution $e_\#Q$.
Since each $i_k\in\{1,\dots,n\}$ and the coefficients are nonnegative and sum to one, we have $\mu_{n,\varepsilon}\in \mathcal M_n$.
Consider the coupling that sends all mass in $C_k$ to the point $e(z_{i_k})$. Then we have
\[
W(\mu_{n,\varepsilon},e_\#Q)
\le
\sum_{k=1}^m \int_{C_k} \|u-e(z_{i_k})\|\,\mathrm{d} e_\#Q(u).
\]
For any $u\in C_k$, since $C_k\subseteq B(u_k,\varepsilon)$ and $e(z_{i_k})\in B(u_k,\varepsilon)$,
\[
\|u-e(z_{i_k})\|
\le
\|u-u_k\|+\|u_k-\boldsymbol{e}(z_{i_k})\| < 2\varepsilon.
\]
Therefore,
\[
W(\mu_{n,\varepsilon},e_\#Q)
\le
\sum_{k=1}^m \int_{C_k} 2\varepsilon\, \mathrm{d} Q(u)
= 2\varepsilon \sum_{k=1}^m Q(C_k) = 2\varepsilon.
\]
Since $\mu_{n,\varepsilon}\in \mathcal M_n$, it follows that
\[
\inf_{\mu\in\mathcal M_n} W(\mu,e_\#Q)\le 2\varepsilon
\]
for all $n\ge N_\varepsilon$. Therefore, we derive
\[
\inf_{\mu\in\mathcal M_n} W(\mu,e_\#Q)\xrightarrow{a.s.}0.
\]
\end{proof}

\subsection{Proof of Theorem~\ref{the:unbias}}
\label{app:proof_unbias}

\begin{proof}
$r(z) = g^\ast(z) - \widehat{g}(z)$. We denote the reweighted distribution induced by OT (Eq.~\eqref{eq:OT_weights}) as
\[
e_\#P_{n,\boldsymbol w^\ast} = \sum_{i=1}^n w_i \delta_{e(z_i)}.
\]

We first show that $W(e_\#P_{n,\boldsymbol w^\ast}, e_\#Q) \to 0$. By the triangle inequality of the Wasserstein distance,
\begin{equation}
\label{equ1}
W(e_\#P_{n,\boldsymbol w^\ast}, e_\#Q) \le W(e_\#P_{n,\boldsymbol w^\ast}, e_\# Q_N) + W(e_\#Q_N, e_\#Q).
\end{equation}
Since $e_\#P_{n,\boldsymbol w^\ast}$ is defined in terms of the optimal solution
\[
\boldsymbol{w}^\ast \in \arg\min_{\mu\in\mathcal M_n} W(\mu,e_\# Q_N),
\]
we have
\begin{equation}
\label{equ2}
W(e_\#P_{n,\boldsymbol w^\ast}, e_\# Q_N) = \inf_{\mu\in\mathcal M_n} W(\mu,e_\# Q_N).   
\end{equation}

For any $\mu \in \mathcal M_n$, applying the triangle inequality again yields
\begin{equation*}
W(\mu,e_\# Q_N) \le W(\mu,e_\#Q) + W(e_\#Q,e_\# Q_N).    
\end{equation*}
Taking the infimum over $\mu \in \mathcal M_n$, we obtain
\begin{equation}
\label{equ3}
\inf_{\mu\in\mathcal M_n} W(\mu,e_\# Q_N) \le \inf_{\mu\in\mathcal M_n} W(\mu,e_\#Q) + W(e_\#Q,e_\# Q_N).
\end{equation}

Combining~\eqref{equ1},~\eqref{equ2} and~\eqref{equ3}, we obtain
\[
W(e_\#P_{n,\boldsymbol w^\ast}, e_\#Q)
\le
\inf_{\mu\in\mathcal M_n} W(\mu,e_\#Q) + 2\,W(e_\# Q_N,e_\#Q).
\]

Under Assumption~\ref{ass:bound_emb} that $\operatorname{supp}(e_\#Q)$ is compact, which must be bounded, we invoke Theorem 1 of \citet{fournier2015rate}, for $p=1$ and $d \ge 3$,
\begin{equation*}
\mathbb{E}[W(e_\#Q_N,e_\#Q)] \le C N^{-1/d},
\end{equation*}
where $C$ is a constant related to $p$. We now apply Markov's inequality. For any $M>0$,
\begin{equation*}
\mathbb{P}\!\left(W(e_\#Q_N,e_\#Q)>M N^{-1/d}\right) \le \frac{\mathbb{E}[W(e_\#Q_N,e_\#Q)]}{M N^{-1/d}}
\le \frac{C N^{-1/d}}{M N^{-1/d}} = \frac{C}{M}.
\end{equation*}
Hence, for any $\varepsilon>0$, choosing $M=C/\varepsilon$ yields
\begin{equation*}
\mathbb{P}\!\left(W(e_\#Q_N,e_\#Q)>M N^{-1/d}\right)\le \varepsilon,
\end{equation*}
which is exactly the definition of
\begin{equation*}
W(e_\#Q_N,e_\#Q)=O_p(N^{-1/d}).
\end{equation*}
Thus we have
\[
W(e_\#Q_N,e_\#Q)\xrightarrow{p}0.
\] By Lemma~\ref{lemma}, $\inf_{\mu\in\mathcal M_n} W(\mu,e_\#Q)\xrightarrow{a.s.}0$. Hence,
\[
W(e_\#P_{n,\boldsymbol w^\ast}, e_\#Q) \xrightarrow{p} 0.
\]

We now return to OTROPE:
\[
\widehat{V}_{\mathrm{OTROPE}}(\pi) =
\frac{1}{N}\sum_{j=1}^{N} \widehat{g}(\widetilde{z}_j)
+ \sum_{i=1}^n w_i^\ast r(z_i),
\]
which can be rewritten as
\[
\widehat{V}_{\mathrm{OTROPE}}(\pi) = \frac{1}{N}\sum_{j=1}^{N} \widehat{g}(z) +
\int r(z)\, \mathrm{d}P_{n,\boldsymbol w^\ast}(z) = \frac{1}{N}\sum_{j=1}^{N} \widehat{g}(\widetilde{z}_j) +
\int \widetilde{r}(e(z))\, \mathrm{d} e_\#P_{n,\boldsymbol w^\ast}(e(z)).
\]
Since $W(e_\#P_{n,\boldsymbol w^\ast}, e_\#Q) \xrightarrow{p} 0$, and $\widetilde{r}$ is $L_r$-Lipschitz continuous in the embedding space under Assumption~\ref{ass:lr}, the Kantorovich--Rubinstein duality implies that
\[
\left| \int \widetilde{r}(e(z))\, \mathrm{d}e_\#P_{n,\boldsymbol w^\ast}(e(z)) - \int \widetilde{r}(e(z))\, \mathrm{d} e_\#Q(e(z)) \right|
\le L_r\, W(e_\#P_{n,\boldsymbol w^\ast}, e_\#Q) \xrightarrow{p} 0.
\]
Moreover, for the first term of OTROPE, according to Weak Law of Large Numbers, we have
\[
\frac{1}{N}\sum_{j=1}^{N} \widehat{g}(\widetilde{z}_j) \xrightarrow{p} \mathbb{E}_{z\sim Q}[\widehat{g}(z)].
\]
Therefore,
\[
\widehat{V}_{\mathrm{OTROPE}}(\pi)
\xrightarrow{p} 
\mathbb{E}_{z\sim Q}[\widehat{g}(z)] + \mathbb{E}_{z\sim Q}[r(z)] = \mathbb{E}_{z\sim Q}[g^\ast(z)] = V^\ast.
\]
\end{proof}

\subsection{Proof of Theorem~\ref{the:ot_rate}}
\label{app:proof_rate}

\begin{proof}

As we just proofed in Appendix~\ref{app:proof_unbias}, 
\[
W(e_\#P_{n,\boldsymbol w^\ast},e_\#Q)
\le
\inf_{\mu\in\mathcal M_n} W(\mu,e_\#Q)
+
2W(e_\#Q_N,e_\#Q).
\]
We now control the first term with the additional assumption. For any $u\sim e_\#Q$, define its nearest neighbor among the source embeddings by
\[
R_n(u)\coloneq \min_{1\le i\le n}\|u-e(z_i)\|.
\]
Using the nearest-neighbor projection of $e_\#Q$ onto $\{e(z_i)\}_{i=1}^n$, there exists some $\mu_n\in\mathcal M_n$ such that
\[
W(\mu_n,e_\#Q)
\le
\mathbb E_{u\sim e_\#Q}[R_n(u)\vert z_1,\dots,z_n].
\]
This follows by constructing a transport plan that maps each point $u\sim e_\#Q$ to its nearest neighbor among $\{e(z_i)\}_{i=1}^n$, so that the transport cost is exactly the expected nearest-neighbor distance.
Hence,
\[
\inf_{\mu\in\mathcal M_n} W(\mu,e_\#Q)
\le
\mathbb E_{u\sim e_\#Q}[R_n(u)\vert z_1,\dots,z_n].
\]
Assumption~\ref{ass:ball} implies
\begin{equation}
\label{equ4}
\mathbb P(R_n(u)>\eta) = \left(1-e_\#P(B(u,\eta))\right)^n \le
(1-c\eta^{d_0})^n \le \exp(-cn\eta^{d_0})
\end{equation}
for all sufficiently small $r$. Since $\operatorname{supp}(e_\#Q)$ is compact, it has finite diameter. Hence, there exists a constant $D<\infty$ such that $R_n(u) \le D$ almost surely.
Using the tail-integral representation for nonnegative random variables, we have
\[
\mathbb E[R_n(u)] =
\int_0^\infty \mathbb P(R_n(u)>\eta)\, =
\int_0^D \mathbb P(R_n(u)>\eta)\mathrm{d}\eta.
\]
By~\eqref{equ4},
\[
\mathbb E[R_n(u)] \le \int_0^D \exp(-c n \eta^{d_0})\mathrm{d}\eta \le
\int_0^\infty \exp(-c n \eta^{d_0})\mathrm{d}\eta \le
\frac{1}{d_0}c^{-1/d_0} \Gamma\!\left(\frac{1}{d_0}\right) n^{-1/d_0}.
\]
Therefore,
\[
\mathbb E[R_n(u)] = O(n^{-1/d_0}).
\]
By Markov's inequality,
\[
\inf_{\mu\in\mathcal M_n} W(\mu,e_\#Q)
=
O_p(n^{-1/d_0}).
\]
We also have shown $W(e_\#Q_N,e_\#Q)=O_p(N^{-1/d})$ in Appendix~\ref{app:proof_unbias}.
Combining the two bounds gives
\[
W(e_\#P_{n,\boldsymbol w^\ast},e_\#Q) = O_p(n^{-1/d_0}+N^{-1/d}).
\]
We now decompose OTROPE:
\begin{equation}
\label{equ5}
\widehat{V}_{\mathrm{OTROPE}}(\pi)-V^\ast(\pi) =
\left[\frac1N\sum_{j=1}^N \widehat g(\widetilde z_j) -
\mathbb E_{z\sim Q}[\widehat g(z)]\right] +
\left[\sum_{i=1}^n w_i^\ast r(z_i) - \mathbb E_{z\sim Q}[r(z)]\right].    
\end{equation}
Since $\text{Var}_{\widetilde{z}\sim Q}(\widehat{g}(\widetilde{z}))$ is bounded, by Chebyshev's inequality, the first term 
\[
\frac1N\sum_{j=1}^N \widehat g(\widetilde z_j) -
\mathbb E_{z\sim Q}[\widehat g(z)] = O_p(N^{-1/2}).
\]
For the second term, 
\[
\left| \int r(z)\,\mathrm{d}P_{n,\boldsymbol w^\ast}(z) -
\int r(z)\,\mathrm{d}Q(z) \right| \le
L_r W(e_\#P_{n,\boldsymbol w^\ast},e_\#Q).
\]
Thus,
\[
\sum_{i=1}^n w_i^\ast r(z_i) - \mathbb E_{z\sim Q}[r(z)] = O_p(n^{-1/d_0}+N^{-1/d}).
\]
Combining the two terms yields
\[
\widehat{V}_{\mathrm{OTROPE}}(\pi)-V^\ast(\pi)
= O_p\!\left( N^{-1/2} + n^{-1/d_0} + N^{-1/d} \right).
\]

Additionally, suppose
\[
\|\widehat g_n-g^\ast\|_\infty=O_p(\alpha_n),
\qquad \alpha_n\to 0.
\]
Then $\|r\|_\infty=O_p(\alpha_n)$. By the decomposition~\eqref{equ5},
the first term is $O_p(N^{-1/2})$ by Chebyshev's inequality under bounded variance. For the second term, since $w_i^\ast\ge 0$ and $\sum_i w_i^\ast=1$,
\[
\left|
\sum_{i=1}^n w_i^\ast r(z_i)
\right|
\le
\sum_{i=1}^n w_i^\ast |r(z_i)|
\le
\|r\|_\infty
=
O_p(\alpha_n),
\]
and
\[
\left|
\mathbb E_{z\sim Q}[r(z)]
\right|
\le
\mathbb E_{z\sim Q}[|r(z)|]
\le
\|r\|_\infty
=
O_p(\alpha_n).
\]
Hence,
\[
\sum_{i=1}^n w_i^\ast r(z_i) - \mathbb E_{z\sim Q}[r(z)]
=
O_p(\alpha_n).
\]
Combining the two bounds gives
\[
\widehat{V}_{\mathrm{OTROPE}}(\pi)-V^\ast(\pi)
=
O_p(N^{-1/2}+\alpha_n).
\]
\end{proof}

\section{Training and Implementation Details}
\label{app:imple}

\subsection{Optimal Transport Weight Training}
\label{app:ot_optimize}
For ease of notation, let $E_1 = \{e_i\}_{i=1}^n = \{e(z_i)\}_{i=1}^n$, and $E_2 = \{\widetilde e_j\}_{j=1}^N = \{e(\widetilde z_j)\}_{j=1}^N$
denote the embedding representations of \(\mathcal{D}_1\) and \(\mathcal{D}_2\), respectively. If PCA is enabled, both \(E_1\) and \(E_2\) are transformed into a lower-dimensional OT space before computing the weights. 

We estimate the OT weights by solving an entropy-regularized optimal transport problem~\citep{cuturi2013sinkhorn} from the weighted empirical distribution on \(D_1\) to the uniform empirical distribution on \(D_2\)~ in \eqref{eq:OT_weights}.
The ground cost is the normalized pairwise Euclidean distance between embeddings:
\[
C_{ij} = \frac{\|e_i-\tilde e_j\|_2}
{\max_{i,j}\|e_i-\tilde e_j\|_2}.
\]
In implementation, the source weights are parameterized by unconstrained logits \(\theta \in \mathbb{R}^n\):
\[
w_i = \frac{\exp(\theta_i)} {\sum_{k=1}^n \exp(\theta_k)}.
\]
This softmax parameterization ensures \(w_i \ge 0\) and \(\sum_{i=1}^n w_i=1\) throughout optimization.

For a fixed \(\boldsymbol{w}\), the entropy-regularized OT cost is computed using Sinkhorn iterations. Specifically, the target marginal is fixed as
\[
b_j = \frac{1}{N},
\qquad j=1,\ldots,N,
\]
and the source marginal is \(\boldsymbol{w}\). Define
\[
M_{ij} = -\frac{C_{ij}}{\lambda},
\]
where $\lambda$ controls the strength of entropy regularization. 
Smaller values of $\lambda$ lead to less regularized and more concentrated transport plans, whereas larger values yield smoother transport plans.
The Sinkhorn algorithm alternates between updating two log-domain scaling variables \(f \in \mathbb{R}^n\) and \(g \in \mathbb{R}^N\):
\[
f_i = \log w_i - \log \sum_{j=1}^N \exp(M_{ij}+h_j),
\]
and
\[
h_j = \log b_j - \log \sum_{i=1}^n \exp(M_{ij}+f_i).
\]
These updates are repeated for a fixed number of inner Sinkhorn iterations, or until the change in \(f\) is below a numerical tolerance.

After the Sinkhorn iterations converge, the transport plan is recovered as
\[
\Gamma_{ij} = \exp(f_i + h_j + M_{ij}).
\]
The differentiable OT objective is then computed as
\[
\langle \Gamma, C\rangle
=
\sum_{i=1}^n \sum_{j=1}^N \Gamma_{ij} C_{ij}.
\]
The logits \(\theta\) are initialized at zero, corresponding to uniform source weights. We then minimize the Sinkhorn OT cost using the Adam optimizer for a fixed number of outer iterations. In each outer iteration, the current logits determine \(\boldsymbol{w}\), the inner Sinkhorn iterations compute the transport plan \(\Gamma\), and the resulting OT cost is backpropagated to update \(\theta\). The final OT weights \(\boldsymbol{w}^\ast\) are taken from the iterate with the smallest observed OT cost.

\subsection{MoE Architecture and Optimization}\label{sec:moe}
We train the MoE model using a two-layer multilayer perceptron (MLP). For each input sample, the MLP takes the corresponding embedding vector as input and outputs a set of expert-combination weights. These weights are then used to aggregate the predictions from multiple fixed experts into a single MoE prediction.

The MoE model is trained on the labeled source data. The expert predictions are treated as fixed inputs, while the MLP learns the aggregation rule by matching the final MoE prediction to the observed label. For binary outcomes, we optimize the MLP parameters by minimizing the binary cross-entropy loss:
\[
\mathcal{L}_{\mathrm{MoE}}
= -\frac{1}{n}\sum_{i=1}^n
\left[
g^\ast(z_i) \log \widehat{g}(z_i)
+
\{1-g^\ast(z_i)\}\log\{1-\widehat{g}(z_i)\}
\right],
\]
where $\widehat{g}(z_i)$ denotes the MoE prediction for sample $z_i$, and $g^\ast(z_i)$ denotes the corresponding ground-truth binary label.

After training, we fix the MLP-based MoE model and use it to generate predictions on both the labeled source data and the target data. These predictions are saved and used in the downstream estimation step. Specifically, the target MoE predictions provide the plug-in estimate, while the source MoE predictions are used to construct the residual correction.

\section{Experimental Details}
\label{app:exp}

All experiments are conducted on a server equipped with an NVIDIA RTX A6000 GPU. 
Each repetition takes approximately 30 seconds to 5 minutes, depending on the sample size and target-policy setting.


\subsection{Baseline Estimators}
\label{app:baseline}

\paragraph{IS Estimator.}
The IS estimator~\citep{precup2000eligibility} is defined as
\begin{equation}
\label{equ:IS}
\widehat{V}_{\text{IS}}(\pi)= \frac{1}{n} \sum_{i=1}^n 
   \frac{\pi(y_i \vert x_i)}{\pi_0(y_i \vert x_i)} \, g^\ast(x_i, y_i).
\end{equation}

\paragraph{PPI++ Estimator.}
The PPI++ estimator~\citep{angelopoulos2023ppi++} is defined as
\begin{equation}
\label{equ:ppi++}
\widehat{V}_{\text{PPI++}}(\pi)
=\frac{1}{n} \sum_{i=1}^n g^\ast(z_i) 
+ \widehat{\lambda}\left(\frac{1}{N}\sum_{j=1}^{N}\widehat{g}(\widetilde{z}_j)
- \frac{1}{n} \sum_{i=1}^n \widehat{g}(z_i)\right),
\end{equation}
where
\[
\widehat{\lambda}
=\frac{\widehat{\operatorname{Cov}}_n\left(g^\ast(z_i),\widehat{g}(z_i)\right)}
{(1+\frac{n}{N})\widehat{\operatorname{Var}}_{n+N}\left(\widehat{g}(z_i),\widehat{g}(\widetilde{z}_j)\right)}.
\]
Here, $\widehat{\operatorname{Cov}}_n$ denotes the empirical covariance computed over the labeled samples $\{z_i\}_{i=1}^n$, and $\widehat{\operatorname{Var}}_{n+N}$ denotes the empirical variance of $\widehat{g}$ computed over the combined sample $\{z_i\}_{i=1}^n \cup \{\widetilde{z}_j\}_{j=1}^N$.

\paragraph{CLS Estimator for Density Ratio.}
When the densities $\pi(y \vert x)$ and $\pi_0(y \vert x)$ are unavailable, we estimate the importance weights using a probabilistic classifier. To this end, we construct a binary classification problem in which samples are drawn from a mixture of two conditional distributions: the target policy $\pi(\cdot \vert x)$ and the behavior policy $\pi_0(\cdot \vert x)$. Specifically, we assign label $l=1$ if $y \sim \pi(\cdot \vert x)$ and $l=0$ if $y \sim \pi_0(\cdot \vert x)$, with class priors $P(l=1)=\pi_1$ and $P(l=0)=\pi_0$. 
Following~\citep{sugiyama2012density}, the density ratio can be expressed as
\[
\frac{\pi(y \vert x)}{\pi_0(y \vert x)}
= \frac{\pi_0}{\pi_1} \cdot \frac{P(l=1 \vert y,x)}{P(l=0 \vert y,x)}.
\]
In our experiment, we apply 
\[
\frac{\pi_0}{\pi_1} = \frac{n}{N}.
\]
We use this plug-in CLS density ratio estimator in IS and DR method in Section~\ref{subsec:real}.

\paragraph{OT Estimator.}
We consider a direct OT-based estimator,
\begin{equation}
\label{equ:ot}
\widehat{V}_{\text{OT}}(\pi)= \frac{1}{n} \sum_{i=1}^n w_i^\ast \, g^\ast(x_i, y_i),
\end{equation}
where $w_i^\ast$ is the same as in OTROPE.

\subsection{Data Construction}\label{asec:data_cons}

\subsubsection{\textsc{RewardBench~2}}

\begin{table}[h]
\centering
\caption{Two policies evaluated in \textsc{RewardBench~2}.}
\label{tab:rb-comparison}
\resizebox{\textwidth}{!}{%
\begin{tabular}{lll}
\toprule
\textbf{Policy Name} & \textbf{Claude 3.5 Sonnet} & \textbf{Mixed} \\
\midrule
\textbf{Target Policy} 
  & \texttt{claude-3-5-sonnet-20241022} 
  & \texttt{claude-3-5-sonnet-20241022}, \texttt{human} \\
\midrule
\textbf{Behavior Policy} 
  & \texttt{Llama-3.1-70B-Instruct}, \texttt{Qwen2.5-72B-Instruct},
  & \texttt{Llama-3.1-70B-Instruct}, \texttt{Qwen2.5-72B-Instruct}, \\
  & \texttt{Qwen2.5-7B-Instruct}
  & \texttt{Qwen2.5-7B-Instruct} \\
\midrule
\textbf{Opponent Pool}
  & \texttt{Llama-3.1-70B-Instruct}, \texttt{Llama-3.1-8B-Instruct},
  & \texttt{Llama-3.1-70B-Instruct}, \texttt{Llama-3.1-8B-Instruct}, \\
  & \texttt{Llama-3.1-Tulu-3-8B}, \texttt{Mistral-7B-Instruct-v0.3},
  & \texttt{Llama-3.1-Tulu-3-8B}, \texttt{Mistral-7B-Instruct-v0.3}, \\
  & \texttt{Qwen2.5-72B-Instruct}, \texttt{Qwen2.5-7B-Instruct},
  & \texttt{Qwen2.5-72B-Instruct}, \texttt{Qwen2.5-7B-Instruct}, \\
  & \texttt{gpt-4o-2024-08-06}, \texttt{human}
  & \textbf{\texttt{claude-3-5-sonnet-20241022}}, \texttt{gpt-4o-2024-08-06}, \texttt{human} \\
\midrule
\textbf{Size $\mathcal{D}_1$} & 1{,}408 & 1{,}723 \\
\textbf{Size $\mathcal{D}_2$}  & 493     & 591     \\
\textbf{Size (total)} & 1{,}901 & 2{,}314 \\
\bottomrule
\end{tabular}%
}
\end{table}



We randomly sample $n$ pairs from $P$ to form $\mathcal{D}_1$, and use all pairs from $Q$ as $\mathcal{D}_2$. 
We assume that the ground-truth preference $g^\ast$ is observed for samples in $\mathcal{D}_1$ but unavailable for those in $\mathcal{D}_2$. 
The target value $V^\ast(\pi)$ is computed as the average ground-truth preference over $\mathcal{D}_2$. 
The true values are $0.843$ for Claude 3.5 Sonnet and $0.870$ for the Mixed policy. 
In each repetition, we set $n=1400$ for Claude 3.5 Sonnet and $n=1600$ for the Mixed policy in Section~\ref{subsec:real}.
Because the number of samples for each comparison pair in the original dataset is limited, we allow the target and behavior policies to also appear in the opponent pools, thereby constructing a larger opponent policy pool.

\subsubsection{\textsc{Chatbot Arena}}

\begin{table}[h]
\centering
\caption{Two policies evaluated in \textsc{Chatbot Arena}.}
\label{tab:chatbot-comparison}
\resizebox{\textwidth}{!}{%
\begin{tabular}{lll}
\toprule
\textbf{Policy Name} & \textbf{GPT-4} & \textbf{GPT-3.5-Turbo} \\
\midrule
\textbf{Target Policy}
  & \texttt{gpt-4}
  & \texttt{gpt-3.5-turbo} \\
\midrule
\textbf{Behavior Policy}
  & \texttt{koala-13b}
  & \texttt{koala-13b} \\
\midrule
\textbf{Opponent Pool}
  & \texttt{RWKV-4-Raven-14B}, \texttt{alpaca-13b},
  & \texttt{RWKV-4-Raven-14B}, \texttt{alpaca-13b}, \\
  & \texttt{stablelm-tuned-alpha-7b}, \texttt{vicuna-13b}
  & \texttt{stablelm-tuned-alpha-7b}, \texttt{vicuna-13b} \\
\midrule
\textbf{Size $\mathcal{D}_1$} & 1{,}615 & 1{,}615 \\
\textbf{Size $\mathcal{D}_2$}  & 998     & 1{,}080 \\
\textbf{Size (total)} & 2{,}613 & 2{,}695 \\
\bottomrule
\end{tabular}%
}
\end{table}

The ground-truth values of the target policy are $0.755$ for GPT-4 and $0.607$ for GPT-3.5-Turbo. 
We randomly sample $n=1600$ examples in Section~\ref{subsec:real}.

\subsection{LLM-as-a-Judge}\label{asec:llms}

We employ five LLMs to provide evaluation judgments: Phi-4-mini-instruct, gemma-7b-it, deepseek-llm-7b-chat, Llama-3.1-8B-Instruct, and Qwen2.5-14B-Instruct. These models are drawn from different families and may exhibit complementary strengths in evaluating diverse tasks. We also include a strong LLM, DeepSeek-V3.1, as a comparison.



\subsection{Hyperpameters}\label{asec:hyper}

Since \textsc{RewardBench~2} involves behavior and target policies induced by pools of LLMs, rather than by a single LLM as in \textsc{Chatbot Arena}, its induced data distribution is more complex. To improve optimization stability, we therefore use more training iterations and a smaller learning rate for \textsc{RewardBench~2}.

\begin{table}[h]
\centering
\caption{Hyperparameters for MoE training on \textsc{RewardBench~2} and \textsc{Chatbot Arena}.}
\label{tab:moe-hyperparameters}
\begin{tabular}{lcc}
\toprule
Hyperparameter & \textsc{RewardBench~2} & \textsc{Chatbot Arena} \\
\midrule
Hidden dimensions & $(64, 32)$ & $(64, 32)$ \\
Optimizer & Adam & Adam \\
Learning rate & $3\times 10^{-4}$ & $1\times 10^{-3}$ \\
Batch size & $256$ & $256$ \\
Number of epochs & $50$ & $50$ \\
Temperature & $\{0.3, 0.5, 0.7, 1.0\}$ & $\{0.3, 0.5, 0.7, 1.0\}$ \\
Regularization & $\{0, 0.001, 0.01, 0.05, 0.1\}$ & $\{0, 0.001, 0.01, 0.05, 0.1\}$ \\
Training/validation split & $80\%/20\%$ & $80\%/20\%$ \\
\bottomrule
\end{tabular}
\end{table}
We select the MoE hyperparameters via grid search over the regularization and temperature parameters (see Table~\ref{tab:moe-hyperparameters}).

\begin{table}[h]
\centering
\caption{Hyperparameters used for OT weight optimization in the controlled experiments, under both fixed and trained $\widehat{g}$ settings.}
\label{tab:ot-simu}
\begin{tabular}{lc}
\toprule
Hyperparameter & Value / Description \\
\midrule
Entropic regularization & $0.1$ \\
Number of Sinkhorn iterations & $1000$ \\
Outer optimizer & Adam \\
Number of outer iterations & $500$ \\
Learning rate & $0.05$ \\
\bottomrule
\end{tabular}
\end{table}

\begin{table}[h]
\centering
\caption{Hyperparameters for OT weight optimization on \textsc{RewardBench~2} and \textsc{Chatbot Arena}.}
\label{tab:ot-real-hparams}
\begin{tabular}{lc}
\toprule
Hyperparameter & Values \\
\midrule
PCA dimension & $128$ \\
Number of Sinkhorn iterations & $2000$\\
Outer optimizer & Adam \\
Number of outer iterations & $1000$\\
Learning rate & $0.01$ \\
\bottomrule
\end{tabular}
\end{table}

We set the entropic regularization parameter to $\lambda=0.1$ for Claude 3.5 Sonnet and $\lambda=0.08$ for the Mixed policy on \textsc{RewardBench~2}. 
For \textsc{Chatbot Arena}, we use $\lambda=0.05$ for GPT-4 and $\lambda=0.1$ for GPT-3.5-Turbo.
We choose $\lambda$ within the empirically stable range $[0.05,0.10]$ according to the matching difficulty of each target-policy setting and the computation efficiency. 
For more heterogeneous or distributionally distinct target policies, such as the Mixed policy and GPT-4, we use a smaller $\lambda$ to preserve finer pairwise cost information and improve matching resolution. 
For smoother single-model settings, such as GPT-3.5-Turbo, we use the default value $\lambda=0.10$, which provides better numerical smoothness and computational efficiency.

\section{Additional Experimental Results}
\label{app:more_results}

\subsection{Toy Example}
\label{app:toy}

We first provide additional numerical results in Table~\ref{tab:toy_results} for the toy example in Section~\ref{subsec:toy}.

\newpage 

\begin{table*}[h]
\centering
\setlength{\tabcolsep}{4pt}
\caption{Mean absolute error (MAE) over 200 replications under varying distribution shift $\tau$. Values are reported as mean with standard deviation in parentheses.}
\label{tab:toy_results}
\begin{tabular}{c ccccccc}
\toprule
$\tau$ & IS & DR & PPI & PPI++ & OTROPE \\
\midrule
1 & \ms{0.908}{1.077} & \ms{0.959}{1.186} & \ms{2.026}{0.126} & \ms{2.013}{0.063} & \ms{\best{0.236}}{0.187} \\
2 & \ms{6.906}{18.068} & \ms{6.420}{16.300} & \ms{4.005}{0.131} & \ms{3.998}{0.069} & \ms{\best{1.037}}{0.703} \\
3 & \ms{8.680}{8.832} & \ms{8.232}{5.112} & \ms{5.992}{0.138} & \ms{5.989}{0.070} & \ms{\best{2.880}}{1.229} \\
4 & \ms{9.865}{0.227} & \ms{9.878}{0.023} & \ms{7.886}{0.138} & \ms{7.894}{0.072} & \ms{\best{4.635}}{1.740} \\
5 & \ms{11.450}{0.000} & \ms{11.450}{0.022} & \ms{9.459}{0.118} & \ms{9.452}{0.073} & \ms{\best{6.205}}{1.969} \\
\bottomrule
\end{tabular}
\end{table*}

\subsection{More Results for Controlled Experiments}
\label{app:simu}

Here we show more results for the controlled experiments in Figure~\ref{fig:simu} of Section~\ref{subsec:simu}.

\begin{table*}[h]
\caption{Mean absolute error and standard deviation of target policy value estimation with a fixed predictor. The labeled sample size $n$ varies.}
\label{tab:simu_fixed}
\centering
\scriptsize
\setlength{\tabcolsep}{2pt}
\begin{tabular}{l ccccc}
\toprule
\multirow{2}{*}[-0.5ex]{\textbf{Estimator}} &
\multicolumn{5}{c}{$n$} \\
\cmidrule{2-6}
& 200 & 400 & 600 & 800 & 1000 \\
\midrule

\bluecell \multicolumn{6}{c}{$\tau=1$} \\
\midrule

\greencell IS
& 1.944 {\scriptsize(5.290)} & 1.463 {\scriptsize(2.548)}
& 1.183 {\scriptsize(1.724)} & 1.085 {\scriptsize(1.395)}
& 0.924 {\scriptsize(1.208)} \\

\greencell DR
& 2.464 {\scriptsize(6.989)} & 1.776 {\scriptsize(3.272)}
& 1.446 {\scriptsize(2.233)} & 1.300 {\scriptsize(1.890)}
& 1.139 {\scriptsize(1.656)} \\

\yellowcell PPI
& 2.048 {\scriptsize(0.426)} & 2.079 {\scriptsize(0.316)}
& 2.055 {\scriptsize(0.270)} & 2.057 {\scriptsize(0.240)}
& 2.057 {\scriptsize(0.191)} \\

\yellowcell PPI++
& 2.009 {\scriptsize(0.120)} & 2.018 {\scriptsize(0.087)}
& 2.016 {\scriptsize(0.076)} & 2.021 {\scriptsize(0.068)}
& 2.018 {\scriptsize(0.061)} \\

\redcell \textbf{OTROPE}
& \best{0.299} {\scriptsize(0.257)} & \best{0.249} {\scriptsize(0.181)}
& \best{0.219} {\scriptsize(0.171)} & \best{0.198} {\scriptsize(0.164)}
& \best{0.195} {\scriptsize(0.163)} \\

\bluecell \multicolumn{6}{c}{$\tau=2$} \\
\midrule

\greencell IS
& 16.966 {\scriptsize(100.571)} & 11.194 {\scriptsize(49.746)}
& 8.919 {\scriptsize(32.854)} & 7.744 {\scriptsize(24.632)}
& 7.403 {\scriptsize(20.115)} \\

\greencell DR
& 18.267 {\scriptsize(112.970)} & 11.864 {\scriptsize(55.908)}
& 9.377 {\scriptsize(36.970)} & 8.204 {\scriptsize(27.769)}
& 7.847 {\scriptsize(22.789)} \\

\yellowcell PPI
& 4.024 {\scriptsize(0.288)} & 4.047 {\scriptsize(0.210)}
& 4.030 {\scriptsize(0.178)} & 4.032 {\scriptsize(0.157)}
& 4.030 {\scriptsize(0.122)} \\

\yellowcell PPI++
& 3.999 {\scriptsize(0.120)} & 4.007 {\scriptsize(0.087)}
& 4.006 {\scriptsize(0.076)} & 4.010 {\scriptsize(0.068)}
& 4.008 {\scriptsize(0.061)} \\

\redcell \textbf{OTROPE}
& \best{1.529} {\scriptsize(0.711)} & \best{1.250} {\scriptsize(0.684)}
& \best{1.181} {\scriptsize(0.673)} & \best{0.993} {\scriptsize(0.578)}
& \best{0.959} {\scriptsize(0.613)} \\

\midrule
\bluecell \multicolumn{6}{c}{$\tau=3$} \\
\midrule

\greencell IS
& 11.720 {\scriptsize(37.407)} & 9.535 {\scriptsize(17.959)}
& 8.895 {\scriptsize(11.463)} & 8.549 {\scriptsize(8.226)}
& 8.324 {\scriptsize(6.286)} \\

\greencell DR
& 13.186 {\scriptsize(51.962)} & 10.286 {\scriptsize(25.216)}
& 9.378 {\scriptsize(16.302)} & 8.891 {\scriptsize(11.860)}
& 8.652 {\scriptsize(9.186)} \\

\yellowcell PPI
& 6.033 {\scriptsize(0.426)} & 6.064 {\scriptsize(0.316)}
& 6.040 {\scriptsize(0.270)} & 6.042 {\scriptsize(0.240)}
& 6.042 {\scriptsize(0.191)} \\

\yellowcell PPI++
& 5.994 {\scriptsize(0.120)} & 6.003 {\scriptsize(0.087)}
& 6.001 {\scriptsize(0.076)} & 6.006 {\scriptsize(0.068)}
& 6.003 {\scriptsize(0.061)} \\

\redcell \textbf{OTROPE}
& \best{3.601} {\scriptsize(1.549)} & \best{3.166} {\scriptsize(1.489)}
& \best{3.119} {\scriptsize(1.440)} & \best{2.860} {\scriptsize(1.373)}
& \best{2.711} {\scriptsize(1.403)} \\

\bottomrule
\end{tabular}
\end{table*}

We also report results for a predictor $\widehat g(x,y)$ trained on $\mathcal{D}_1$ in Table~\ref{tab:simu_trained} and Figure~\ref{fig:simu_trained}. Specifically, $\widehat g(x,y)$ is implemented as a linear model trained on $\mathcal{D}_1$ using only the covariate $x$.

\begin{figure}[h]
    \centering
    \includegraphics[width=0.8\linewidth]{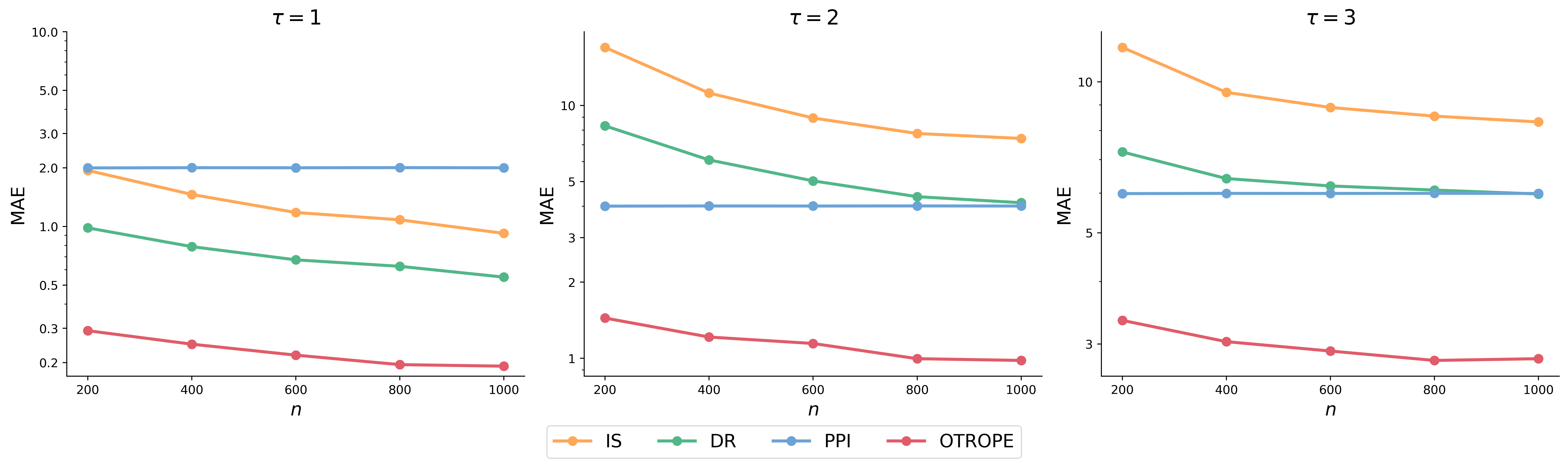}
    \caption{Log-scale mean absolute error under the predictor $\widehat g$ trained on $\mathcal{D}_1$ for distribution shift levels $\tau \in \{1,2,3\}$.
    PPI++ is omitted since it nearly overlaps with PPI.}
    \label{fig:simu_trained}
\end{figure}

\begin{table*}[h]
\caption{Mean absolute error and standard deviation of target policy value estimation with a predictor trained on $\mathcal{D}_1$. The labeled sample size $n$ varies.}
\label{tab:simu_trained}
\centering
\scriptsize
\setlength{\tabcolsep}{2pt}
\begin{tabular}{l ccccc}
\toprule
\multirow{2}{*}[-0.5ex]{\textbf{Estimator}} &
\multicolumn{5}{c}{$n$} \\
\cmidrule{2-6}
& 200 & 400 & 600 & 800 & 1000 \\
\midrule

\bluecell \multicolumn{6}{c}{$\tau=1$} \\
\midrule

\greencell IS
& 1.940 {\scriptsize(5.291)} & 1.460 {\scriptsize(2.550)}
& 1.179 {\scriptsize(1.726)} & 1.082 {\scriptsize(1.398)}
& 0.922 {\scriptsize(1.210)} \\

\greencell DR
& 0.984 {\scriptsize(2.234)} & 0.788 {\scriptsize(1.230)}
& 0.674 {\scriptsize(0.839)} & 0.624 {\scriptsize(0.648)}
& 0.549 {\scriptsize(0.524)} \\

\yellowcell PPI
& 1.998 {\scriptsize(0.121)} & 2.003 {\scriptsize(0.093)}
& 2.001 {\scriptsize(0.083)} & 2.005 {\scriptsize(0.078)}
& 2.001 {\scriptsize(0.075)} \\

\yellowcell PPI++
& 2.000 {\scriptsize(0.120)} & 2.009 {\scriptsize(0.087)}
& 2.007 {\scriptsize(0.076)} & 2.012 {\scriptsize(0.068)}
& 2.009 {\scriptsize(0.061)} \\

\redcell \textbf{OTROPE}
& \best{0.291} {\scriptsize(0.258)} & \best{0.249} {\scriptsize(0.177)}
& \best{0.218} {\scriptsize(0.167)} & \best{0.195} {\scriptsize(0.160)}
& \best{0.192} {\scriptsize(0.161)} \\

\bluecell \multicolumn{6}{c}{$\tau=2$} \\
\midrule

\greencell IS
& 16.967 {\scriptsize(100.571)} & 11.196 {\scriptsize(49.746)}
& 8.921 {\scriptsize(32.854)} & 7.745 {\scriptsize(24.631)}
& 7.405 {\scriptsize(20.115)} \\

\greencell DR
& 8.305 {\scriptsize(36.887)} & 6.089 {\scriptsize(18.829)}
& 5.031 {\scriptsize(12.417)} & 4.357 {\scriptsize(9.182)}
& 4.124 {\scriptsize(7.273)} \\

\yellowcell PPI
& 3.998 {\scriptsize(0.121)} & 4.003 {\scriptsize(0.093)}
& 4.002 {\scriptsize(0.083)} & 4.005 {\scriptsize(0.078)}
& 4.001 {\scriptsize(0.075)} \\

\yellowcell PPI++
& 4.000 {\scriptsize(0.120)} & 4.009 {\scriptsize(0.087)}
& 4.008 {\scriptsize(0.076)} & 4.012 {\scriptsize(0.068)}
& 4.010 {\scriptsize(0.061)} \\

\redcell \textbf{OTROPE}
& \best{1.443} {\scriptsize(0.701)} & \best{1.214} {\scriptsize(0.659)}
& \best{1.144} {\scriptsize(0.657)} & \best{0.996} {\scriptsize(0.580)}
& \best{0.981} {\scriptsize(0.573)} \\

\midrule
\bluecell \multicolumn{6}{c}{$\tau=3$} \\
\midrule

\greencell IS
& 11.714 {\scriptsize(37.408)} & 9.529 {\scriptsize(17.960)}
& 8.888 {\scriptsize(11.465)} & 8.542 {\scriptsize(8.228)}
& 8.318 {\scriptsize(6.287)} \\

\greencell DR
& 7.247 {\scriptsize(12.901)} & 6.413 {\scriptsize(6.184)}
& 6.199 {\scriptsize(3.775)} & 6.083 {\scriptsize(2.541)}
& 5.973 {\scriptsize(1.858)} \\

\yellowcell PPI
& 5.985 {\scriptsize(0.121)} & 5.991 {\scriptsize(0.093)}
& 5.989 {\scriptsize(0.083)} & 5.992 {\scriptsize(0.078)}
& 5.988 {\scriptsize(0.075)} \\

\yellowcell PPI++
& 5.988 {\scriptsize(0.120)} & 5.996 {\scriptsize(0.087)}
& 5.995 {\scriptsize(0.076)} & 5.999 {\scriptsize(0.068)}
& 5.997 {\scriptsize(0.061)} \\

\redcell \textbf{OTROPE}
& \best{3.344} {\scriptsize(0.873)} & \best{3.032} {\scriptsize(0.885)}
& \best{2.904} {\scriptsize(0.856)} & \best{2.781} {\scriptsize(0.763)}
& \best{2.804} {\scriptsize(0.768)} \\

\bottomrule
\end{tabular}
\end{table*}

Together with Table~\ref{tab:simu_fixed}, these results show that OTROPE outperforms all baselines across the considered settings. This is consistent with Theorems~\ref{the:unbias} and~\ref{the:ot_rate}, which impose no specific structural requirement on the relationship between the learned OT weights and the predictor $\widehat g$ or the induced residual function. Thus, $\widehat g$ can be an arbitrary model, including one trained on $\mathcal D_1$.

\subsection{Numerical Verification of Proposition~\ref{prop:dr_bad}}
\label{app:prop_verification}

We provide a direct numerical verification of Proposition~\ref{prop:dr_bad} using simulated data with oracle density ratios. We consider $n=1000$ labeled samples and $N=2000$ unlabeled samples, with 200 replications for each $\tau \in \{1,\ldots,5\}$ and $d_n=0.05$. We set
\[
T_n = \frac{1}{n}\sum_{i=1}^n \rho(z_i)
\]
and choose $a_n$ as its empirical 95th percentile. We also set the direct-method bias $|\widehat V_{\mathrm{DM}}-V^\ast|$ to $b_n$ at its empirical 5th percentile. These choices match the two high-probability conditions assumed in Proposition~\ref{prop:dr_bad}.

\textbf{Bias bound.}
Among the replications where both conditions hold simultaneously, corresponding to the $1-2d_n$ guarantee, we compare the predicted lower bound $b_n-C_r a_n$ with the smallest observed value of $|\widehat V_{\mathrm{DR}}-V^\ast|$ among those replications. Table~\ref{tab:prop_bias_check} reports the results. The bound holds in all cases.

\begin{table}[ht]
\centering
\caption{Numerical verification of the bias bound in Proposition~\ref{prop:dr_bad}. The constant $C_r=16.51$ is a pooled upper bound on $|r(z)|$ across all $\tau$.}
\label{tab:prop_bias_check}
\setlength{\tabcolsep}{6pt}
\begin{tabular}{lccccc}
\toprule
$\tau$ & $a_n$ & $b_n$ & $b_n-C_r a_n$ & $\min |\widehat V_{\mathrm{DR}}-V^\ast|$ & Holds \\
\midrule
1 & 1.373  & 3.965  & $-18.70$ & 0.008  & 100\% \\
2 & 2.719  & 5.959  & $-38.94$ & 0.071  & 100\% \\
3 & 0.177  & 7.948  & 5.03    & 6.119  & 100\% \\
4 & 0.0006 & 9.847  & 9.838   & 9.847  & 100\% \\
5 & 0.0000 & 11.415 & 11.415  & 11.415 & 100\% \\
\bottomrule
\end{tabular}
\end{table}

\textbf{Variance bound.}
The variance lower bound in Proposition~\ref{prop:dr_bad} additionally requires $0<c_r\le r(z)$. The original fixed predictor $\widehat g(x,y)=-x$ does not satisfy this condition. Therefore, for this diagnostic, we use a uniformly conservative predictor $\widehat g(x,y)=-x-20$, which ensures $c_r>0$, and compare the empirical variance $\mathrm{Var}(\widehat V_{\mathrm{DR}})$ with the predicted lower bound in Proposition~\ref{prop:dr_bad}. The results are shown in Table~\ref{tab:prop_var_check}. The bound is confirmed in the regimes $\tau=1,2,3$.

\begin{table}[t]
\centering
\caption{Numerical verification of the variance lower bound in Proposition~\ref{prop:dr_bad} under a conservative predictor satisfying $0<c_r\le r(z)$.}
\label{tab:prop_var_check}
\setlength{\tabcolsep}{7pt}
\begin{tabular}{lcccc}
\toprule
$\tau$ & $\widehat\kappa$ & $\mathrm{Var}(\widehat V_{\mathrm{DR}})$ & Predicted lower bound & Holds \\
\midrule
1 & 0.000 & 30.5              & $-1.0\times 10^3$ & Yes \\
2 & 0.000 & $1.62\times 10^4$ & $5.57\times 10^3$ & Yes \\
3 & 0.000 & $7.97\times 10^3$ & 250               & Yes \\
\bottomrule
\end{tabular}
\end{table}

The bound is cleanly confirmed in the regime targeted by the proposition, particularly for $\tau=1,2,3$, where $R_n$ still varies meaningfully across replications. For $\tau=4,5$, however, $R_n$ is nearly identical across most of the 200 replications. As a result, the empirical quantiles and covariance terms used to estimate $a_n$, $b_n$, and $\widehat\kappa$ become unstable and no longer provide reliable empirical proxies for the population-level quantities appearing in Proposition~\ref{prop:dr_bad}.

\subsection{Robustness under Variance Shift}
\label{app:variance_shift}

Our main simulation varies the discrepancy between the behavior and target distributions primarily through a mean or location shift. This design is natural because the policy value is an expectation, and a location shift along an outcome-relevant direction provides a direct and interpretable way to control changes that affect the target policy value. However, this is only a choice of simulation design rather than a restriction of OTROPE. The OT objective aligns the full empirical distributions and does not assume that the behavior--target discrepancy arises only from the mean. Therefore, variance, scale, and more general distributional-shape shifts are also covered by the proposed framework.

To evaluate OTROPE under a variance or scale shift, we consider an additional simulation with $n=N=1000$ and $d=5$, where only the conditional scale changes while the conditional mean remains unchanged:
\[
P:\ y_k \mid x \sim \mathrm{TruncNormal}(\gamma_k x,\sigma_{\mathrm{ref}},[-4,4]),
\,\,
Q:\ y_k \mid x \sim \mathrm{TruncNormal}(\gamma_k x,\sigma_{\mathrm{tgt}},[-4,4]).
\]
We fix $\sigma_{\mathrm{ref}}=0.5$ and consider two scale ratios, $\sigma_{\mathrm{tgt}}/\sigma_{\mathrm{ref}}=0.5$ and $2.0$. All other settings follow the toy example in the main paper. We use 20 replications and report the mean absolute error of the estimated target policy value in Table~\ref{tab:variance_shift}; standard deviations are shown in parentheses.

\begin{table}[ht]
\centering
\caption{Mean absolute error under variance-shift simulations. Standard deviations are reported in parentheses.}
\label{tab:variance_shift}
\setlength{\tabcolsep}{6pt}
\begin{tabular}{lcccc}
\toprule
$\sigma_{\mathrm{tgt}}/\sigma_{\mathrm{ref}}$ & IS & DR & PPI & OTROPE \\
\midrule
0.50 & 0.226 (0.066) & 0.287 (0.101) & 0.083 (0.005) & 0.104 (0.007) \\
2.00 & 2.163 (30.56) & 2.138 (24.19) & 0.120 (0.007) & 0.125 (0.013) \\
\bottomrule
\end{tabular}
\end{table}

OTROPE remains substantially more accurate and stable than IS and DR under both scale-shift settings. PPI performs slightly better in this particular experiment because the conditional mean structure is unchanged, making the setting especially favorable to its unweighted residual correction. When $\sigma_{\mathrm{tgt}}/\sigma_{\mathrm{ref}}=2$, IS and DR become unstable because the target distribution has heavier tails than the behavior distribution; consequently, a small number of tail observations can receive very large importance weights and dominate the estimate. In contrast, PPI uses uniform weights, while OTROPE uses simplex-constrained OT weights rather than raw density ratios. Overall, the results indicate that OTROPE remains competitive and stable, and that its effectiveness is not limited to mean-shift settings.

\subsection{Empirical Support for Assumption~\ref{ass:bound_emb}}
\label{app:support}

Assumption~\ref{ass:bound_emb} is a population-level support condition and cannot be formally verified from finite samples. Nevertheless, its plausibility can be empirically assessed through geometric coverage diagnostics in the embedding space.
For each target-policy sample, we compare its nearest-neighbor distance to the behavior-policy samples with its nearest-neighbor distance within the target-policy samples. We then use the ratio between these two distances as a relative measure of support coverage. A ratio close to one suggests that the behavior-policy samples cover the corresponding target region at a similar local scale, whereas a larger ratio indicates weaker local support coverage.

\begin{table}[ht]
\centering
\caption{Nearest-neighbor support coverage diagnostics in the embedding space. The ratio compares the target-to-behavior nearest-neighbor distance with the target-to-target nearest-neighbor distance. A ratio close to one indicates stronger local support coverage.}
\label{tab:support_diagnostic}
\scriptsize
\setlength{\tabcolsep}{5pt}
\begin{tabular}{lcccc}
\toprule
Dataset 
& Target $\to$ Behavior NN 
& Target $\to$ Target NN 
& Ratio 
& Fraction with ratio $>2$ \\
\midrule
\textsc{Chatbot Arena} (GPT-4) 
& 0.855 & 0.871 & 0.98 & 0.0\% \\
\textsc{Chatbot Arena} (GPT-3.5-Turbo) 
& 0.865 & 0.883 & 0.98 & 0.0\% \\
\textsc{RewardBench 2} (Claude 3.5 Sonnet) 
& 0.955 & 0.163 & 5.85 & 94.2\% \\
\textsc{RewardBench 2} (Mixed) 
& 0.936 & 0.164 & 5.71 & 94.1\% \\
\bottomrule
\end{tabular}
\end{table}

As shown in Table~\ref{tab:support_diagnostic}, the ratios on \textsc{Chatbot Arena} are close to one, and no target samples have a ratio greater than two. This provides empirical support for Assumption~\ref{ass:bound_emb} on \textsc{Chatbot Arena}. In contrast, \textsc{RewardBench 2} exhibits weaker overlap, with ratios around $5.7$--$5.9$ and more than $94\%$ of target samples having ratio greater than two. This indicates that the target samples are farther from the behavior-policy samples in the embedding space, which is consistent with the larger estimation errors observed on \textsc{RewardBench 2}. Nevertheless, OTROPE still performs strongly relative to the competing baselines in this more challenging overlap regime.

\subsection{Empirical Verification of Assumption~\ref{ass:ball}}
\label{app:ball}

We empirically examine Assumption~\ref{ass:ball} using real data. Recall that Assumption~\ref{ass:ball} requires the local ball mass $e_{\#}P(B(u,\eta))$ to shrink no faster than $\eta^{d_0}$, where $d_0$ is the intrinsic dimension of the embedding support. 
To estimate $d_0$, for each target-policy sample we compute the fraction of behavior-policy samples that fall within radius $\eta$, average this quantity over all target-policy samples to obtain $\mathrm{mass}(\eta)$, and fit a linear regression between $\log \eta$ and $\log \mathrm{mass}(\eta)$. 
The fitted slope gives an estimate $\widehat d_0$, which measures the effective dimension of the local embedding geometry.

\begin{table}[ht]
\centering
\caption{Estimated intrinsic dimension of the raw pre-PCA bge-m3 embeddings. The slope of the log--log regression gives $\widehat d_0$.}
\label{tab:intrinsic_dim}
\setlength{\tabcolsep}{8pt}
\begin{tabular}{lcc}
\toprule
Dataset & $\widehat d_0$ & $R^2$ \\
\midrule
\textsc{Chatbot Arena} (GPT-4) & 23.98 & 1.0000 \\
\textsc{Chatbot Arena} (GPT-3.5-Turbo) & 24.01 & 1.0000 \\
\textsc{RewardBench 2} (Mixed) & 29.23 & 0.9996 \\
\textsc{RewardBench 2} (Claude 3.5 Sonnet) & 29.17 & 0.9997 \\
\bottomrule
\end{tabular}
\end{table}

Table~\ref{tab:intrinsic_dim} reports the results on the raw pre-PCA bge-m3 embeddings. The nearly perfect log--log linearity, with $R^2 \ge 0.9996$ across all datasets, indicates that the power-law scaling in Assumption~\ref{ass:ball} closely matches the empirical embedding geometry. Moreover, the estimated intrinsic dimensions, $\widehat d_0 \approx 24$--$29$, are far below the raw embedding dimension of the concatenated bge-m3 representations and also below the PCA dimension used in our experiments. These findings provide empirical support for Assumption~\ref{ass:ball} and further justify the use of PCA for dimension reduction.

\subsection{Sensitivity to the PCA Dimension}
\label{app:pca_sensitivity}

Here, we conduct a sensitivity analysis with respect to the embedding dimension. The bge-m3 embedding model produces fixed 1024-dimensional dense embeddings. For pairwise comparisons, we concatenate the embeddings of the candidate and opponent responses, resulting in 2048-dimensional representations. Since bge-m3 does not natively support adjustable embedding dimensions, we vary the effective embedding dimension through PCA truncation. In our main experiments, we use 128-dimensional PCA representations.

To assess the sensitivity of OTROPE to this dimension-reduction choice, we repeat the \textsc{Chatbot Arena} experiments using bge-m3 embeddings while reducing the PCA dimension from 128 to 64. All other hyperparameters and experimental configurations are kept unchanged. Table~\ref{tab:pca64_ablation} reports the results.

\begin{table}[ht]
\centering
\caption{Mean absolute error of the estimated target policy value on \textsc{Chatbot Arena} using 64-dimensional PCA-truncated bge-m3 embeddings. Standard errors are reported in parentheses.}
\label{tab:pca64_ablation}
\setlength{\tabcolsep}{4pt}
\begin{tabular}{lcc|cc}
\toprule
& \multicolumn{2}{c|}{Mixture-of-Experts} 
& \multicolumn{2}{c}{Majority Voting} \\
\cmidrule(lr){2-3} \cmidrule(lr){4-5}
Method & GPT-4 & GPT-3.5 & GPT-4 & GPT-3.5 \\
\midrule
IS     
& 0.408 (0.002) 
& 0.261 (0.002) 
& 0.408 (0.002) 
& 0.261 (0.002) \\

DR     
& 0.088 (0.009) 
& 0.081 (0.007) 
& 0.127 (0.002) 
& 0.110 (0.002) \\

PPI    
& 0.086 (0.009) 
& 0.080 (0.007) 
& 0.124 (0.002) 
& 0.108 (0.002) \\

PPI++  
& 0.302 (0.006) 
& 0.197 (0.004) 
& 0.365 (0.001) 
& 0.234 (0.001) \\

\redcell
OTROPE 
& \textbf{0.049 (0.013)} 
& \textbf{0.076 (0.007)} 
& \textbf{0.088 (0.012)} 
& \textbf{0.106 (0.006)} \\
\bottomrule
\end{tabular}
\end{table}

As expected, reducing the PCA dimension from 128 to 64 increases the MAE of OTROPE relative to the main results in Table~\ref{tab:ot-chat}, since a lower-dimensional representation retains less information and yields a coarser pairwise-cost geometry for OT weighting. Nevertheless, OTROPE still achieves the lowest MAE across all four aggregation--target-policy combinations, indicating that the method remains effective under more aggressive PCA truncation.

\subsection{Sensitivity to the Transport Cost}
\label{app:cost_sensitivity}

For the transport cost, we use the $\ell_2$ distance in all main experiments. To assess sensitivity to this choice, we additionally evaluate OTROPE using the $\ell_1$ distance on \textsc{Chatbot Arena} with GPT-3.5-Turbo as the target policy, while keeping all other hyperparameters and experimental configurations unchanged. Table~\ref{tab:l1_cost_sensitivity} reports the results.

\begin{table}[ht]
\centering
\caption{Mean absolute error on \textsc{Chatbot Arena} under GPT-3.5-Turbo using the $\ell_1$ transport cost. Standard deviations are reported in parentheses.}
\label{tab:l1_cost_sensitivity}
\setlength{\tabcolsep}{6pt}
\begin{tabular}{lcc}
\toprule
Method & Mixture-of-Experts & Majority Voting \\
\midrule
IS     & 0.272 (0.001) & 0.272 (0.001) \\
DR     & 0.069 (0.008) & 0.094 (0.002) \\
PPI    & 0.079 (0.008) & 0.108 (0.002) \\
PPI++  & 0.197 (0.004) & 0.234 (0.001) \\
\redcell OTROPE & \textbf{0.025 (0.011)} & \textbf{0.034 (0.016)} \\
\bottomrule
\end{tabular}
\end{table}

OTROPE achieves the lowest MAE under both aggregation schemes, indicating that its empirical performance is robust to the choice of transport cost. Its performance under the $\ell_1$ cost is also close to that obtained with the default $\ell_2$ cost, suggesting that the proposed correction is not sensitive to this specific metric choice.

\clearpage

\newpage
\section*{NeurIPS Paper Checklist}

\begin{enumerate}

\item {\bf Claims}
    \item[] Question: Do the main claims made in the abstract and introduction accurately reflect the paper's contributions and scope?
    \item[] Answer: \answerYes{} 
    \item[] Justification: We describe our method in the abstract and conclude the contributions in the introduction.
    \item[] Guidelines:
    \begin{itemize}
        \item The answer \answerNA{} means that the abstract and introduction do not include the claims made in the paper.
        \item The abstract and/or introduction should clearly state the claims made, including the contributions made in the paper and important assumptions and limitations. A \answerNo{} or \answerNA{} answer to this question will not be perceived well by the reviewers. 
        \item The claims made should match theoretical and experimental results, and reflect how much the results can be expected to generalize to other settings. 
        \item It is fine to include aspirational goals as motivation as long as it is clear that these goals are not attained by the paper. 
    \end{itemize}

\item {\bf Limitations}
    \item[] Question: Does the paper discuss the limitations of the work performed by the authors?
    \item[] Answer: \answerYes{} 
    \item[] Justification: We discuss the limitations at the end of this paper.
    \item[] Guidelines:
    \begin{itemize}
        \item The answer \answerNA{} means that the paper has no limitation while the answer \answerNo{} means that the paper has limitations, but those are not discussed in the paper. 
        \item The authors are encouraged to create a separate ``Limitations'' section in their paper.
        \item The paper should point out any strong assumptions and how robust the results are to violations of these assumptions (e.g., independence assumptions, noiseless settings, model well-specification, asymptotic approximations only holding locally). The authors should reflect on how these assumptions might be violated in practice and what the implications would be.
        \item The authors should reflect on the scope of the claims made, e.g., if the approach was only tested on a few datasets or with a few runs. In general, empirical results often depend on implicit assumptions, which should be articulated.
        \item The authors should reflect on the factors that influence the performance of the approach. For example, a facial recognition algorithm may perform poorly when image resolution is low or images are taken in low lighting. Or a speech-to-text system might not be used reliably to provide closed captions for online lectures because it fails to handle technical jargon.
        \item The authors should discuss the computational efficiency of the proposed algorithms and how they scale with dataset size.
        \item If applicable, the authors should discuss possible limitations of their approach to address problems of privacy and fairness.
        \item While the authors might fear that complete honesty about limitations might be used by reviewers as grounds for rejection, a worse outcome might be that reviewers discover limitations that aren't acknowledged in the paper. The authors should use their best judgment and recognize that individual actions in favor of transparency play an important role in developing norms that preserve the integrity of the community. Reviewers will be specifically instructed to not penalize honesty concerning limitations.
    \end{itemize}

\item {\bf Theory assumptions and proofs}
    \item[] Question: For each theoretical result, does the paper provide the full set of assumptions and a complete (and correct) proof?
    \item[] Answer: \answerYes{} 
    \item[] Justification: We state the assumptions for the theoretical results and provide complete proofs in Appendix~\ref{app:theoretical_analysis}.
    \item[] Guidelines:
    \begin{itemize}
        \item The answer \answerNA{} means that the paper does not include theoretical results. 
        \item All the theorems, formulas, and proofs in the paper should be numbered and cross-referenced.
        \item All assumptions should be clearly stated or referenced in the statement of any theorems.
        \item The proofs can either appear in the main paper or the supplemental material, but if they appear in the supplemental material, the authors are encouraged to provide a short proof sketch to provide intuition. 
        \item Inversely, any informal proof provided in the core of the paper should be complemented by formal proofs provided in appendix or supplemental material.
        \item Theorems and Lemmas that the proof relies upon should be properly referenced. 
    \end{itemize}

    \item {\bf Experimental result reproducibility}
    \item[] Question: Does the paper fully disclose all the information needed to reproduce the main experimental results of the paper to the extent that it affects the main claims and/or conclusions of the paper (regardless of whether the code and data are provided or not)?
    \item[] Answer: \answerYes{} 
    \item[] Justification: In experiments section and Appendix~\ref{app:exp}, we disclose the information of our experiments including both controlled experiments and real data applications on LLM evaluations.
    \item[] Guidelines:
    \begin{itemize}
        \item The answer \answerNA{} means that the paper does not include experiments.
        \item If the paper includes experiments, a \answerNo{} answer to this question will not be perceived well by the reviewers: Making the paper reproducible is important, regardless of whether the code and data are provided or not.
        \item If the contribution is a dataset and\slash or model, the authors should describe the steps taken to make their results reproducible or verifiable. 
        \item Depending on the contribution, reproducibility can be accomplished in various ways. For example, if the contribution is a novel architecture, describing the architecture fully might suffice, or if the contribution is a specific model and empirical evaluation, it may be necessary to either make it possible for others to replicate the model with the same dataset, or provide access to the model. In general. releasing code and data is often one good way to accomplish this, but reproducibility can also be provided via detailed instructions for how to replicate the results, access to a hosted model (e.g., in the case of a large language model), releasing of a model checkpoint, or other means that are appropriate to the research performed.
        \item While NeurIPS does not require releasing code, the conference does require all submissions to provide some reasonable avenue for reproducibility, which may depend on the nature of the contribution. For example
        \begin{enumerate}
            \item If the contribution is primarily a new algorithm, the paper should make it clear how to reproduce that algorithm.
            \item If the contribution is primarily a new model architecture, the paper should describe the architecture clearly and fully.
            \item If the contribution is a new model (e.g., a large language model), then there should either be a way to access this model for reproducing the results or a way to reproduce the model (e.g., with an open-source dataset or instructions for how to construct the dataset).
            \item We recognize that reproducibility may be tricky in some cases, in which case authors are welcome to describe the particular way they provide for reproducibility. In the case of closed-source models, it may be that access to the model is limited in some way (e.g., to registered users), but it should be possible for other researchers to have some path to reproducing or verifying the results.
        \end{enumerate}
    \end{itemize}

\item {\bf Open access to data and code}
    \item[] Question: Does the paper provide open access to the data and code, with sufficient instructions to faithfully reproduce the main experimental results, as described in supplemental material?
    \item[] Answer: \answerYes{} 
    \item[] Justification: We will release the code to reproduce our main experimental results under open source upon acceptance. During the review process the code is shared in supplementary material.
    \item[] Guidelines:
    \begin{itemize}
        \item The answer \answerNA{} means that paper does not include experiments requiring code.
        \item Please see the NeurIPS code and data submission guidelines (\url{https://neurips.cc/public/guides/CodeSubmissionPolicy}) for more details.
        \item While we encourage the release of code and data, we understand that this might not be possible, so \answerNo{} is an acceptable answer. Papers cannot be rejected simply for not including code, unless this is central to the contribution (e.g., for a new open-source benchmark).
        \item The instructions should contain the exact command and environment needed to run to reproduce the results. See the NeurIPS code and data submission guidelines (\url{https://neurips.cc/public/guides/CodeSubmissionPolicy}) for more details.
        \item The authors should provide instructions on data access and preparation, including how to access the raw data, preprocessed data, intermediate data, and generated data, etc.
        \item The authors should provide scripts to reproduce all experimental results for the new proposed method and baselines. If only a subset of experiments are reproducible, they should state which ones are omitted from the script and why.
        \item At submission time, to preserve anonymity, the authors should release anonymized versions (if applicable).
        \item Providing as much information as possible in supplemental material (appended to the paper) is recommended, but including URLs to data and code is permitted.
    \end{itemize}

\item {\bf Experimental setting/details}
    \item[] Question: Does the paper specify all the training and test details (e.g., data splits, hyperparameters, how they were chosen, type of optimizer) necessary to understand the results?
    \item[] Answer: \answerYes{} 
    \item[] Justification: All the experimental details are provided in Appendix~\ref{app:exp}.
    \item[] Guidelines:
    \begin{itemize}
        \item The answer \answerNA{} means that the paper does not include experiments.
        \item The experimental setting should be presented in the core of the paper to a level of detail that is necessary to appreciate the results and make sense of them.
        \item The full details can be provided either with the code, in appendix, or as supplemental material.
    \end{itemize}

\item {\bf Experiment statistical significance}
    \item[] Question: Does the paper report error bars suitably and correctly defined or other appropriate information about the statistical significance of the experiments?
    \item[] Answer: \answerYes{} 
    \item[] Justification: We provide mean absolute error with the standard deviation under multiple replications.
    \item[] Guidelines:
    \begin{itemize}
        \item The answer \answerNA{} means that the paper does not include experiments.
        \item The authors should answer \answerYes{} if the results are accompanied by error bars, confidence intervals, or statistical significance tests, at least for the experiments that support the main claims of the paper.
        \item The factors of variability that the error bars are capturing should be clearly stated (for example, train/test split, initialization, random drawing of some parameter, or overall run with given experimental conditions).
        \item The method for calculating the error bars should be explained (closed form formula, call to a library function, bootstrap, etc.)
        \item The assumptions made should be given (e.g., Normally distributed errors).
        \item It should be clear whether the error bar is the standard deviation or the standard error of the mean.
        \item It is OK to report 1-sigma error bars, but one should state it. The authors should preferably report a 2-sigma error bar than state that they have a 96\% CI, if the hypothesis of Normality of errors is not verified.
        \item For asymmetric distributions, the authors should be careful not to show in tables or figures symmetric error bars that would yield results that are out of range (e.g., negative error rates).
        \item If error bars are reported in tables or plots, the authors should explain in the text how they were calculated and reference the corresponding figures or tables in the text.
    \end{itemize}

\item {\bf Experiments compute resources}
    \item[] Question: For each experiment, does the paper provide sufficient information on the computer resources (type of compute workers, memory, time of execution) needed to reproduce the experiments?
    \item[] Answer: \answerYes{} 
    \item[] Justification: Yes. We report the computational resources used for the experiments, including the type of GPU and its memory capacity in Appendix~\ref{app:exp}.
    \item[] Guidelines:
    \begin{itemize}
        \item The answer \answerNA{} means that the paper does not include experiments.
        \item The paper should indicate the type of compute workers CPU or GPU, internal cluster, or cloud provider, including relevant memory and storage.
        \item The paper should provide the amount of compute required for each of the individual experimental runs as well as estimate the total compute. 
        \item The paper should disclose whether the full research project required more compute than the experiments reported in the paper (e.g., preliminary or failed experiments that didn't make it into the paper). 
    \end{itemize}
    
\item {\bf Code of ethics}
    \item[] Question: Does the research conducted in the paper conform, in every respect, with the NeurIPS Code of Ethics \url{https://neurips.cc/public/EthicsGuidelines}?
    \item[] Answer: \answerYes{} 
    \item[] Justification: The research complies with the NeurIPS Code of Ethics. It uses only publicly available datasets with proper citations, does not involve sensitive personal data, and releases code responsibly. 
    \item[] Guidelines: 
    \begin{itemize}
        \item The answer \answerNA{} means that the authors have not reviewed the NeurIPS Code of Ethics.
        \item If the authors answer \answerNo, they should explain the special circumstances that require a deviation from the Code of Ethics.
        \item The authors should make sure to preserve anonymity (e.g., if there is a special consideration due to laws or regulations in their jurisdiction).
    \end{itemize}

\item {\bf Broader impacts}
    \item[] Question: Does the paper discuss both potential positive societal impacts and negative societal impacts of the work performed?
    \item[] Answer: \answerNA{} 
    \item[] Justification: The work is primarily methodological, focusing on the evaluation of LLMs. We do not identify direct positive or negative societal impacts.
    \item[] Guidelines:
    \begin{itemize}
        \item The answer \answerNA{} means that there is no societal impact of the work performed.
        \item If the authors answer \answerNA{} or \answerNo, they should explain why their work has no societal impact or why the paper does not address societal impact.
        \item Examples of negative societal impacts include potential malicious or unintended uses (e.g., disinformation, generating fake profiles, surveillance), fairness considerations (e.g., deployment of technologies that could make decisions that unfairly impact specific groups), privacy considerations, and security considerations.
        \item The conference expects that many papers will be foundational research and not tied to particular applications, let alone deployments. However, if there is a direct path to any negative applications, the authors should point it out. For example, it is legitimate to point out that an improvement in the quality of generative models could be used to generate Deepfakes for disinformation. On the other hand, it is not needed to point out that a generic algorithm for optimizing neural networks could enable people to train models that generate Deepfakes faster.
        \item The authors should consider possible harms that could arise when the technology is being used as intended and functioning correctly, harms that could arise when the technology is being used as intended but gives incorrect results, and harms following from (intentional or unintentional) misuse of the technology.
        \item If there are negative societal impacts, the authors could also discuss possible mitigation strategies (e.g., gated release of models, providing defenses in addition to attacks, mechanisms for monitoring misuse, mechanisms to monitor how a system learns from feedback over time, improving the efficiency and accessibility of ML).
    \end{itemize}
    
\item {\bf Safeguards}
    \item[] Question: Does the paper describe safeguards that have been put in place for responsible release of data or models that have a high risk for misuse (e.g., pre-trained language models, image generators, or scraped datasets)?
    \item[] Answer: \answerNA{} 
    \item[] Justification: This work does not release new datasets or models with high risk for misuse. It uses only publicly available datasets and focuses on LLM evaluation. Therefore, safeguards for high-risk releases are not applicable.
    \item[] Guidelines:
    \begin{itemize}
        \item The answer \answerNA{} means that the paper poses no such risks.
        \item Released models that have a high risk for misuse or dual-use should be released with necessary safeguards to allow for controlled use of the model, for example by requiring that users adhere to usage guidelines or restrictions to access the model or implementing safety filters. 
        \item Datasets that have been scraped from the Internet could pose safety risks. The authors should describe how they avoided releasing unsafe images.
        \item We recognize that providing effective safeguards is challenging, and many papers do not require this, but we encourage authors to take this into account and make a best faith effort.
    \end{itemize}

\item {\bf Licenses for existing assets}
    \item[] Question: Are the creators or original owners of assets (e.g., code, data, models), used in the paper, properly credited and are the license and terms of use explicitly mentioned and properly respected?
    \item[] Answer: \answerYes{} 
    \item[] Justification: All datasets and models used in this work are publicly available and properly credited with appropriate citations. We adhere to the respective licenses and terms of use of these resources and ensure that they are used in accordance with their intended purposes.
    \item[] Guidelines:
    \begin{itemize}
        \item The answer \answerNA{} means that the paper does not use existing assets.
        \item The authors should cite the original paper that produced the code package or dataset.
        \item The authors should state which version of the asset is used and, if possible, include a URL.
        \item The name of the license (e.g., CC-BY 4.0) should be included for each asset.
        \item For scraped data from a particular source (e.g., website), the copyright and terms of service of that source should be provided.
        \item If assets are released, the license, copyright information, and terms of use in the package should be provided. For popular datasets, \url{paperswithcode.com/datasets} has curated licenses for some datasets. Their licensing guide can help determine the license of a dataset.
        \item For existing datasets that are re-packaged, both the original license and the license of the derived asset (if it has changed) should be provided.
        \item If this information is not available online, the authors are encouraged to reach out to the asset's creators.
    \end{itemize}

\item {\bf New assets}
    \item[] Question: Are new assets introduced in the paper well documented and is the documentation provided alongside the assets?
    \item[] Answer: \answerNA{} 
    \item[] Justification: The work does not release any new datasets, and the proposed policy evaluation framework is not provided as a standalone asset.
    \item[] Guidelines:
    \begin{itemize}
        \item The answer \answerNA{} means that the paper does not release new assets.
        \item Researchers should communicate the details of the dataset\slash code\slash model as part of their submissions via structured templates. This includes details about training, license, limitations, etc. 
        \item The paper should discuss whether and how consent was obtained from people whose asset is used.
        \item At submission time, remember to anonymize your assets (if applicable). You can either create an anonymized URL or include an anonymized zip file.
    \end{itemize}

\item {\bf Crowdsourcing and research with human subjects}
    \item[] Question: For crowdsourcing experiments and research with human subjects, does the paper include the full text of instructions given to participants and screenshots, if applicable, as well as details about compensation (if any)? 
    \item[] Answer: \answerNA{} 
    \item[] Justification: This work does not involve any new crowdsourcing experiments or direct interaction with human subjects. All human preference data are obtained from existing publicly available datasets, and no new data collection, participant instructions, or compensation are involved.
    \item[] Guidelines:
    \begin{itemize}
        \item The answer \answerNA{} means that the paper does not involve crowdsourcing nor research with human subjects.
        \item Including this information in the supplemental material is fine, but if the main contribution of the paper involves human subjects, then as much detail as possible should be included in the main paper. 
        \item According to the NeurIPS Code of Ethics, workers involved in data collection, curation, or other labor should be paid at least the minimum wage in the country of the data collector. 
    \end{itemize}

\item {\bf Institutional review board (IRB) approvals or equivalent for research with human subjects}
    \item[] Question: Does the paper describe potential risks incurred by study participants, whether such risks were disclosed to the subjects, and whether Institutional Review Board (IRB) approvals (or an equivalent approval/review based on the requirements of your country or institution) were obtained?
    \item[] Answer: \answerNA{} 
    \item[] Justification: This work does not involve new studies with human participants. It relies solely on publicly available datasets with pre-collected human annotations, and no direct interaction with human subjects is conducted. Therefore, considerations such as participant risk, disclosure, or IRB approval are not applicable.
    \item[] Guidelines:
    \begin{itemize}
        \item The answer \answerNA{} means that the paper does not involve crowdsourcing nor research with human subjects.
        \item Depending on the country in which research is conducted, IRB approval (or equivalent) may be required for any human subjects research. If you obtained IRB approval, you should clearly state this in the paper. 
        \item We recognize that the procedures for this may vary significantly between institutions and locations, and we expect authors to adhere to the NeurIPS Code of Ethics and the guidelines for their institution. 
        \item For initial submissions, do not include any information that would break anonymity (if applicable), such as the institution conducting the review.
    \end{itemize}

\item {\bf Declaration of LLM usage}
    \item[] Question: Does the paper describe the usage of LLMs if it is an important, original, or non-standard component of the core methods in this research? Note that if the LLM is used only for writing, editing, or formatting purposes and does \emph{not} impact the core methodology, scientific rigor, or originality of the research, declaration is not required.
    \item[] Answer: \answerNA{} 
    \item[] Justification: LLMs were used only for writing, editing, or formatting purposes and do not affect the core methodology, scientific rigor, or originality of the research.
    \item[] Guidelines:
    \begin{itemize}
        \item The answer \answerNA{} means that the core method development in this research does not involve LLMs as any important, original, or non-standard components.
        \item Please refer to our LLM policy in the NeurIPS handbook for what should or should not be described.
    \end{itemize}

\end{enumerate}

\end{document}